\newcolumntype{L}{>{\raggedright\arraybackslash}p{5cm}} 
\newcolumntype{F}{>{\centering\arraybackslash}p{5cm}}   
\newcolumntype{C}[1]{>{\centering\arraybackslash}p{#1}}
\DeclareRobustCommand{\parhead}[1]{\noindent \textbf{#1} }
\newcommand{\NA}{\multicolumn{1}{c}{\text{--}}}
\newtheorem{theorem}{Theorem}
\newtheorem{lemma}{Lemma}
\newcommand\blfootnote[1]{%
  \begingroup
  \renewcommand\thefootnote{}\footnote{#1}%
  \addtocounter{footnote}{-1}%
  \endgroup
}
\begin{document}

%

%

\twocolumn[

\aistatstitle{Posterior Mean Matching: Generative Modeling through Online Bayesian Inference}

\aistatsauthor{ Sebastian Salazar$^{1,3}$ \And Michal Kucer$^{1}$ \And Yixin Wang$^{2}$
\And Emily Casleton$^{1}$ \And David Blei$^{3}$}

 \aistatsaddress{ $^{1}$Los Alamos National Laboratory \And  $^{2}$University of Michigan \And $^{3}$Columbia University } ]

\begin{abstract}
This\blfootnote{For further correspondence, please email \{ssalazar, michal\}@lanl.gov.} paper introduces posterior mean matching (PMM), a new method for generative modeling that is grounded in Bayesian inference. PMM uses conjugate pairs of distributions to model complex data of various modalities like images and text, offering a flexible alternative to existing methods like diffusion models. PMM models iteratively refine noisy approximations of the target distribution using updates from online Bayesian inference. PMM is flexible because its mechanics are based on general Bayesian models. We demonstrate this flexibility by developing specialized examples: a generative PMM model of real-valued data using the Normal-Normal model, a generative PMM model of count data using a Gamma-Poisson model, and a generative PMM model of discrete data using a Dirichlet-Categorical model. For the Normal-Normal PMM model, we establish a direct connection to diffusion models by showing that its continuous-time formulation converges to a stochastic differential equation (SDE). Additionally, for the Gamma-Poisson PMM, we derive a novel SDE driven by a Cox process, which is a significant departure from traditional Brownian motion-based generative models.  PMMs achieve performance that is competitive with generative models for language modeling and image generation.
\end{abstract}

\section{Introduction}
The goal of generative modeling is to use data $\{\pmb{x}_i\}_{i=1}^{n}$ to produce new samples from a target distribution $p^\star(\pmb{x})$. The challenge is that $\pmb{x}$
is high dimensional and $p^\star(\pmb{x})$ is complex \citep{mackay2003information}.

Here are some examples:
\begin{itemize}[leftmargin=*,itemsep=0pt]
    \item The data are natural images; the target is the distribution of images found in the world; the goal is to produce realistic images \citep{ho2020denoising, goodfellow2014generative}.

    \item  The data are documents; the target is the distribution of fluent language; the goal is to produce coherent text \citep{vaswani2017attention}.

    \item The data are gene sequences of proteins; the target is the distribution of stable proteins; the goal is to produce new proteins with specific properties \citep{protein}.

    \item Probabilistic prediction in tabular data, where the goal is to model the conditional distribution of a response variable given a collection of features \citep{beltranvelez2024treeffuserprobabilisticpredictionsconditional, salazar2024vart}.
\end{itemize}

In this paper, we develop \textit{posterior mean matching} (PMM), a new method of generative modeling that is flexible enough to solve all of these problems. The key property of PMM is that it is based on the machinery of online Bayesian inference. It inherits the flexibility of Bayesian modeling, and so it is easy to apply to many types of data and target distributions.

To develop PMM, we first posit a conjugate Bayesian model and show how, in theory, it can be used to sample exactly from the target $p^*(\pmb{x})$. We then show how to use variational inference variational inference to approximate a distribution that produces such exact samples. PMM is flexible because it can employ any conjugate Bayesian model in an inner routine.

We study PMM on images and text. For image generation, we develop a PMM method based on an underlying Gaussian/Gaussian model. We find that it produces Frechet inception distance (FID) scores \citep{fidheusel2017gans} that are comparable to most diffusion models \citep{Karras2022edm, dhariwal2021diffusionmodelsbeatgans, song2020score}. To apply PMM to text, we simply swap the Gaussian model for a Dirichlet/Categorical. We find that the text-generating PMM models offer competitive performance to diffusion non-autoregressive language models \citep{sedd, mdlm, shi2024simplified}. 

\parhead{Related Work.} Generative modeling is an active area of machine learning research. For images, some popular methods include variational autoencoders  \citep{kingma2013auto, rezende2014stochastic}, generative adversarial networks \citep{goodfellow2014generative}, normalizing flows \citep{dinh2014nice, rezende2015variational}, autoregressive models \citep{van2016conditional}, and diffusion models \citep{ho2020denoising}. For text, the main method is the transformer-based autoregressive models \citep{vaswani2017attention}. PMM is a contribution to this research area, providing an easily adaptable method for generative modeling, applicable to text, images, and many other types of data. 
While on images PMM compares favorably to diffusions, on text, its performance is competitive with other non-autoregressive diffusion-based language models \citep{d3pm, sedd, mdlm, shi2024simplified}.
PMMs are also related to diffusion models, we establish this technical connection in Section \ref{sec:sde}.

Closest in spirit to PMMs is Bayesian flow nets \citep{graves2024bayesianflownetworks} (BFNs), which also use Bayesian methods in the context of generative modeling. PMM is based on exact sampling from the target, while BFNs are motivated by information theoretic principles. PMM provides a simpler algorithm than BFN, and performed better in our studies of text data in Section \ref{sec:exper}.

\parhead{Contribution.} Posterior mean matching (PMM) contributes to the field of generative modeling by offering a unified and adaptable method grounded in  Bayesian inference. PMM easily applies to diverse data types such as images, text, and count data. 

\section{Posterior Mean Matching} 
There are several ingredients to posterior mean matching. Throughout, we assume that we are given a dataset  \( \{ \pmb{x}_1, \ldots, \pmb{x}_n \} \) of i.i.d. samples from the target distribution $p^{*}(\pmb{x})$. 

\parhead{Noisy observation model.} The first ingredient is the \textit{noisy observation model}. It is a conditional distribution $\pi_{\alpha}(\pmb{y} | \pmb{x})$ that is easy to sample from (e.g., a Gaussian, Poisson, Categorical). Samples from this conditional are interpreted as noisy versions of $\pmb{x}$. 

\parhead{Augmented Target Distribution.} We augment the target distribution \( p^{*}(\pmb{x}) \) with the noisy observation model \( \pi_{\alpha_{s}}(\pmb{y} \mid \pmb{x}) \) and define a joint distribution over \( (\pmb{x}, \pmb{y}_{1:t}) \), termed the \textit{augmented target distribution}:
\begin{align}
    \pmb{x} &\sim p^*(\pmb{x}), \label{eqn:target_prior} \\ 
    y_{s} \mid \pmb{x} &\overset{\perp}{\sim} \pi_{\alpha_{s}}(\pmb{y} \mid \pmb{x}), \quad s = 1, \ldots, t, \\ 
    p(\pmb{x}, \pmb{y}_{1:t}) &\equiv p^{*}(\pmb{x}) \prod_{s=1}^t \pi_{\alpha_{s}}(\pmb{y}_s \mid \pmb{x}). \label{eqn:joint_truth}
\end{align} 

Where we have introduced a sequence of hyperparameters $\alpha_{1}, ..., \alpha_{t}$, where $\alpha_{s}$ can be interpreted as a parameter modulating the amount of noise in the sample $\pmb{y}_{s}$ (e.g., the precision parameter of a Normal distribution). The augmented model simply augments draws \( \pmb{x}^{*} \) from the target \( p^*(\pmb{x}) \) with a collection of noisy observations \( \pmb{y}_{1}, \ldots, \pmb{y}_{t} \). Note the ``prior'' distribution (\ref{eqn:target_prior}) of this generative process is the target distribution \( p^*(\pmb{x}) \), which is not directly available. 

\parhead{Augmented Bayesian Model.} The next ingredient is the \textit{augmented Bayesian model}. This model is identical to the \textit{augmented target distribution} except that the unknown target \( p^{*}(\pmb{x}) \) is replaced with a known distribution \( \pi(\pmb{x}) \), that serves as a known ``prior.'' The augmented Bayesian model is
\begin{align}
    \pmb{x} &\sim \pi(\pmb{x}), \label{eqn:prior} \\ 
    \pmb{y}_s \mid \pmb{x} &\overset{\perp}{\sim} \pi_{\alpha_s}(\pmb{y} \mid \pmb{x}), \quad s = 1, \ldots, t \\ 
    \pi(\pmb{x}, \pmb{y}_{1:t}) &\equiv \pi(\pmb{x}) \prod_{s=1}^t \pi_{\alpha_s}(\pmb{y}_s \mid \pmb{x}). \label{eqn:bayesian_model}
\end{align} 

We require the augmented Bayesian model to satisfy the following three properties. 

First, the posterior expectation $\mathbb{E}_{\pi}(\pmb{x} | \pmb{y}_{1:t})$ must have a known closed form. This is facilitated by picking a prior $\pi(\pmb{x})$ that is conjugate to the noise model $\pi_{\alpha_{s}}(\pmb{y} | \pmb{x})$, e.g., a normal prior with a normal noisy observation model. 

Second, the posterior mean must be \textit{consistent}. Given a collection of noisy samples $\pmb{y}_{s} \sim \pi_{\alpha_{s}}(\pmb{y} | \pmb{x}^{*})$ for $s = 1, ..., t$, we say that the posterior mean is consistent if it eventually recovers the \textit{true} $\pmb{x}^{*}$ that was used to generate the noisy samples $\pmb{y}_{1}, ..., \pmb{y}_{t}$, namely 
\begin{align}
    \pmb{\mu}_{t} \equiv \mathbb{E}_{\pi}(\pmb{x} | \pmb{y}_{1:t}) \overset{a.s.}{\to} \pmb{x}^{*}.
\end{align}
All of the PMM models considered in this paper are consistent (see Appendix for consistency proofs).

\begin{figure}[t]
    \centering
    \begin{tikzpicture}

        \node[obs] (y1) {$\pmb{y}_1$};
        \node[obs, right=of y1] (y2) {$\pmb{y}_2$};
        \node[obs, right=of y2] (y3) {$\pmb{y}_3$};
        \node[obs, right=of y3] (dots) {$\cdots$};  
        \node[det, below=of y1] (mu1) {$\pmb{\mu}_1$};
        \node[det, below=of y2] (mu2) {$\pmb{\mu}_2$};
        \node[det, below=of y3] (mu3) {$\pmb{\mu}_3$};
        \node[det, below=of dots] (mu_dots) {$\cdots$};  
        \node[det, left=of mu1] (mu0) {$\pmb{\mu}_0$};

        \edge {mu0} {mu1}; 
        \edge {mu1} {mu2};
        \edge {mu2} {mu3};
        \edge {mu3} {mu_dots};  
        \edge {y1} {mu1};
        \edge {y2} {mu2};
        \edge {y3} {mu3};
        \edge {dots} {mu_dots};  

    \end{tikzpicture}
    \caption{Diagram of the online Bayesian inference update process. At each time step \( t \), an observation \( \pmb{y}_t \) is incorporated to update the posterior mean \( \pmb{\mu}_t \). The ellipsis (\(\cdots\)) indicates the iterative nature of the updates, starting from the prior mean \( \pmb{\mu}_{0} \).}
    \label{fig:bayesian_update}
\end{figure}
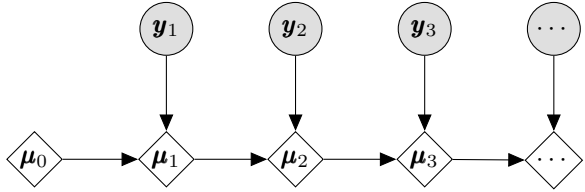

Finally, the augmented model must be amenable to online Bayesian inference. This means that it is possible to write an update rule for the posterior mean 
\begin{align}
    \pmb{\mu}_{t+1} &= f_{t}(\pmb{\mu}_{t}, \pmb{y}_{t+1}). \label{eqn:online_bayes}
\end{align}
Figure \ref{fig:bayesian_update} diagrams online Bayesian inference.

\begin{figure}[t]
    \centering
    \includegraphics[width=0.98\columnwidth]{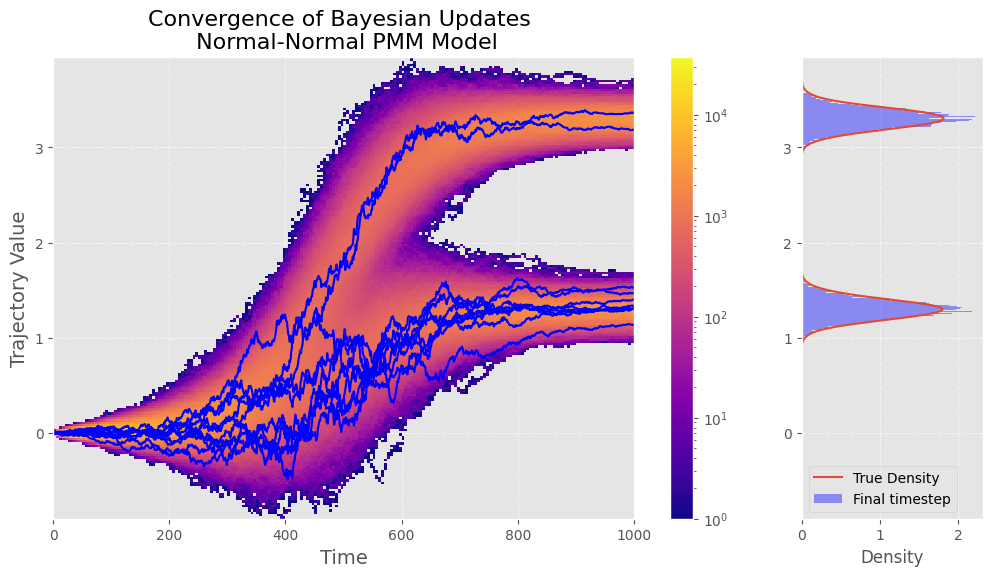}
    \caption{Convergence of the posterior mean trajectories \( \pmb{\mu}_t \) to samples from the target \( \pmb{x} \sim p^*(\pmb{x}) \) as \( t \) increases for the Normal Posterior Mean Matching (PMM) model. Refer to Figure \ref{fig:pmean_convergence_app} in the Appendix \ref{app:figures} for a more detailed view.}
    \label{fig:normal_pmm}
\end{figure}

\parhead{Generative Modeling with online Bayesian Inference.} With these ingredients---the augmented target and the augmented model---we show how to use augmented data from (\ref{eqn:joint_truth}) and online Bayesian inference from the augmented Bayesian model (\ref{eqn:bayesian_model}) to produce a neural-network-based sampler from the target distribution $p^{*}(\pmb{x})$.

We start by considering data from the augmented target distribution $\{ \pmb{x}^{*}, \pmb{y}_{1}, ..., \pmb{y}_{t}\}$, which we generate by taking a sample $\pmb{x}^{*}$ from the target distribution $p^{*}(\pmb{x})$ and then producing a sequence of $t$ noisy observations $\pmb{y}_{1:t}$ using the noise model $\pi_{\alpha_{t}}(\pmb{y} | \pmb{x})$. In practice, we approximate the target distribution $p^{*}(\pmb{x})$ by taking a random sample $\pmb{x}_{i}$ from our dataset $\{\pmb{x}_{1}, ..., \pmb{x}_{n}\} $ of i.i.d. samples from $p^{*}(\pmb{x})$. 

Using the augmented sample $\{ \pmb{x}^{*}, \pmb{y}_{1}, ..., \pmb{y}_{t}\}$, we consider the sequence of posterior means $\pmb{\mu}_{1}, ..., \pmb{\mu}_{t}$ with respect to the augmented Bayesian model; the sequence of posterior means has the following properties:
\begin{itemize}[leftmargin=*,itemsep=0pt]
    \item $\pmb{\mu}_{1}, ..., \pmb{\mu}_{t}$ is a sequence of random variables. Their randomness is inherited from the noisy observation model $\pi_{\alpha_{t}}(\pmb{y} | \pmb{x})$. 
    \item For all $s \in \{0, ..., t-1\} $, it is easy to calculate $\pmb{\mu}_{s+1}$ from $\pmb{\mu}_{s}$ and $\pmb{y}_{s+1}$ using online Bayesian inference. This was one of the requirements of the augmented Bayesian model.
    \item The limit $\lim_{t \to \infty} \pmb{\mu}_{t}$ converges to $\pmb{x}^{*}$ --- a sample from the target distribution $p^{*}(\pmb{x})$. This is a consequence of the consistency of the posterior expectation, another requirement of the augmented Bayesian model.
\end{itemize}
These three properties suggest a strategy to draw samples from the target $p^{*}(\pmb{x})$. 
\begin{enumerate}[leftmargin=*,itemsep=0pt]
    \item Obtain a sample $\{\pmb{x}^{*}, \pmb{y}_{1}, ..., \pmb{y}_{t}\}$ and throw away $\pmb{x}^{*}$. This results in a sequence $\pmb{y}_{1}, ..., \pmb{y}_{t}$, that is viewed as a sample from the marginal distribution of the augmented target (\ref{eqn:joint_truth}). 
    \item Using the augmented Bayesian model, compute the sequence of posterior means $\pmb{\mu}_{1}, ..., \pmb{\mu}_{t}$ using online Bayesian Inference. It is worth highlighting that the expectation is defined using the augmented Bayesian model of equation (\ref{eqn:bayesian_model}), while the data  \( \pmb{y}_{1:t} \) used to compute this expectation, are random variables drawn from the marginal distribution of the augmented target (\ref{eqn:joint_truth}). 
    \item Because this sequence is consistent, for $t$ large enough $\pmb{\mu}_{t} \approx \pmb{x}^{*}$. In other words, the posterior mean $\pmb{\mu}_{t}$ is effectively a sample from the target distribution $p^{*}(\pmb{x})$. 
\end{enumerate}
This logic implies that sampling from the target distribution $p^{*}(\pmb{x})$ reduces to sampling $\pmb{\mu}_{t}$ from the joint distribution of posterior means $p(\pmb{\mu}_{1}, ..., \pmb{\mu}_{t})$. 

We illustrate sample trajectories $\pmb{\mu}_{1}, ..., \pmb{\mu}_{t}$ in Figure \ref{fig:normal_pmm}, where the target is a bimodal distribution.  We can see that $\pmb{\mu}_{t}$ converges to samples from the target $p^{*}(\pmb{x})$. 

\parhead{Approximately Sampling from the Target.} In practice, the joint distribution \(p(\pmb{\mu}_{1}, \ldots, \pmb{\mu}_{t})\) is intractable to sample from exactly. So, we sample from the target by approximating the joint distribution of posterior means, and taking samples of $\pmb{\mu}_t$. 

We approximate $p(\pmb{\mu}_{1}, \ldots, \pmb{\mu}_{t})$ by introducing a family of distributions \( q_{\pmb{\varphi}}(\pmb{\mu}_{1}, \ldots, \pmb{\mu}_{t}) \) and minimizing the following objective function:
\begin{align}
    \mathcal{L}_{\text{PMM}}(\pmb{\varphi}) &= \text{KL}(p(\pmb{\mu}_{1}, \ldots, \pmb{\mu}_{t}) \| q_{\pmb{\varphi}}(\pmb{\mu}_{1}, \ldots, \pmb{\mu}_{t})). \label{eqn:pmm_obj}
\end{align}

The form of \( q_{\pmb{\varphi}}(\pmb{\mu}_{1:t}) \) is motivated by mechanics of online Bayesian inference and is defined implicitly:
\begin{align}
    \pmb{\hat{y}}_{t+1} &\sim \pi_{\alpha_{t+1}}(\pmb{y} \mid g_{\pmb{\varphi}}(\pmb{\mu}_{t}, t))  \label{eqn:noisy_model_online_approx}\\ 
    \pmb{\mu}_{t+1} &= f_{t}(\pmb{\mu}_{t}, \pmb{\hat{y}}_{t+1}). \label{eqn:online_bayes_approx}
\end{align}
Here, \( g_{\pmb{\varphi}} \) is a flexible function parameterized with a neural network. Given data, we learn the neural network \( g_{\pmb{\varphi}} \) by minimizing the PMM objective in Equation (\ref{eqn:pmm_obj}). Once fit, we can obtain approximate samples from the target distribution \( p^{*}(\pmb{x}) \) by iteratively applying equations (\ref{eqn:noisy_model_online_approx}) and (\ref{eqn:online_bayes_approx}). This sampling procedure is detailed in Algorithm \ref{alg:pmm_inference}.
\begin{algorithm}[t]
\caption{Sampling \( p^{*}(\pmb{x}) \) from a fitted PMM}
\begin{algorithmic}[1]
    \State \textbf{Initialize:} Set \( \pmb{\mu}_{0} \) to the prior mean.
    \For{\( s = 1 \) \textbf{to} \( t \)}
        \State Compute \( \hat{\pmb{x}}_{s} \gets g_{\pmb{\varphi}}(\pmb{\mu}_{s-1}, s) \)
        \State Sample \( \pmb{\hat{y}}_{s} \sim \pi_{\alpha_{s}}(\pmb{y} \mid \hat{\pmb{x}}_{s}) \)
        \State Update \( \pmb{\mu}_{s} \) using the online Bayesian Inference update rule \( \pmb{\mu}_{s} = f_{s}(\pmb{\mu}_{s-1}, \pmb{\hat{y}}_{s}) \)
    \EndFor
    \State \textbf{Output:} Return \( \pmb{\mu}_{t} \)
\end{algorithmic}
\label{alg:pmm_inference}
\end{algorithm}

\section{Examples of PMM Models}
We now work out the components of posterior mean matching using three conjugate pairs of distributions: Normal-Normal, Gamma-Poisson, and Dirichlet-Categorical models. These models are suitable for real-valued, positive, and text data, respectively. 

\subsection{Normal-Normal PMM: a generative model of real-valued data}
\parhead{Data Representation.} This section concerns the Normal-Normal PMM, a generative model designed to model real-valued data. This boils down to assuming that samples from the target distribution $p^{*}(\pmb{x})$ are vectors in $\mathbb{R}^{d}$. 

\parhead{Augmented Target Distribution.} The Normal-Normal PMM posits a \emph{noisy observation model} that corrupts samples $\pmb{x}^{*}$ from the target distribution $p^{*}(\pmb{x})$ through additive Gaussian noise $\pmb{y}_{t} \sim \mathcal{N}(\pmb{x}^{*}, \alpha_{t}^{-1}I)$. This noisy observation model defines the following augmented target distribution
\begin{align}
    p(\pmb{x}, \pmb{y}_{1:t}) &\equiv p^{*}(\pmb{x}) \prod_{s} \mathcal{N}(\pmb{y}_{s} ; \pmb{x}, \alpha_{s}^{-1}I) \label{eqn:norm-norm-pmm} \
\end{align}
In this context, the precision parameter $\alpha_{t}$ modulates the level of corruption in the noisy observations. 

\parhead{Augmented Bayesian Model.} Suppose we are given a sample $\pmb{y}_{1:t}$ from the marginal distribution of the augmented target 
\begin{align}
    p(\pmb{y}_{1:t}) &= \int p^{*}(\pmb{x}) \prod_{s} \mathcal{N}(\pmb{y}_{s} ; \pmb{x}, \alpha_{s}^{-1}I) d \pmb{x}. \label{eqn:norm-norm-pmm-marginal}
\end{align} 
Based on equations (\ref{eqn:norm-norm-pmm}) and (\ref{eqn:norm-norm-pmm-marginal}) we know that there exists an $\pmb{x}^{*}$, that is a sample from $p^{*}(\pmb{x})$ such that $\pmb{y}_{s} \sim \mathcal{N}(\pmb{x}^{*}, \alpha_{s}^{-1} I)$. However, we only assume that $\pmb{y}_{1:t}$ is given to us---the mean parameter $\pmb{x}^{*}$ of these normal distributions is kept hidden. We infer $\pmb{x}^{*}$ by using a Normal-Normal augmented Bayesian model
\begin{align}
    \pmb{x} &\sim \mathcal{N}(\pmb{0}, \beta^{-1}I) \label{eqn:normal-prior} \\
    \pmb{y}_{s} | \pmb{x}, \alpha_{s}^{-1} &\sim \mathcal{N}( \pmb{x}, \alpha_{s}^{-1}I) \label{eqn:normal-likelihood}. 
\end{align}
\parhead{Online Bayesian Inference Update.} It is possible to calculate the posterior mean in the Normal-Normal model using the following update rule (see Appendix \ref{apx:normal-online})
\begin{align}
    \pmb{\mu}_{t} | \pmb{\mu}_{t-1} ,\pmb{y}_{t} &= \frac{\beta + \sum_{s=1}^{t-1} \alpha_{s}}{\beta + \sum_{s=1}^{t}\alpha_{s}}\pmb{\mu}_{t-1} + \frac{\alpha_{t} \pmb{y}_{t}}{\beta + \sum_{s=1}^{t}\alpha_{s}}  \label{eqn:normal-online-bayes}
\end{align}
where $\pmb{y}_{t} \sim \mathcal{N}(\pmb{x}, \alpha_{t}^{-1}I)$. The following theorem rigorously establishes the convergence of this posterior mean to $\pmb{x}^{*}$. 
\begin{theorem} \label{thm:norm-norm}
(Concentration of posterior mean) Let \( \{ \pmb{y}_1, \ldots, \pmb{y}_t \} \) be observations generated according to equation (\ref{eqn:norm-norm-pmm-marginal}).  Suppose \( \alpha_t \) a known, positive, increasing sequence satisfying \( \lim_{t \to \infty} \alpha_t = \infty \). Then, the posterior mean \( \pmb{\mu}_t \) of  the Bayesian model in equations (\ref{eqn:normal-prior}) and (\ref{eqn:normal-likelihood}) is consistent, namely:
\begin{align}
    \lim_{t \to \infty} \pmb{\mu}_t = \pmb{x}, \quad \text{almost surely},
\end{align}
with respect to the joint distribution of \( (\pmb{x}, \pmb{y}_1, \pmb{y}_2, \ldots) \) in equation (\ref{eqn:norm-norm-pmm}). 
\end{theorem}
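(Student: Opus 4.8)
The plan is to reduce the claim to a strong law of large numbers for weighted sums of independent Gaussians. First I would record the closed form of the posterior mean: by conjugacy, or equivalently by unrolling the update rule (\ref{eqn:normal-online-bayes}) from the prior mean $\pmb{\mu}_0 = \pmb{0}$, and writing $A_t = \sum_{s=1}^t \alpha_s$ with $A_0 = 0$, one gets
\[
\pmb{\mu}_t = \frac{1}{\beta + A_t} \sum_{s=1}^t \alpha_s \pmb{y}_s .
\]
Working conditionally on $\pmb{x}^\star = \pmb{x}$ (a draw from $p^*$), the augmented target (\ref{eqn:norm-norm-pmm}) gives $\pmb{y}_s = \pmb{x} + \pmb{\epsilon}_s$ with $\pmb{\epsilon}_s \sim \mathcal{N}(\pmb{0}, \alpha_s^{-1}I)$ independent across $s$, so
\[
\pmb{\mu}_t = \frac{A_t}{\beta + A_t}\, \pmb{x} \;+\; \frac{1}{\beta + A_t} \sum_{s=1}^t \alpha_s \pmb{\epsilon}_s .
\]

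Next I would handle the two pieces separately. Since $\alpha_t$ is positive (and increasing with $\alpha_t \to \infty$), $A_t \ge t\alpha_1 \to \infty$, hence $A_t/(\beta+A_t) \to 1$ and the first term converges to $\pmb{x}$. For the second term it suffices to show $A_t^{-1}\sum_{s=1}^t \alpha_s \pmb{\epsilon}_s \to \pmb{0}$ almost surely, because $A_t/(\beta+A_t)$ is bounded. Fixing a coordinate $j$, the summands $\alpha_s \epsilon_{s,j}$ are independent, mean zero, with $\mathrm{Var}(\alpha_s \epsilon_{s,j}) = \alpha_s$, and I would invoke Kolmogorov's SLLN (via Kronecker's lemma) with normalizing sequence $b_s = A_s$, which requires $\sum_{s\ge 1} \alpha_s / A_s^2 < \infty$. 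This follows from the telescoping estimate $\alpha_s/A_s^2 \le \alpha_s/(A_s A_{s-1}) = 1/A_{s-1} - 1/A_s$ for $s \ge 2$, so $\sum_{s\ge 2} \alpha_s/A_s^2 \le 1/A_1 = 1/\alpha_1 < \infty$. Therefore $A_t^{-1}\sum_{s\le t}\alpha_s \epsilon_{s,j} \to 0$ a.s. for each of the finitely many $j \in \{1,\dots,d\}$, and the vector average tends to $\pmb{0}$ a.s.

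Combining the two pieces, conditionally on $\pmb{x}^\star = \pmb{x}$ we obtain $\pmb{\mu}_t \to \pmb{x}$ almost surely. Since this holds for ($p^*$-almost) every value $\pmb{x}$, integrating the conditional probability-one event against $p^*$ yields $\pmb{\mu}_t \to \pmb{x}^\star$ almost surely under the full joint law of $(\pmb{x}, \pmb{y}_1, \pmb{y}_2,\ldots)$ in (\ref{eqn:norm-norm-pmm}), which is the assertion.

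I do not expect a genuine obstacle. The only step requiring a small idea rather than bookkeeping is verifying the Kolmogorov summability condition $\sum_s \alpha_s/A_s^2 < \infty$, and the telescoping bound above settles it using only $\alpha_s > 0$; monotonicity and $\alpha_t\to\infty$ are needed only to guarantee $A_t \to \infty$ (itself also implied by positivity alone). A more elementary alternative to the SLLN would be direct Borel--Cantelli: $A_t^{-1}\sum_{s\le t}\alpha_s\epsilon_{s,j} \sim \mathcal{N}(0, A_t^{-1})$, and Gaussian tail bounds along a sufficiently fast subsequence together with a maximal inequality give the same conclusion; I would keep the Kronecker route as it is shorter.
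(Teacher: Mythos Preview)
Your argument is correct and in fact cleaner than the paper's. The paper's own proof (in the appendix) proceeds by direct Chebyshev plus Borel--Cantelli: it computes $\mathrm{Var}(\mu_t) = O(A_t^{-1})$ and then needs $\sum_t A_t^{-1} < \infty$ to conclude, which is why the appendix version of the theorem quietly strengthens the hypothesis to $A_t = \Omega(t^{1+\eta})$ rather than the main-text assumption $\alpha_t \to \infty$. Your Kolmogorov/Kronecker route avoids this: the telescoping bound gives $\sum_s \alpha_s/A_s^2 < \infty$ from positivity alone, so you establish almost-sure convergence under the weaker hypothesis actually stated in the main body (indeed, as you note, monotonicity and $\alpha_t\to\infty$ are only used to force $A_t\to\infty$, which already follows from $A_t \ge t\alpha_1$). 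The Borel--Cantelli alternative you sketch at the end is essentially the paper's approach, and as you implicitly anticipate it costs an extra growth assumption unless one also runs a subsequence-plus-maximal-inequality argument; keeping the Kronecker route is the right call.
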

Theorem \ref{thm:norm-norm} establishes the correctness of the approximate sampling scheme shown in Algorithm \ref{alg:pmm_inference} for the Normal-Normal model, which implies that in the limit, the posterior mean of this Bayesian model is effectively a sample from the target $p^{*}(\pmb{x})$. A visual demonstration of Theorem \ref{thm:norm-norm} is shown in Figure \ref{fig:normal_pmm}.

Putting together all of these components we now show how to compute the PMM objective for this model. 

\parhead{Normal-Normal Posterior Mean Matching Objective.} Using the online Bayesian Inference update, we approximate the posterior mean updates of equation (\ref{eqn:normal-online-bayes}) using a Neural Network $g_{\pmb{\varphi}}$ as in equations (\ref{eqn:noisy_model_online_approx}) and (\ref{eqn:online_bayes_approx}) as follows 
\begin{align}
    \hat{\pmb{y}}_{s} &\sim \mathcal{N}( g_{\pmb{\varphi}}(\pmb{\mu}_{s-1}, s), \alpha_{s}^{-1}) \\
    \pmb{\mu}_{t} | \pmb{\mu}_{t-1} &= \frac{\beta + \sum_{s=1}^{t-1} \alpha_{s}}{\beta + \sum_{s=1}^{t}\alpha_{s}}\pmb{\mu}_{t-1} + \frac{\alpha_{t} \hat{\pmb{y}}_{t}}{\beta + \sum_{s=1}^{t}\alpha_{s}}  \label{eqn:approx-normal-online-bayes}
\end{align}
Substituting (\ref{eqn:normal-online-bayes}) and (\ref{eqn:approx-normal-online-bayes}) into the PMM objective we obtain (see appendix \ref{apx:normal-pmm-obj})
\begin{align}
    \mathcal{L}_{\text{PMM}}(\pmb{\varphi}) & \propto t \cdot \mathbb{E}_{\substack{ 
       s \sim U(\{1, \dots, t\}) \\ 
       \pmb{x} \sim p^{*}(\pmb{x}) \\ 
       \pmb{\mu}_{s-1} | \pmb{x} 
   }}
    \alpha_{s} \norm{\pmb{x} - g_{\pmb{\varphi}}(\pmb{\mu}_{s-1}, s)}_{2}^{2} \label{eqn:norm-norm-obj}
\end{align}
\subsection{Dirichlet-Categorical PMM: a generative model of text}
\parhead{Data Representation.} In this section, we develop a Dirichlet-Categorical PMM to model a collection of text documents. This boils down to assuming that samples from the target distribution $p^{*}(\pmb{X})$ come from a discrete, finite space. Specifically, we represent each document in a corpus as a sequence of tokens \( \mathbf{X}^{*} = (\mathbf{x}_1, \ldots, \mathbf{x}_C) \), where each token \( \mathbf{x}_c \in \{0,1\}^V \cap \Delta^{V-1} \) is one-hot encoded from a fixed vocabulary of size \( V \).

\parhead{Augmented Target Distribution.} For every document $\pmb{X} =  (\pmb{x}_1, ..., \pmb{x}_C)$, suppose we generate a sequence of noisy documents $\pmb{Y}_1, ..., \pmb{Y}_T$ according to the following generative process:\footnote{Where $\pmb{Y}_t = (\pmb{y}_{t1},..., \pmb{y}_{tC})$ with $\pmb{y}_{tc} \in \{0, 1\}^{V+1} \cap \Delta^{V} $} 
\begin{align}
    (\pmb{x}_1, ..., \pmb{x}_C)  &\sim p^{*} (\pmb{x}_1, ..., \pmb{x}_C), \label{eqn:text_data} \\
\pmb{y}_{tc} | \pmb{x}_c, \alpha_{tc}  &\sim   \text{Cat}_{V+1}\left( \alpha_{tc} x_{c}^{(1)}, ..., \alpha_{tc}x_{c}^{(V)}, 1-\alpha_{tc} \right). \label{eqn:text_gen_cat}
\end{align}
Here, the $V+1^{th}$ token of the categorical random variable in equation (\ref{eqn:text_gen_cat}) should be thought of as a $<$mask$>$ token, representing missing data in the noisy observations of a document, and $w_{tc} \in [0,1]$ represents the probability that a token at position $c$ is unmasked at time $t$. 

\parhead{Augmented Bayesian Model.} A sample from the marginal distribution induced by augmented target distribution of equations (\ref{eqn:text_data}) and (\ref{eqn:text_gen_cat}) consists of a sequence of noisy versions $\pmb{Y}_{1}, ..., \pmb{Y}_{t}$ of a document $\pmb{X}$. It is possible to infer $\pmb{X}$ with the marginal samples of this noisy observation model using the following augmented Bayesian model. 
\begin{align}
(\pmb{x}_1, ..., \pmb{x}_C) &\sim_{\text{iid}} \text{Dir}_{V}(1/K), \label{eqn:text_prior} \\
\pmb{y}_{tc} | \pmb{x}_c, \alpha_{tc}  &\sim   \text{Cat}_{V+1}\left( \alpha_{tc} x_{c}^{(1)},..., \alpha_{tc}x_{c}^{(V)}, 1-\alpha_{tc} \right).
\label{eqn:text_likelihood}
\end{align}
\parhead{Online Bayesian Inference: Encoding Prior Knowledge with a Non-Informative Prior.} We present the online Bayesian update rule assuming a non-informative Dirichlet Prior (i.e. taking $K \to \infty$ in (\ref{eqn:text_prior})). The purpose of this choice is twofold:
\begin{enumerate}[leftmargin=*,itemsep=1pt]
    \item A non-informative Dirichlet prior pushes mass towards the vertices of the probability simplex $\{0,1\}^{V} \cap \Delta^{V-1}$; since we know that documents are represented by vertices of the probability simplex, this effectively encodes prior knowledge about the generative process directly into the noisy observation model. Encoding prior knowledge about the data this way is a noticeable advantage of PMMs that is not present in other generative modeling frameworks. 
    \item It simplifies the posterior mean dynamics which are given by (more details in section \ref{apx:dir-cat-pmm-update} of the appendix)
\end{enumerate} 
\begin{align}
    \pmb{\mu}_{sc} | (\pmb{\mu}_{s-1,c} = \frac{\mathbbm{1}_{V}}{V}, \pmb{x}) &= \begin{cases}
        \pmb{y}_{sc}^{(1:V)} , \quad \text{if } \pmb{y}_{sc}^{(1:V)} \neq \pmb{0} \\ 
        \frac{\mathbbm{1}_{V}}{V}, \quad \text{if } \pmb{y}_{sc}^{(1:V)} = \pmb{0}
    \end{cases} \\
    &\overset{d}{=} 
    \begin{cases}
         \text{Cat}(\pmb{x}_{c}) \quad \text{w/prob } \alpha_{tc} \\
       \frac{\mathbbm{1}_{V}}{V} \quad \text{w/prob } 1 - \alpha_{tc} 
    \end{cases}  \label{eqn:online_bayes_text1} \\ 
    \pmb{\mu}_{tc} | (\pmb{\mu}_{t-1,c} \neq \frac{\mathbbm{1}_{V}}{V} , \pmb{x}) &= \pmb{\mu}_{t-1,c}. \label{eqn:online_bayes_text2}
\end{align}
We use these updates to derive the PMM objective. 

\parhead{Dirichlet-Categorical Posterior Mean Matching Objective.} As before, we approximate the online Bayesian inference updates from equations (\ref{eqn:online_bayes_text1}) and (\ref{eqn:online_bayes_text2}), using a neural network.
\begin{align}
    \hat{\pmb{x}}_{tc} &= g_{\pmb{\varphi}}(\pmb{\mu}_{t-1}, t) \\ 
    \hat{\pmb{y}}_{sc} &\sim \text{Cat}_{V+1}(\alpha_{sc} \hat{x}_{c}^{(1)}, ..., \alpha_{tc}\hat{x}_{c}^{(V)}, 1- \alpha_{tc}) \label{eqn:apx_online_bayes_text1} \\ 
    \pmb{\mu}_{sc} | (\pmb{\mu}_{s-1,c} = \frac{\mathbbm{1}_{V}}{V}) &= \begin{cases}
         \text{Cat}(\pmb{\hat{x}}_{tc}) \quad \text{w/prob } \alpha_{tc} \\
       \frac{\mathbbm{1}_{V}}{V} \quad \text{w/prob } 1 - \alpha_{tc} 
    \end{cases}.  \label{eqn:apx_online_bayes_text2}
\end{align}
Substituting (\ref{eqn:online_bayes_text1}) and (\ref{eqn:apx_online_bayes_text2}) into the PMM objective, we obtain (see appendix \ref{apx:dir-cat-pmm-update}) 
\begin{align}
    \mathcal{L}_{\text{PMM}} (\pmb{\varphi}) \propto  \nonumber\\ - \sum_{c} t \mathbb{E}_{\substack{ 
       s \sim U(\{1, \dots, t\}) \\ 
       \pmb{x} \sim p^{*}(\pmb{x}) \\ 
       \pmb{\mu}_{s-1} | \pmb{x} 
   }}  \mathbbm{1}\bigg(\pmb{\mu}_{t-1,c} =& \frac{\mathbbm{1}_{V}}{V} \bigg)\alpha_{tc} \log{g_{\pmb{\varphi}}^{(x_{c})}(\pmb{\mu}_{t-1})_{c}} \label{eqn:pmm_non_info}
\end{align}
\parhead{A continuous-time PMM objective for the Dirichlet Categorical Model.} It is relatively straightforward to generalize the PMM objective of the Dirichlet-Categorical model to a continuous-time formulation. This generalization is obtained by taking the continuum limit. We defer the technical details of this formulation to Appendix~\ref{apx:dir-cat-cts}). 

\subsection{Posterior Mean Matching with Other Conjugate Pairs}\label{eqn:alt-pmm-models}
In general, it is possible to apply the same logic we used to derive the Normal-Normal and Dirichlet-Categorical PMMs to other conjugate Bayesian models. Conjugacy is a powerful tool since it allows us to compute the posterior mean of a Bayesian model in closed form. We believe that generalizing Posterior Mean Matching to situations where the posterior mean is not available in closed form represents an exciting avenue for future work. Now that we are equipped with all of the tools necessary to derive PMM models, we briefly state the components of a Gamma-Poisson PMM below and defer the development of the Inverse Gamma-Gamma model to the Appendix. 

\parhead{Data Representation.} The Gamma-Poisson model is suitable to model target distributions $p^{*}(\pmb{x})$ of positive or count data. 

\parhead{Augmented Target Distribution.} Given a sample from the target $\pmb{x} \sim p^{*}(\pmb{x})$, we consider the following noisy observation model $\pmb{y}_{t} | \pmb{x} \sim \text{Pois}(\alpha_{t}\pmb{x})$. These components completely specify the augmented target distribution $p(\pmb{y}_{1:t}) = \int p^{*}(\pmb{x}) \prod_{s} \text{Pois}(\pmb{y}_{t} ; \alpha_{t} \pmb{x}) d\pmb{x}$. 

\parhead{Augmented Bayesian Model.} Given a noisy sample $\pmb{y}_{1:t}$ from the marginal distribution $p(\pmb{y}_{1:t})$, it is possible to recover the sample from the target $\pmb{x}^{*} \sim p^{*}(\pmb{x}) $ by using a  Bayesian model with prior $\pmb{x} \sim \Gamma(\beta_{1}, \beta_{2})$ and likelihood $\pmb{y}_{t} | \pmb{x} \sim \text{Pois}(\alpha_{t}\pmb{x})$. 

\parhead{Online Bayesian Inference Update.} For the Gamma-Poisson PMM model the online Bayesian Inference update is given by 
\begin{align}
     \pmb{\mu}_{s} | \pmb{\mu}_{s-1}, \pmb{x} &\overset{d}{=} \frac{\beta_{2} + \sum_{k=1}^{s-1}\alpha_{k}}{\beta_{2} + \sum_{k=1}^{s}\alpha_{k}}\pmb{\mu}_{s-1} + \frac{\alpha_{s}\pmb{y}_{s}}{\beta_{2} + \sum_{k=1}^{s}\alpha_{k}}, \label{eqn:online-bayes-gamma-pois}
\end{align}
where $\pmb{y}_{s} \sim \text{Pois}(\alpha_{t} \pmb{x})$. 

\section{Diffusion Models and SDEs} \label{sec:sde}

Theorem~\ref{thm:norm-norm} establishes that the posterior mean \( \pmb{\mu}_t \) converges to the true observation  \( \pmb{x} \) as more observations are incorporated. Intuitively, the iterative refinement of online Bayesian inference is analogous to the denoising steps of diffusion models, where each step incrementally reduces noise to approach the underlying data distribution. Here we formalize this intuition by mathematically connecting PMMs and stochastic differential equations (SDEs). 

Specifically, the Bayesian update in the Normal-Normal PMM model can be interpreted as a discrete-time step in a type of diffusion process, with the posterior mean \( \pmb{\mu}_t \) acting as the denoising function steering towards the sample \( \pmb{x} \) from the target distribution \( p^{*}(\pmb{x}) \). Although the continuous-time formulation of the Normal-Normal PMM is, strictly speaking, a diffusion process, we want to emphasize that the behavior and functional form of the SDEs are different from those typically appearing in the literature on diffusion models \citep{song2020score}. We connect the Normal-Normal PMM model to SDEs in the following theorem.
\begin{theorem}\label{thm:online_bayes_diffusion}
    \textbf{(Online Bayesian Inference as a Diffusion Process)} Consider the update rule for the posterior mean \(\pmb{\mu}_t\) given by \eqref{eqn:normal-online-bayes}. Let \(f : [0,1] \to \mathbb{R}^{+}\) be a monotonic function such that $\lim_{t \to 1} \int_{0}^{t} f(\tau) d\tau \to \infty$ and consider a partition of the unit interval \(0 = t_{1} < t_{2} < \ldots < t_{T} = 1\) . Moreover, define the sequence \(\alpha_{1}, \ldots, \alpha_{T}\) in \eqref{eqn:normal-likelihood} by \(\alpha_{s} = f(t_{s}) \delta t_{s}\). In the limit as \(T \to \infty\) and \(\delta t_{s} \to 0\), the discrete updates of \eqref{eqn:normal-online-bayes} converge to a diffusion process defined by the following Stochastic Differential Equation (SDE):
    \begin{align}
        d\pmb{\mu}(t) &= f(t)\frac{(\pmb{x} - \pmb{\mu}(t))}{b + \int_{0}^{t} f(\tau) \, d\tau} dt + \frac{\sqrt{f(t)}}{b + \int_{0}^{t} f(\tau) \, d\tau} d\pmb{W}_{t}, \\
        \pmb{x} &\sim p^{*}(x).
    \end{align}
\end{theorem}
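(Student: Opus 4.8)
The plan is to recognize the recursion in \eqref{eqn:normal-online-bayes} as an Euler--Maruyama discretization of the claimed SDE, prove convergence on compact subintervals by a standard argument, and then deal with the terminal-time singularity by a deterministic time change. First I would rewrite the update in increment form. Set $A_t := \beta + \sum_{s=1}^{t}\alpha_s$ and write $\pmb{y}_t = \pmb{x} + \alpha_t^{-1/2}\pmb{\epsilon}_t$ with $\pmb{\epsilon}_1,\pmb{\epsilon}_2,\ldots \overset{\text{iid}}{\sim}\mathcal{N}(\pmb{0},I)$ (the $\pmb{y}_s$ are conditionally independent given $\pmb{x}$). Then \eqref{eqn:normal-online-bayes} is equivalent to
\begin{align}
  \pmb{\mu}_t - \pmb{\mu}_{t-1} \;=\; \frac{\alpha_t}{A_t}\,(\pmb{x} - \pmb{\mu}_{t-1}) \;+\; \frac{\sqrt{\alpha_t}}{A_t}\,\pmb{\epsilon}_t . \nonumber
\end{align}
Substituting $\alpha_s = f(t_s)\,\delta t_s$, the partial sums $A_{t_k} = \beta + \sum_{s \le k} f(t_s)\,\delta t_s$ are Riemann sums converging to $b + \int_0^{t} f(\tau)\,d\tau$ (so $b=\beta$, the prior precision), the drift contribution is $\tfrac{f(t_k)}{A_{t_k}}(\pmb{x}-\pmb{\mu}_{t_{k-1}})\,\delta t_k$, and the noise contribution is $\tfrac{\sqrt{f(t_k)}}{A_{t_k}}\sqrt{\delta t_k}\,\pmb{\epsilon}_k$. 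These are precisely the Euler increments for $d\pmb{\mu}(t) = a(t)\,(\pmb{x}-\pmb{\mu}(t))\,dt + \sigma(t)\,d\pmb{W}_t$ with $a(t) = f(t)/(b+\int_0^t f)$ and $\sigma(t) = \sqrt{f(t)}/(b+\int_0^t f)$, i.e.\ the stated SDE.

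I would then establish convergence (in distribution, uniformly on compact subsets of $[0,1)$) on every interval $[0,1-\varepsilon]$. There $a$ and $\sigma$ are bounded, $a(t)(\pmb{x}-\cdot)$ is affine (hence globally Lipschitz) in $\pmb{\mu}$, and $\sigma$ is deterministic, so classical results on strong convergence of the Euler--Maruyama scheme apply. In fact, because $\sigma$ is deterministic, no functional central limit theorem is needed: the discrete martingale part $\sum_{s\le k}\tfrac{\sqrt{\alpha_s}}{A_s}\pmb{\epsilon}_s$ is a sum of independent Gaussians, so it can be coupled directly to $\int_0^{t}\sigma(\tau)\,d\pmb{W}_\tau$ by matching variances --- one checks that its predictable quadratic variation $\sum_{s\le k}\tfrac{\alpha_s}{A_s^{2}} = \sum_{s\le k}\tfrac{f(t_s)\,\delta t_s}{A_s^{2}}$ converges to $\int_0^{t}\sigma(\tau)^{2}\,d\tau$ --- while the drift Riemann sums converge to $\int_0^t a(\tau)(\pmb{x}-\pmb{\mu}(\tau))\,d\tau$ and the propagated discretization error is bounded by a discrete Gr\"onwall inequality using the Lipschitz constant of $a$.

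The main obstacle is the terminal time $t=1$: since $\int_0^t f\to\infty$, the coefficient $a(t)$ generally diverges as $t\to1$ and the standard theorems cannot be applied on all of $[0,1]$ at once. I would resolve this with the deterministic time change $r = r(t) := b + \int_0^t f(\tau)\,d\tau$, which maps $[0,1)$ bijectively onto $[b,\infty)$ and, since $dr = f(t)\,dt$ and the accumulated quadratic variation of the noise is $\int \sigma(t)^2\,dt = \int r^{-2}\,dr$, transforms the SDE into the linear equation $d\pmb{\mu} = r^{-1}(\pmb{x}-\pmb{\mu})\,dr + r^{-1}\,d\tilde{\pmb{W}}_r$ on $r\in[b,\infty)$, whose coefficients are bounded (and decay) on the whole half-line. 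Classical existence, uniqueness and discretization results apply there, one verifies $\pmb{\mu}(r)\to\pmb{x}$ as $r\to\infty$, and mapping back recovers convergence on $[0,1)$ together with the continuous extension $\pmb{\mu}(1)=\pmb{x}$ --- which is exactly the consistency statement of Theorem~\ref{thm:norm-norm}, and shows the limit process is a bridge pinned at the random endpoint $\pmb{x}\sim p^*(\pmb{x})$.
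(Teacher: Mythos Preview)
Your argument is correct and in fact goes well beyond what the paper does. The paper's proof is essentially only your first paragraph: it rewrites the recursion in increment form, substitutes $\pmb{y}_s = \pmb{x} + \alpha_s^{-1/2}\pmb{\epsilon}_s$ to obtain
\[
\pmb{\mu}(t_s)-\pmb{\mu}(t_{s-1}) = \frac{(\pmb{x}-\pmb{\mu}(t_{s-1}))f(t_s)}{b+\sum_{s'}f(t_{s'})\delta t_{s'}}\,\delta t_s + \frac{\sqrt{f(t_s)}}{b+\sum_{s'}f(t_{s'})\delta t_{s'}}\,\pmb{\epsilon}_{t_s}\sqrt{\delta t_s},
\]
recognizes this as the Euler--Maruyama scheme for the claimed SDE, and then simply asserts that ``taking the continuum limit $\delta t_s\to 0$ and $T\to\infty$, this process converges to the diffusion process.'' No Lipschitz/Gr\"onwall argument, no treatment of the singularity at $t=1$, and no time change appear in the paper.

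So your proposal follows the same route for the formal identification, but supplies the rigorous convergence analysis on $[0,1-\varepsilon]$ and the time-change $r=b+\int_0^t f$ to handle the endpoint blow-up --- neither of which the paper addresses. What this buys you is an actual proof rather than a heuristic derivation, plus the nice observation that in the $r$-variable the limit is the globally well-posed linear SDE $d\pmb{\mu}=r^{-1}(\pmb{x}-\pmb{\mu})\,dr + r^{-1}\,d\tilde{\pmb{W}}_r$, from which $\pmb{\mu}(r)\to\pmb{x}$ (and hence Theorem~\ref{thm:norm-norm}) falls out directly. If you want to match the paper's level, your first paragraph alone suffices; the rest is a genuine strengthening.
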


What is surprising is that the continuous-time formulation of the Gamma-Poisson PMM model is also related to SDEs.
\begin{theorem} \label{thm:gamma-pois-sde}
    (Gamma-Poisson SDE) Consider the update rule of the posterior mean \(\mu_t\) for the Gamma-Poisson PMM shown in equation (\ref{eqn:online-bayes-gamma-pois}). Let \(f : [0,1] \to \mathbb{R}^{+}\) and consider \(0 = t_{1} < t_{2} < \ldots < t_{T} = 1\) a partition of the unit interval. Moreover, define the sequence \(\alpha_{1}, \ldots, \alpha_{T}\) of the Gamma-Poison PMM by \(\alpha_{s} = f(t_{s}) \delta t_{s}\). In the continuum limit \(T \to \infty\) and \(\max_{s}\delta t_{s} \to 0\), we have that the discrete updates of $\pmb{\mu}_{t}$ converge to a Merton jump process characterized by the following Stochastic Differential Equation (SDE):
    \begin{align}
        d\pmb\mu(t) &= \left(L'(t) + \frac{A'(t)}{A(t)}\left(\pmb\mu(t) - L(t)\right)\right) dt + A(t) d\pmb{N}(t). 
    \end{align}
    Where $\pmb{N}(t)$ is a Cox Process with random base measure $\pmb{x} dt$ with $\pmb{x} \sim p^{*}(\pmb{x})$, and $A(t) = (\beta_{2} + \int_{0}^{t} f(\tau) d \tau )^{-1}$ and $L(t) = \beta_{1} (\beta_{2} + \int_{0}^{t} f(\tau) d\tau)^{-1}$.
\end{theorem}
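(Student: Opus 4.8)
The plan is to solve the discrete recursion (\ref{eqn:online-bayes-gamma-pois}) in closed form, identify the limiting object explicitly, and then read off the SDE by an elementary pathwise (Stieltjes) product rule; essentially all of the analytic content sits in justifying the continuum limit, and the argument runs parallel to the Normal--Normal case (Theorem~\ref{thm:norm-norm} and its SDE), with Gaussian increments and their Brownian limit replaced by Poisson increments and their Cox-process limit. First I would telescope the update: writing $S_s := \beta_2 + \sum_{k=1}^s \alpha_k$ and multiplying (\ref{eqn:online-bayes-gamma-pois}) through by $S_s$ collapses the recursion, and solving it from the prior mean $\pmb\mu_0 = \beta_1/\beta_2$ recovers the usual Gamma--Poisson posterior mean,
\[
 \pmb\mu_s \;=\; \frac{\beta_1 + \sum_{k=1}^{s}\pmb y_k}{\beta_2 + \sum_{k=1}^{s}\alpha_k} \;=\; L(t_s) \,+\, A(t_s)\Big(\textstyle\sum_{k=1}^{s}\pmb y_k\Big),
\]
using $A(t)^{-1} = \beta_2 + \int_0^t f$, $L(t) = \beta_1 A(t)$, and the fact that $S_s$ is a Riemann sum for $\beta_2 + \int_0^{t_s} f$, so that (under the implicit regularity on $f$ that makes such sums converge, e.g.\ monotonicity or continuity on $[0,1]$) $S_s \to A(t_s)^{-1}$ uniformly. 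Thus, up to the vanishing discretization error in $S_s$, the entire discrete trajectory is an affine image, with smooth deterministic coefficients, of the partial-sum process $\pmb N^{(T)}(t) := \sum_{k:\, t_k \le t}\pmb y_k$.

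Next I would identify the limit of the partial sums. Conditionally on $\pmb x$, the $\pmb y_k$ are independent with $\pmb y_k \sim \mathrm{Pois}(\alpha_k\pmb x) = \mathrm{Pois}(f(t_k)\delta t_k\,\pmb x)$, so $\pmb N^{(T)}$ has independent increments, and by additivity of the Poisson law its increment over any block $(t_a,t_b]$ of partition points is exactly $\mathrm{Pois}\big(\pmb x\sum_{a<k\le b}f(t_k)\delta t_k\big)$, a Riemann sum converging to $\mathrm{Pois}\big(\pmb x\int_{t_a}^{t_b}f\big)$ as $\max_s\delta t_s\to 0$. Hence $\pmb N^{(T)}$ converges (convergence of finite-dimensional distributions plus a routine tightness argument in the Skorokhod space $D[0,1]$; or, most cleanly, by coupling all the $\pmb y_k$ to the block-increments of a single limiting process) to a counting process $\pmb N$ which, conditionally on $\pmb x$, is Poisson with intensity $f(t)\pmb x$ --- i.e.\ the Cox process with random base measure $\pmb x f(t)\,dt$, equivalently $\pmb x\, d\tau$ under the time change $\tau(t) = \int_0^t f$, which is the process named in the statement.

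Finally, having the limit process $\pmb\mu(t) = L(t) + A(t)\pmb N(t)$, I would differentiate. Since $L$ and $A$ are absolutely continuous and $\pmb N$ is of finite variation (pure jump), the Stieltjes integration-by-parts formula applies with no quadratic-covariation correction (the continuous factor $A$ has finite variation), giving $d(A\pmb N) = A'(t)\pmb N(t)\,dt + A(t)\,d\pmb N(t)$, hence $d\pmb\mu = L'(t)\,dt + A'(t)\pmb N(t)\,dt + A(t)\,d\pmb N(t)$; substituting $\pmb N(t) = (\pmb\mu(t) - L(t))/A(t)$ yields exactly the claimed SDE, with initial condition $\pmb\mu(0) = L(0) = \beta_1/\beta_2 = \pmb\mu_0$. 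A useful consistency check: $L' - (A'/A)L \equiv 0$, so the drift collapses to $(A'/A)\pmb\mu$, which is negative since $A$ is decreasing --- between observations the posterior mean relaxes toward $0$, and it jumps upward by $A(t)$ at each observed count, exactly matching the closed form $\pmb\mu(t) = A(t)(\beta_1 + \pmb N(t))$.

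The product-rule step is purely formal once one is in continuous time, so the main obstacle is the continuum limit itself: upgrading the (essentially exact) finite-dimensional convergence of $\pmb N^{(T)}$ to path-space convergence and identifying the superposition-of-Poissons limit as the stated Cox process, and then verifying that the affine map $(\pmb N^{(T)}, S_\bullet) \mapsto \pmb\mu^{(T)}$ is continuous enough for the limit to pass through (this last part is easy, as the coefficients are smooth and deterministic). A secondary subtlety is the behaviour near $t = 1$: if $\int_0^1 f = \infty$ then $A(1) = 0$ while $\pmb N(t) \to \infty$, yet the product $A(t)\pmb N(t) = \pmb\mu(t) - L(t)$ remains controlled (indeed $\pmb\mu(t) \to \pmb x$ by a Poisson/Cox law of large numbers, the consistency phenomenon), so the SDE is read on $[0,1)$ with the appropriate one-sided limit at the endpoint, exactly as in the Normal--Normal analysis.
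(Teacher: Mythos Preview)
Your proposal is correct and follows essentially the same route as the paper: both arguments solve the recursion in closed form as $\pmb\mu(t) = L(t) + A(t)\pmb N(t)$, identify the continuum limit of the Poisson partial sums as the Cox process, then differentiate and substitute $\pmb N(t) = (\pmb\mu(t)-L(t))/A(t)$ back. The only cosmetic difference is that the paper first states and proves a standalone It\^o-type lemma for Poisson processes and applies it, whereas you invoke the Stieltjes product rule directly---which suffices since the function is affine in $\pmb N$; your handling of the continuum limit (Riemann sums for $S_s$, FDD convergence/coupling for $\pmb N^{(T)}$) and your remark reconciling the intensity $f(t)\pmb x$ with the stated base measure $\pmb x\,dt$ via the time change $\tau=\int_0^t f$ are in fact more careful than the paper's own proof.
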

The proof is in the Appendix. Theorem \ref{thm:gamma-pois-sde} marks a significant departure from traditional Brownian motion-based generative models that typically appear in the literature on diffusion models \citep{song2020score}. 

\parhead{The Computation / Quality Trade-off.} The connections between PMMs and SDEs established in Theorems \ref{thm:online_bayes_diffusion} and \ref{thm:gamma-pois-sde} allow PMM models to use numerical techniques that have been developed to solve stochastic differential equations over many decades. This connection to SDEs allows us to interpret algorithm \ref{alg:pmm_inference} as using the Euler-Maruyama method to numerically sample paths from an SDE. In the experiments of Section \ref{sec:exper}, we use this connection to SDEs to trade compute for sample quality. 

\section{Experiments}  \label{sec:exper}
We evaluate the performance of Posterior Mean Matching (PMM) models on image and text generation tasks. For image generation, we train Normal-Normal and Gamma-Poisson PMMs on three benchmark datasets: CIFAR-10 \citep{krizhevsky2009learning}, FFHQ-64 \citep{ffhq}, and AFHQv2-64 \citep{choi2020starganv2}. For our text experiments, we evaluate the performance of a Dirichlet-Categorical PMM on the text8 and OpenWebText dataset \citep{OpenWebCorpus, text8dataset}. The following is a summary of our findings:
\begin{itemize}[leftmargin=*,itemsep=1pt]
    \item The Normal-Normal PMM achieves a competitive FID score of 2.18 on CIFAR-10, an FID score that is comparable to most diffusion models.
    \item The Gamma-Poisson PMM achieves an FID score of 4.36 on CIFAR-10. This score is lower than other diffusion models based on the Poisson likelihood \citep{learningtojump, santos2023blackout}. 
    \item Using the SDE interpretation of PMMs, we show that the FID scores of the Normal PMM degrade marginally when using a reduced number of function evaluations. Notably, on CIFAR-10 decreasing the number of function evaluations from $5000$ to $166$ (a factor of 30) reduces the FID score from $2.18$ to $2.79$. 
    \item On OpenWebText the Dirichlet-Categorical PMM achieves a generative perplexity of 37.06 and 42.58 using top-350 and top-500 sampling, respectively,  demonstrating performance on par with current non-autoregressive diffusion-based language models  \citep{sedd, mdlm, shi2024simplified}. 
    \item On text8, PMM achieves a bits per character (BPC) of 1.29, better than non-autoregressive language models based on diffusion. It narrows the gap to autoregressive language models, which achieve a BPC of 1.23.
\end{itemize}

\subsection{Image Generation Tasks} \label{sec:img-exps}
\parhead{Neural Network Architecture.} In all of our experiments, we use an open-source implementation of the DDPM++ architecture \citep{Karras2022edm, dhariwal2021diffusionmodelsbeatgans}.
\begin{table}[t]
    \centering
    \caption{If FID scores are available for different sources, we report both scores (lower is better). The FID scores for these models may be found in \citet{Karras2022edm, song2020score}. All of our experiments make use of class conditioning. The Normal-Normal PMM also uses adaptive data augmentation.}
    \label{tab:fid}
    \begin{threeparttable}
        \scriptsize 
        \setlength{\tabcolsep}{4pt} 
        \renewcommand{\arraystretch}{1.2} 
        \begin{tabularx}{\columnwidth}{@{}l S[table-format=2.2] S[table-format=3.2] S[table-format=3.2]@{}} 
            \toprule
            \textbf{Method} & \textbf{Cifar-10} & \textbf{FFHQ-64} & \textbf{AFHQv2-64} \\
            \midrule
            \textbf{DDPM} & 3.17 & \NA & \NA\\ 
            \textbf{DDPM++} & 2.78 & \NA & \NA \\
            \textbf{DDPM ++ (VP)} & 2.18$^{*}$/2.55 & 3.13$^{*}$ & 2.43$^{*}$ \\
            \textbf{DDPM ++ (VE)} & 2.48$^{*}$ & 22.53$^{*}$ & 23.12$^{*}$ \\ 
            \textbf{NSCN++ (VE)} & 2.38 & \NA  & \NA\\ 
            \textbf{NCSN ++ (VE, deep)} &  2.20 & \NA & \NA \\ 
            \textbf{DDPM++ (VP, deep)} & 2.41 & \NA & \NA \\ 
            \textbf{PFGM} & 2.48$^{*}$ &  \NA & \NA \\ 
            \textbf{PFGM, deep} & 2.35$^{*}$ & \NA & \NA \\ 
            \textbf{Style GAN w/ADA} & 2.42 & \NA & \NA\\ 
            \textbf{SOTA Diffusion} & 1.79 & 2.39  & 1.96  \\
            \textbf{Normal PMM (ours)} & \textbf{2.18} & 3.41 &  2.48 \\
            \bottomrule            
        \end{tabularx}
        \begin{tablenotes}
            \small
            \item \textbf{NFE}: Neural Function Evaluations. 
            \item \textbf{(deep)}: Methods with this marker use deeper networks. 
            \item *: Indicates use of a higher order solver like RK-45 or Heun.        
        \end{tablenotes} 
    \end{threeparttable}
\end{table}

\begin{table}[t]
    \centering
    \caption{FID scores for the Normal-Normal PMM as a function of neural function evaluations (NFEs).}
    \label{tab:nfe-fid}
    \begin{threeparttable}
        \scriptsize 
        \setlength{\tabcolsep}{3pt} 
        \renewcommand{\arraystretch}{1.2} 
        \begin{tabularx}{\columnwidth}{@{}c *{6}{S[table-format=3.2]}@{}} 
            \toprule
            \textbf{Method / NFE} & \textbf{100} & \textbf{166} & \textbf{500} & \textbf{1k} & \textbf{3k} & \textbf{5k} \\
            \midrule
            \textbf{Cifar10} & 3.98 & 2.79 & 2.46 & 2.33 & 2.28 & 2.18 \\
            \textbf{AFHQ-v2} & \NA  & 3.04 & 2.62 & 2.52 & 2.48 & \NA \\ 
            \textbf{FFHQ} & \NA &  5.76 & 3.89 & 3.65 & 3.41 & \NA \\ 
            \bottomrule
        \end{tabularx}
    \end{threeparttable}
\end{table}

\parhead{Discussion.} The results of our experiments are shown in Table \ref{tab:fid}. Unless otherwise stated, the methods in Table \ref{tab:fid} use very similar Neural Network architectures to the ones used in our experiments. A notable exception to this rule is Style-GAN \citep{ffhq}. We report the performance of Style-GAN to paint a more complete picture of the performance of state-of-the-art models that are not based on diffusion. 

We measure the quality of the generated images by computing the FID score on a sample of $50,000$ images. We also evaluate the Normal-Normal PMM on AFHQ-v2 and FFHQ-64, two higher-resolution datasets consisting of images of animals and humans, respectively. The performance of the Normal-Normal PMM on these datasets is also comparable to other popular diffusion models (see Table \ref{tab:fid}). 

We compare the performance of the Gamma-Poisson PMM against other Poisson diffusion models. The Gamma-Poisson PMM achieves an FID score of $4.36$, which is a better score than previous generative models that use the Poisson distribution. The details of the Gamma-Poisson model are shown in Appendix.  
In all of our image generation experiments, we choose an exponential schedule for the $\alpha_{t}$'s in the noise model (see Appendix \ref{app:hyperparameters}) 

\begin{figure}[t]
    \centering
    \begin{subfigure}[b]{0.48\textwidth}  
        \centering
        \includegraphics[width=\textwidth]{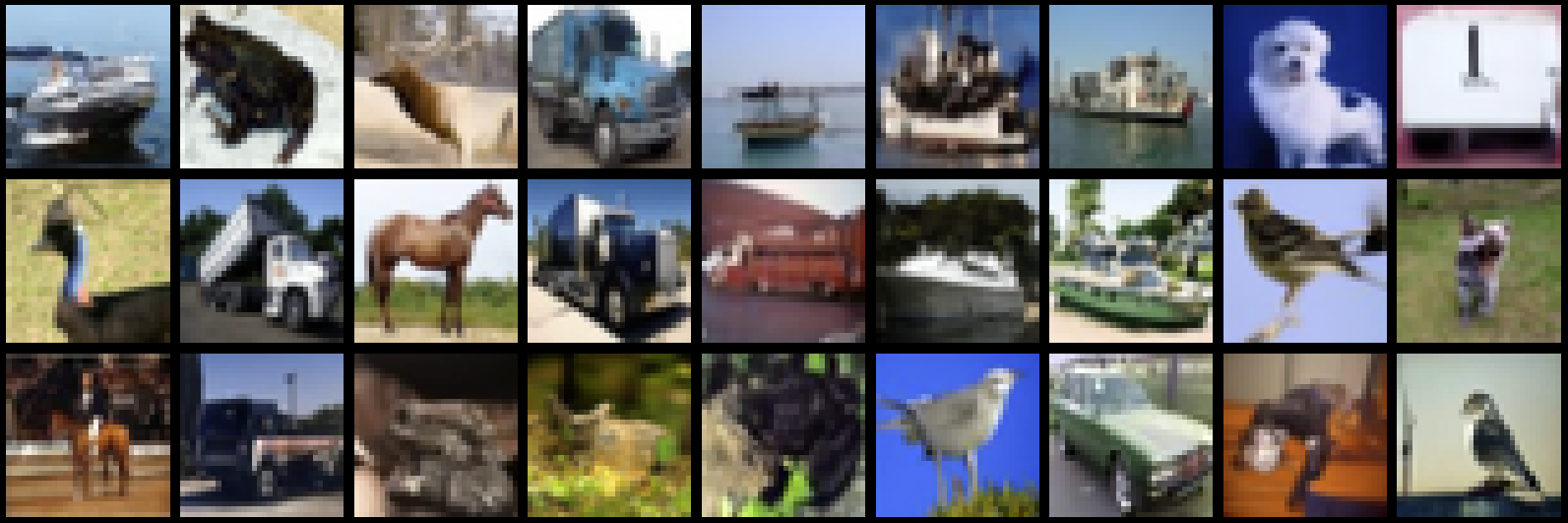}
        \caption{CIFAR 10 samples. FID = 2.46 with 500 NFEs}
        \label{fig:pmm_cifar10}
    \end{subfigure}
    \hfill
    \begin{subfigure}[b]{0.48\textwidth}  
        \centering
        \includegraphics[width=\textwidth]{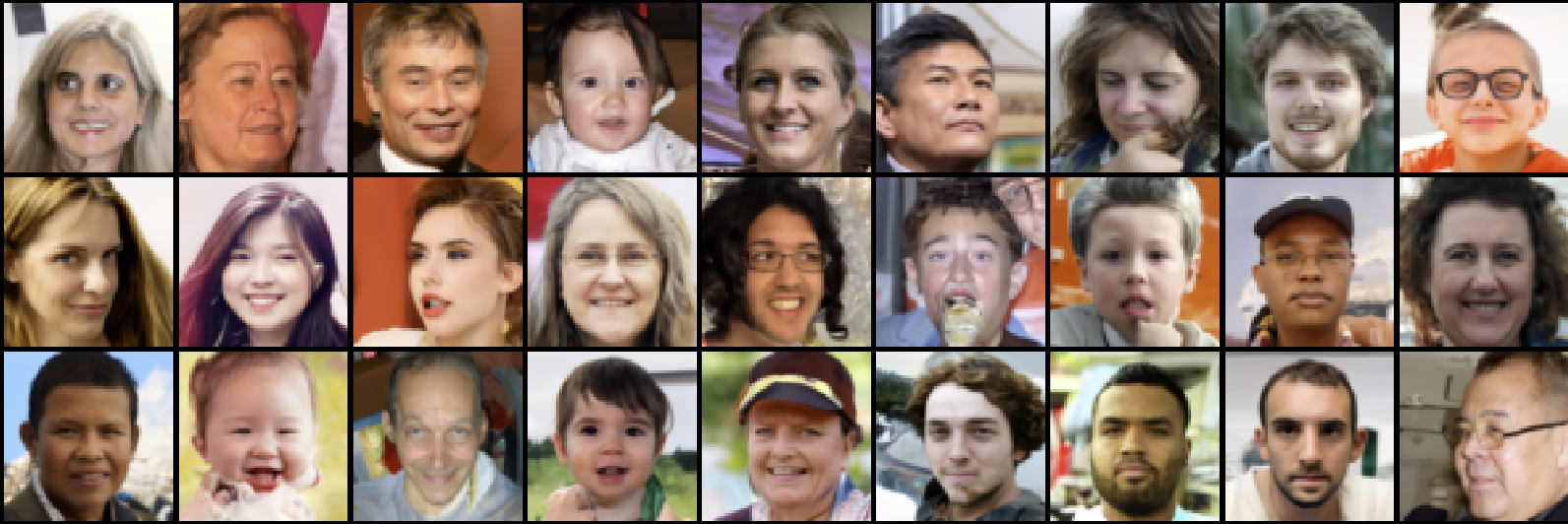}
        \caption{FFHQ samples. FID = 3.89 with 500 NFEs}
        \label{fig:pmm_ffhq}
    \end{subfigure}

    \caption{Comparison of sample generations for the Normal-Normal PMM model across CIFAR10 and FFHQ datasets. See Appendix \ref{app:figures} for a larger sample of generated images.}
    \label{fig:img-generation}
\end{figure}

\subsection{Language Modeling Tasks}
For the text experiments, we fit a continuous-time Dirichlet-Categorical PMM model using a non-informative prior on two datasets: text8 and OpenWebText. The text8 dataset consists of the first 100M characters of cleaned English Wikipedia text, while OpenWebText is a large-scale corpus derived from web pages shared on Reddit with high engagement. On text8 we find that the Dirichlet-Categorical PMM outperforms all of the existing non-autoregressive baselines (see Table \ref{tab:text8_bpc}).  Dirichlet-Categorical PMMs also achieve competitive Generative Perplexity on OpenWebText compared to other non-autoregressive (see Table \ref{tab:owt}). 

\parhead{Language Modeling} We evaluate our Dirichlet-Categorical PMM against several baselines, including traditional autoregressive models and recent non-autoregressive approaches. On the text8 dataset, our model achieves a bits per character (BPC) of 1.29. 
\begin{table}[t]
    \centering
    \caption{Bits per Character (BPC) Performance}
    \label{tab:text8_bpc}
    \begin{threeparttable}
        \scriptsize 
        \setlength{\tabcolsep}{4pt} 
        \renewcommand{\arraystretch}{1.2} 
        \begin{tabularx}{\columnwidth}{@{}l S[table-format=1.2]@{}}
            \toprule
            \textbf{Model} & \textbf{BPC} \\
            \midrule
            Autoregressive (GPT-2) & 1.23 \\
            D3PM (Uniform) & 1.61 \\
            D3PM (Absorb) & 1.45 \\
            SEDD (Absorb) & 1.39 \\ 
            Bayesian Flow Nets & 1.41 \\ 
            GenMD4 & 1.34 \\
            \midrule
            \textbf{Dirichlet-Categorical PMM (Ours)} & \textbf{1.29} \\
            \bottomrule
        \end{tabularx}
    \end{threeparttable}
\end{table}

\begin{table}[t]
    \centering
    \caption{Generative Perplexity (\textbf{PPL}) measured at context length (CL) 1024 relative to GPT-2 large.}
    \label{tab:owt}
    \begin{threeparttable}
        \scriptsize 
        \setlength{\tabcolsep}{4pt} 
        \renewcommand{\arraystretch}{1.2} 
        \begin{tabularx}{\columnwidth}{@{}l S[table-format=2.2]@{}}
            \toprule
            \textbf{Model} & \textbf{PPL} \\
            \midrule
            AR & 20.98 \\
            SEDD & 30.96 \\
            MDLM & 31.69 \\
            \midrule
            \textbf{Dirichlet-Categorical PMM} & 42.58 \\
            \bottomrule
        \end{tabularx}
    \end{threeparttable}
\end{table}

\parhead{Unconditional Language Generation} 
Following previous work \citep{sedd, mdlm}, we assess the quality of unconditional text output by computing generative perplexity relative to the \texttt{gpt2-large} language model \citep{radford2019language}. While generative perplexity is a standard metric for evaluating traditional autoregressive language models, its estimation for non-autoregressive models is more complex due to their inherently different generation processes. To ensure a fair comparison, we sample tokens using top-500 sampling and generate 1024 sequences of length 1024 from each model, using at most 1000 network evaluations for the non-autoregressive models. The resulting generative perplexities are shown in Table \ref{tab:owt}. where our model attains a competitive generative perplexity \footnote{PMM results were obtained using a different tokenizer, a smaller model with gpt2-based architecture, and without applying exponential moving averages (EMA).} with other non-autoregressive models. 

While this is an exciting finding, it is important to note that evaluating unconditional non-autoregressive language models remains challenging, and it is known that benchmarks like generative perplexity are susceptible to manipulation through temperature annealing techniques \citep{sedd}. For this reason, we encourage readers to qualitatively assess the text generation capabilities of each model themselves by looking at the samples provided in the Appendix. These samples showcase our model's ability to generate coherent and diverse text across various topics and styles.

\section{Conclusion and Future Work}
In this paper, we introduced \textbf{Posterior Mean Matching (PMM)}, a novel and flexible framework for generative modeling grounded in Bayesian inference. PMM leverages conjugate pairs of distributions to model complex data distributions across various modalities, offering an alternative to traditional diffusion models. Through comprehensive experiments, we demonstrated the efficacy of PMM in both image and language generation tasks.

\section{Ackowledgements}
This work was conducted under the U.S. Department of Energy, National Nuclear Security Administration's Office of Defense Nuclear Nonproliferation Research and Development (NA-22) Steel Thread Venture. David Blei acknowledges support from the Simons Foundation, the National Science Foundation (grant numbers IIS-2127869 and DMS-2311108), and the Office of Naval Research (grant number N000142412243). Yixin Wang is supported by the National Science Foundation (grant numbers 2231174, 2310831, and 2428059), the Office of Naval Research (grant number N000142312590), and the Michigan Institute for Data Science Propelling Original Data Science (PODS) grant. 

This research used resources provided by the Los Alamos National Laboratory Institutional Computing Program, which is supported by the U.S. Department of Energy National Nuclear Security Administration under Contract No. 89233218CNA000001.

We are deeply grateful to members of the Blei Lab for their invaluable contributions: Alessandro Grande for his thoughtful feedback and discussions on the early drafts, and Nicholas Beltran for reviewing an early version of the paper. We also thank Luhuan Wu, Sweta Karlekar, and Eli Weinstein for their insightful discussions during the initial stages of the project. We extend our appreciation to our colleagues at Los Alamos National Laboratory (LANL), Dr. Yen Ting Lin and Nihar Mauskar, for their thoughtful input, and to Carolyn Connor for her assistance in accessing the Venado computing cluster.

\bibliography{bibliography}


\onecolumn
\aistatstitle{Posterior Mean Matching: Generative Modeling with Online Bayesian Inference \\
Supplementary Materials}
\appendix

\setcounter{theorem}{0}
\renewcommand{\thetheorem}{\Alph{section}.\arabic{theorem}}

\setcounter{figure}{0}  
\renewcommand{\thefigure}{\Alph{section}.\arabic{figure}}  

\section{Appendix: Details for PMM Models}

In this section we ouline the details for the following PMM models 
\begin{itemize}
    \item Normal-Normal 
    \item Gamma-Poisson
    \item Dirichlet-Categorical
    \item InverseGamma-Gamma 
\end{itemize}
For each of these models we work out the following 
\begin{enumerate}
    \item The online Bayesian update.
    \item A proof of consistency. 
    \item The closed-form of the PMM objective. 
    \item If applicable, a continuous time formulations and/or connections to stochastic differential equations. 
\end{enumerate}
All of the Bayesian models we consider form conjugate pairs in the exponential family. 

We summarize these details in Table in Section \ref{sec:reference}. 

\vfill

\newpage 
\subsection{Reference Table}\label{sec:reference}
\begin{figure}[!h]
    \centering
    \includegraphics[angle=90, width=0.65\linewidth]{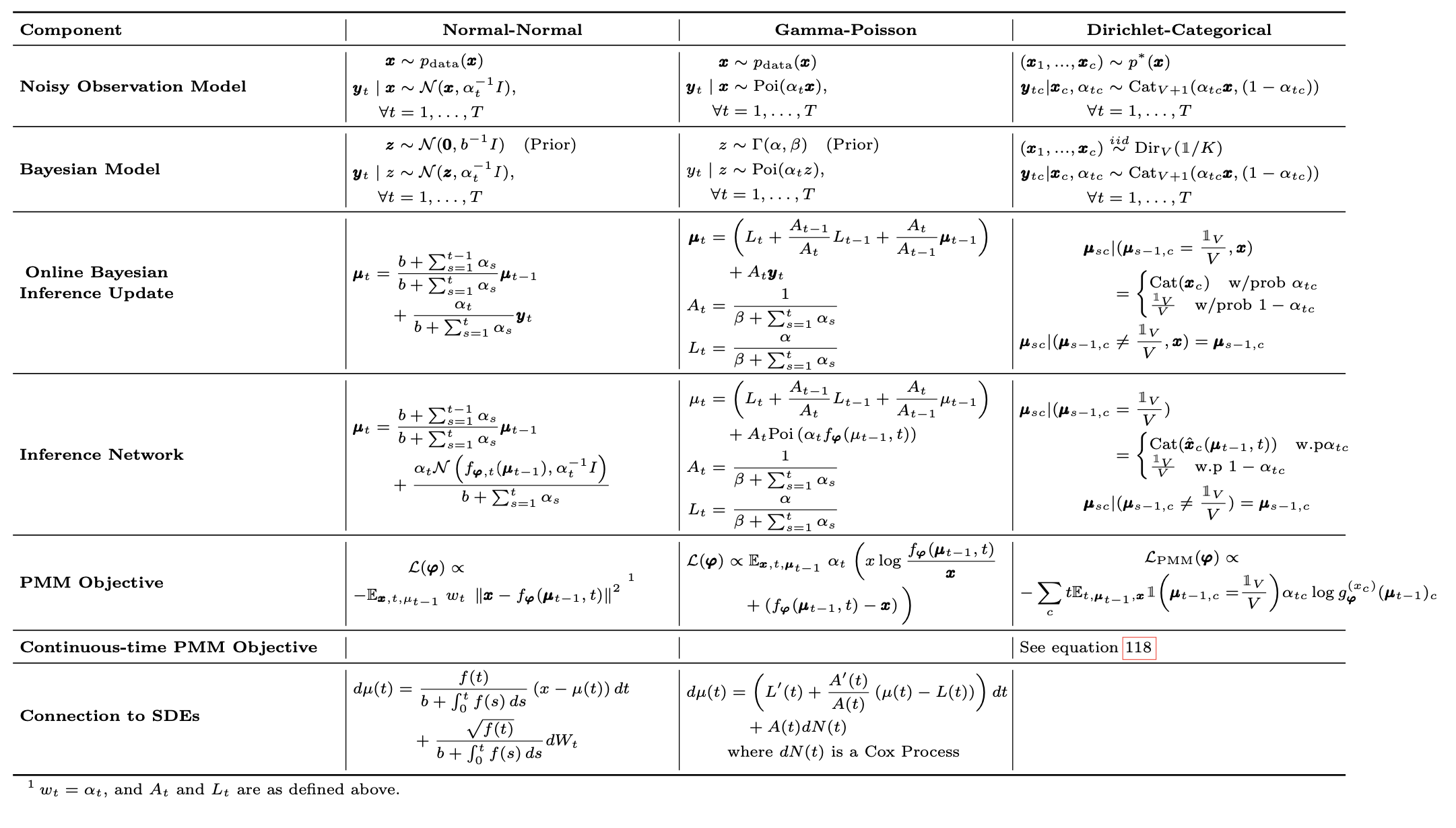}
    \label{fig:reference-table}
\end{figure}

\newpage 
\subsection{Normal-Normal Model}
\subsubsection{Online Bayesian Update} \label{apx:normal-online}
For the Normal-Normal Bayesian model:
\begin{align}
     \pmb{x} &\sim \mathcal{N}(\pmb{0}, \beta^{-1}I) \\
     \pmb{y}_{s} | \pmb{x} &\sim \mathcal{N}( \pmb{x}, \alpha_s^{-1}I) , \quad \forall s \in \{1, ..., t\} \label{eqn:bayes2app}
\end{align}
the posterior distribution under this model is given by 
\begin{align}
    p( \pmb{x} | \pmb{y}_1, \dots, \pmb{y}_t) = 
    \mathcal{N}\left( \pmb{x} \ \Bigg| \ \frac{ \sum_{i=1}^{t} \alpha_{i}\pmb{y}_{i}}{\beta + \sum_{i=1}^{t}\alpha_{i}}, \ \left(\beta + \sum_{i=1}^{t}\alpha_{i}\right)^{-1} I \right)
\end{align}
From this we can read off the posterior mean 
\begin{align}
        \pmb{\mu}_{t} &\overset{d}{=} \frac{\sum_{i=1}^{t} \alpha_{i}\pmb{y}_{i}}{\beta + \sum_{i=1}^{t}\alpha_{i}}.
\end{align}
The above expression can further be rewritten as
\begin{align}
    \pmb{\mu}_{t} &= \frac{ \sum_{i=1}^{t-1} \alpha_{i}\pmb{y}_i}{\beta + \sum_{i=1}^{t}\alpha_{t}} + \frac{ \alpha_{t}}{\beta + \sum_{i=1}^{t}\alpha_{t}}\pmb{y}_t \label{eqn:online-normal} \\
    &= \frac{\beta + \sum_{i=1}^{t-1}\alpha_{t}}{\beta + \sum_{i=1}^{t}\alpha_{t}} \pmb{\mu}_{t-1} + \frac{ \alpha_{t}}{\beta + \sum_{i=1}^{t}\alpha_{t}}\pmb{y}_t
\end{align}
giving us the online Bayesian update from Equation \ref{eqn:normal-online-bayes}.
\newpage 
\subsubsection{Consistency of the Normal-Normal Model}
\begin{theorem} \label{thm:norm-norm-app}
(Concentration of posterior mean) Let \( \{ \pmb{y}_1, \ldots, \pmb{y}_t \} \) be observations generated according to equation (\ref{eqn:norm-norm-pmm-marginal}).  Suppose \( \alpha_t \) a known, positive, increasing sequence satisfying \( \lim_{t \to \infty}  \sum_{s=1}^{t} \alpha_s = \Omega(t^{1+\eta}) \) for all $\eta > 0$. Then, the posterior mean \( \pmb{\mu}_t \) of  the Bayesian model in equations (\ref{eqn:normal-prior}) and (\ref{eqn:normal-likelihood}) is consistent, namely:
\begin{align}
    \lim_{t \to \infty} \pmb{\mu}_t = \pmb{x}, \quad \text{almost surely},
\end{align}
with respect to the joint distribution of \( (\pmb{x}, \pmb{y}_1, \pmb{y}_2, \ldots) \) in equation (\ref{eqn:norm-norm-pmm}). 
\end{theorem}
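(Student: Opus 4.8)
The plan is to write the posterior mean explicitly as
\[
\pmb{\mu}_t = \frac{\sum_{s=1}^t \alpha_s \pmb{y}_s}{\beta + \sum_{s=1}^t \alpha_s},
\]
and then decompose it into a "signal" part and a "noise" part. Writing $\pmb{y}_s = \pmb{x} + \pmb{\varepsilon}_s$ with $\pmb{\varepsilon}_s \sim \mathcal{N}(\pmb{0}, \alpha_s^{-1} I)$ independent across $s$ (conditionally on $\pmb{x}$), we get
\[
\pmb{\mu}_t - \pmb{x} = \frac{\sum_{s=1}^t \alpha_s \pmb{\varepsilon}_s}{\beta + \sum_{s=1}^t \alpha_s} - \frac{\beta}{\beta + \sum_{s=1}^t \alpha_s}\,\pmb{x}.
\]
The second term is deterministic given $\pmb{x}$ and vanishes since $\sum_s \alpha_s \to \infty$, so the whole problem reduces to showing the random term $\pmb{Z}_t := \big(\beta + \sum_{s=1}^t \alpha_s\big)^{-1}\sum_{s=1}^t \alpha_s \pmb{\varepsilon}_s \to \pmb{0}$ almost surely.

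Next I would control $\pmb{Z}_t$ coordinatewise; fix a coordinate and drop the boldface. Conditionally on $x$, $W_t := \sum_{s=1}^t \alpha_s \varepsilon_s$ is a sum of independent mean-zero Gaussians with $\operatorname{Var}(\alpha_s \varepsilon_s) = \alpha_s^2 \cdot \alpha_s^{-1} = \alpha_s$, so $W_t \sim \mathcal{N}(0, \sum_{s=1}^t \alpha_s)$ and $Z_t = W_t / (\beta + S_t)$ where $S_t := \sum_{s=1}^t \alpha_s$. Thus $\operatorname{Var}(Z_t) = S_t/(\beta+S_t)^2 = \Theta(1/S_t) \to 0$, giving convergence in probability immediately; the work is upgrading this to almost-sure convergence. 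Here I would invoke a strong law for weighted sums of independent variables — the Kolmogorov/Kronecker route: $M_t := \sum_{s=1}^t \alpha_s \varepsilon_s / (\text{something})$. Concretely, since $\sum_s \operatorname{Var}(\alpha_s \varepsilon_s)/b_s^2 = \sum_s \alpha_s / b_s^2 < \infty$ for $b_s = S_s$ (because $\sum_s \alpha_s/S_s^2 \le \sum_s \int_{S_{s-1}}^{S_s} u^{-2}\,du < \infty$ as $S_s\to\infty$), the Khintchine–Kolmogorov convergence theorem gives that $\sum_s \alpha_s \varepsilon_s / S_s$ converges a.s., and Kronecker's lemma then yields $S_t^{-1}\sum_{s=1}^t \alpha_s \varepsilon_s \to 0$ a.s. Since $b + S_t \sim S_t$, this is exactly $Z_t \to 0$ a.s. Doing this for each of the $d$ coordinates and taking the finite union of null sets finishes the argument; the $\Omega(t^{1+\eta})$ growth assumption in the appendix version is more than enough to make all the series converge, though in fact only $S_t \to \infty$ is truly needed for this Kronecker argument.

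The main obstacle is the almost-sure (as opposed to in-probability or $L^2$) upgrade: one has to be careful that the $\varepsilon_s$ have $s$-dependent variances $\alpha_s^{-1}$ that shrink, while the weights $\alpha_s$ grow, so the summands $\alpha_s\varepsilon_s$ have growing variance $\alpha_s$ and do not themselves go to zero — it is only the normalization by $S_t$ that saves things, which is precisely the setting Kronecker's lemma is built for. An alternative, perhaps cleaner in a Gaussian setting, is a direct Borel–Cantelli argument: along the dyadic-type subsequence where $S_t$ roughly doubles, Gaussian tail bounds plus $\operatorname{Var}(Z_t) = \Theta(1/S_t)$ make $\sum_k \Pr(|Z_{t_k}| > \epsilon)$ summable, and then a maximal inequality for the Gaussian martingale $W_t$ controls the fluctuations between consecutive subsequence points; I would present whichever of these the appendix finds shortest, but both hinge on the same variance computation $\operatorname{Var}(\alpha_s\varepsilon_s)=\alpha_s$.
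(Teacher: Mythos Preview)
Your proof is correct but takes a different route from the paper. The paper's argument is more elementary: after the same variance computation $\operatorname{Var}(\mu_t)=S_t/(\beta+S_t)^2=O(1/S_t)$, it applies Chebyshev directly to the events $A_t=\{|\mu_t-a_tx|>\epsilon\}$ with $a_t=S_t/(\beta+S_t)$, obtains $\sum_t\mathbb{P}(A_t)\lesssim\epsilon^{-2}\sum_t 1/S_t$, and invokes Borel--Cantelli. This is precisely where the growth hypothesis $S_t=\Omega(t^{1+\eta})$ enters --- it is exactly what makes $\sum_t 1/S_t$ finite --- and the paper does not use any subsequence or maximal-inequality refinement. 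Your Khintchine--Kolmogorov plus Kronecker argument is slightly less elementary but, as you correctly observe, requires only $S_t\to\infty$; so you have actually proved a stronger statement than the paper does, and your remark that the $\Omega(t^{1+\eta})$ assumption is ``more than enough'' is on point. The trade-off is transparency: the paper's Chebyshev/Borel--Cantelli route is a two-line computation once the variance is in hand, whereas your route imports a bit more machinery.
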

\begin{proof}
    We give a proof for the one dimensional case with the understanding that it extends to the multidimensional case by applying the same argument to each coordinate individually. The posterior mean under this Bayesian Model is given by 
    \begin{align}
    \mu_{t} &\equiv \mathbb{E}(x | y_{1:t}) \\
    &= \frac{\sum_{s=1}^{t}\alpha_{s} y_s}{b + \sum_{s=1}^{t} \alpha_s} \\
    &= \mathcal{N}\left(\frac{\sum_{s=1}^{t}\alpha_{s}}{\beta + \sum_{s=1}^{t} \alpha_s} x, \frac{\sum_{s=1}^{t} \alpha_{s}}{(\beta + \sum_{s=1}^{t} \alpha_s)^{2}}\right)
\end{align}
    Assuming that $\sum_{s=1}^{t} \alpha_{s} \to \infty$, it we see that $\frac{\sum_{s=1}^{t}\alpha_{s}}{\beta + \sum_{s=1}^{t} \alpha_{s}} \to 1 $ while $\frac{\sum_{s=1}^{t}\alpha_{s}}{(\beta + \sum_{s=1}^{t} \alpha_{s})^{2}} \to 0$. Putting these two things together we see that the mean of $\mathbb{E}(\mu_{t}) \to x$ and $\text{Var}(\mu_{t})  \to 0$. This implies that $\mu_{t} \overset{\mathbb{P}}{\to} x$. 
    
    To obtain almost sure convergence, consider the event $A_{t} = \{ \abs{\mu_t - a_{t}x } > \epsilon \}$ where $a_{t} = \frac{\sum_{s=1}^{t}\alpha_{s}}{\beta + \sum_{s=1}^{t}\alpha_{t}} \to 1$. Using Chebyshev's inequality, we obtain 
\begin{align}
        \mathbb{P}(A_{t}) &\leq \epsilon^{-2} \text{Var}(\mu_{t}) \\
        &= \epsilon^{-2} \frac{\sum_{s=1}^{t} \alpha_{s}}{(\beta + \sum_{s=1}^{t} \alpha_s)^{2}} \\
        &= \epsilon^{-2}O\left(\left(\sum_{s=1}^{t} \alpha_{s}\right)^{-1}\right) \\
        &= \epsilon^{-2} O\left(\frac{1}{t^{1+\eta}}\right)
\end{align}
Thus, there's a constant $C$ such that 
\begin{align}
    \sum_{t}\mathbb{P}(A_{t}) &\lesssim \epsilon^{-2} \sum_{s}\frac{1}{s^{1+\eta}} \\
    &= C \epsilon^{-2}\zeta(1 + \eta) \\ 
    &< \infty
\end{align}
Where $\zeta$ is the Riemann-Zeta function. By the Borel-Cantelli lemma, it follows that with probability $1$, the events $A_{t}$ happen for at most finitely many $t$. In other words, for all $\epsilon > 0$, there is a $T$ such that for all $t > T$, $\abs{\mu_{t} - a_{t}x} < \epsilon$. Therefore $\lim_{t \to \infty} \abs{\mu_{t} - a_{t} x} = \lim_{t\to\infty} \abs{\mu_{t} - x} < \epsilon$. Since $\epsilon$ is an arbitrary real number, it follows that $\lim_{t\to\infty} \abs{\mu_{t} - x} = 0$ with probability $1$, namely $\mu_{t} \overset{a.s}{\to} x$, as desired. 
\end{proof}

\newpage 
\subsubsection{PMM Objective} \label{apx:normal-pmm-obj}
The PMM objective for the Normal-Normal model is proportional to 
\begin{align}
    \mathcal{L}_{\text{PMM}}(\pmb{\varphi}) &= \int p(\pmb{\mu}_{1}, ..., \pmb{\mu}_{t}) \log{\frac{p(\pmb{\mu}_{1}, ..., \pmb{\mu}_{t})}{q_{\pmb{\varphi}}(\pmb{\mu}_{1}, ..., \pmb{\mu}_{t})}}d \pmb{\mu}_{1:t} \\
    &\propto - \int p(\pmb{\mu}_{1}, ..., \pmb{\mu}_{t})  \log{q_{\pmb{\varphi}}(\pmb{\mu}_{1}, ..., \pmb{\mu}_{t})} d\pmb{\mu}_{1:t} \\ 
    &= - \int \int p(\pmb{x}, \pmb{\mu}_{1}, ..., \pmb{\mu}_{t})  \log{q_{\pmb{\varphi}}(\pmb{\mu}_{1}, ..., \pmb{\mu}_{t})} d\pmb{\mu}_{1:t} \\ 
    &= -\mathbb{E}_{\pmb{x} \sim p^{*}(\pmb{x})} \mathbb{E}_{\pmb{\mu}_{1:t} | \pmb{x}} \log{\prod_{s=1}^{t} q_{\pmb{\varphi}}(\pmb{\mu}_{s} | \pmb{\mu}_{s-1})} \\ 
    &= -\mathbb{E}_{\pmb{x} \sim p^{*}(\pmb{x})} \sum_{s} \mathbb{E}_{\pmb{\mu}_{s}, \pmb{\mu}_{s-1} | \pmb{x}} \log{ q_{\pmb{\varphi}}(\pmb{\mu}_{s} | \pmb{\mu}_{s-1})}
\end{align}
From Section \ref{apx:normal-online} we know that 
\begin{align}
    \pmb{\mu}_{s-1} | \pmb{x} &\overset{d}{=} \frac{\sum_{k=1}^{s-1} \alpha_{k}\pmb{y}_{k}}{\beta + \sum_{k=1}^{s-1}\alpha_{k}}
\end{align}
Since $\pmb{y}_{k} \sim \mathcal{N}(\pmb{x}, \alpha_{k}^{-1}I)$, it follows that 
\begin{align}
    \pmb{\mu}_{s-1} | \pmb{x} &\overset{d}{=} \frac{\sum_{k=1}^{s-1} \alpha_{k}\mathcal{N}(\pmb{x}, \alpha_{k}^{-1} I)}{\beta + \sum_{k=1}^{s-1}\alpha_{k}} \\ 
    &= \frac{\mathcal{N}\left(\sum_{k=1}^{s-1} \alpha_{k} \pmb{x} , \sum_{k=1}^{s-1} \alpha_{k} I \right)}{\beta + \sum_{k=1}^{s-1}\alpha_{k}} \\ 
    &= \mathcal{N}\left(\frac{\sum_{k=1}^{s-1} \alpha_{k}}{\beta + \sum_{k=1}^{s-1} \alpha_{k}} \pmb{x} ,  \frac{\sum_{k=1}^{s-1} \alpha_{k}}{(\beta + \sum_{k=1}^{s-1} \alpha_{k})^{2}} I\right)
\end{align}
Moreover, using the online update of equation \ref{eqn:online-normal} the conditional distribution of $\pmb{\mu}_{t}$ given $\pmb{\mu}_{t-1}$ and $\pmb{x}$ is given by 
\begin{align}
    \pmb{\mu}_{s} | \pmb{\mu}_{s-1} ,\pmb{y}_{s} &= \frac{\beta + \sum_{k=1}^{s-1} \alpha_{k}}{\beta + \sum_{k=1}^{s}\alpha_{k}}\pmb{\mu}_{s-1} + \frac{\alpha_{s} \pmb{y}_{s}}{\beta + \sum_{k=1}^{s}\alpha_{k}} \\
    &\overset{d}{=}\frac{\beta + \sum_{k=1}^{s-1} \alpha_{k}}{\beta + \sum_{k=1}^{s}\alpha_{k}}\pmb{\mu}_{s-1} + \frac{\alpha_{s}\mathcal{N}(\pmb{x}, \alpha_{s}^{-1} I )}{\beta + \sum_{k=1}^{s}\alpha_{k}}  \\ 
    &= \frac{\beta + \sum_{k=1}^{s-1} \alpha_{k}}{\beta + \sum_{k=1}^{s}\alpha_{k}}\pmb{\mu}_{s-1} + \frac{\mathcal{N}(\alpha_{s}\pmb{x}, \alpha_{s}I)}{\beta + \sum_{k=1}^{s}\alpha_{k}} \\
    &= \mathcal{N}\left(\frac{\beta + \sum_{k=1}^{s-1} \alpha_{k}}{\beta + \sum_{k=1}^{s}\alpha_{k}}\pmb{\mu}_{s-1} + \frac{\alpha_{s}}{\beta + \sum_{k=1}^{s}\alpha_{k}}\pmb{x} , \frac{\alpha_{s}}{(\beta + \sum_{k=1}^{s}\alpha_{k})^{2}}\right)
\end{align}
Using this, the distribution of $q_{\pmb{\varphi}}(\pmb{\mu}_{s} | \pmb{\mu}_{s-1})$ defined by equations (\ref{eqn:noisy_model_online_approx}) and (\ref{eqn:online_bayes_approx}) is given by 
\begin{align}
    \pmb{\mu}_{s} | \pmb{\mu}_{s-1} &\overset{d}{=} \frac{\beta + \sum_{k=1}^{s-1} \alpha_{k}}{\beta + \sum_{k=1}^{s}\alpha_{k}}\pmb{\mu}_{s-1} + \frac{\mathcal{N}(\alpha_{s}g_{\pmb{\varphi}}(\pmb{\mu}_{s-1}, t), \alpha_{s})}{\beta + \sum_{k=1}^{s}\alpha_{k}} \\ 
    &\overset{d}{=} \mathcal{N}\left(\frac{\beta + \sum_{k=1}^{s-1} \alpha_{k}}{\beta + \sum_{k=1}^{s}\alpha_{k}}\pmb{\mu}_{s-1} +\frac{\alpha_{s}}{\beta + \sum_{k=1}^{s}\alpha_{k}}g_{\pmb{\varphi}}(\pmb{\mu}_{s-1}, s) , \frac{\alpha_{s}}{(\beta + \sum_{k=1}^{s}\alpha_{k})^{2}}\right)
\end{align}
Substituting these into the PMM objective we obtain 
\begin{align}
    \mathcal{L}_{\text{PMM}}(\pmb{\varphi}) &=  -\mathbb{E}_{\pmb{x} \sim p^{*}(\pmb{x})} \sum_{s} \mathbb{E}_{\pmb{\mu}_{s}, \pmb{\mu}_{s-1} | \pmb{x}} \log{ q_{\pmb{\varphi}}(\pmb{\mu}_{s} | \pmb{\mu}_{s-1})} \\ 
    &= \mathbb{E}_{\pmb{x} \sim p^{*}(\pmb{x})} \sum_{s} \mathbb{E}_{\pmb{\mu}_{s}, \pmb{\mu}_{s-1} | \pmb{x}} \frac{(\beta + \sum_{k=1}^{s}\alpha_{k})^{2}}{\alpha_{s}} \norm{\pmb{\mu}_{s} - \frac{\beta + \sum_{k=1}^{s-1} \alpha_{k}}{\beta + \sum_{k=1}^{s}\alpha_{k}}\pmb{\mu}_{s-1} -\frac{\alpha_{s}g_{\pmb{\varphi}}(\pmb{\mu}_{s-1}, s)}{\beta + \sum_{k=1}^{s}\alpha_{k}}}_{2}^{2} 
\end{align}
Using the reparametrization trick, we substitute 
\begin{align}
    \pmb{\mu}_{s} | \pmb{\mu}_{s-1}, \pmb{x} &= \frac{\beta + \sum_{k=1}^{s-1} \alpha_{k}}{\beta + \sum_{k=1}^{s}\alpha_{k}}\pmb{\mu}_{s-1} +\frac{\alpha_{s}\pmb{x}}{\beta + \sum_{k=1}^{s}\alpha_{k}} + \frac{\sqrt{\alpha_{s}}}{\beta + \sum_{k=1}^{s}\alpha_{k}}\pmb{\epsilon}_{s}
\end{align}
into the loss to obtain 
\begin{align}
    \mathcal{L}_{\text{PMM}}(\pmb{\varphi}) &= \mathbb{E}_{\pmb{x} \sim p^{*}(\pmb{x})} \sum_{s} \mathbb{E}_{\pmb{\mu}_{s-1} | \pmb{x}} \mathbb{E}_{\pmb{\epsilon}_{s} \sim \mathcal{N}(0,I)} \frac{(\beta + \sum_{k=1}^{s}\alpha_{k})^{2}}{\alpha_{s}} \norm{\frac{\alpha_{s}}{\beta + \sum_{k=1}^{s}\alpha_{k}} (\pmb{x} - g_{\pmb{\varphi}}(\pmb{\mu}_{s-1}, s)) + \frac{\sqrt{\alpha_{s}}}{\beta + \sum_{k=1}^{s}\alpha_{k}}\pmb{\epsilon}_{s} }_{2}^{2} \\
    &\propto \mathbb{E}_{\pmb{x} \sim p^{*}(\pmb{x})} \sum_{s} \mathbb{E}_{\pmb{\mu}_{s-1} | \pmb{x}} \mathbb{E}_{\pmb{\epsilon}_{s} \sim \mathcal{N}(0,I)} \frac{(\beta + \sum_{k=1}^{s}\alpha_{k})^{2}}{\alpha_{s}} \norm{\frac{\alpha_{s}}{\beta + \sum_{k=1}^{s}\alpha_{k}} (\pmb{x} - g_{\pmb{\varphi}}(\pmb{\mu}_{s-1}, s))}_{2}^{2} \\ 
    &= \mathbb{E}_{\pmb{x} \sim p^{*}(\pmb{x})} \sum_{s} \mathbb{E}_{\pmb{\mu}_{s-1} | \pmb{x}}  \alpha_{s} \norm{\pmb{x} - g_{\pmb{\varphi}}(\pmb{\mu}_{s-1}, s)}_{2}^{2} \\ 
    &= t \cdot  \mathbb{E}_{s \sim U(\{1, ..., t\}) }\mathbb{E}_{\pmb{x} \sim p^{*}(\pmb{x})}  \mathbb{E}_{\pmb{\mu}_{s-1} | \pmb{x}}  \alpha_{s} \norm{\pmb{x} - g_{\pmb{\varphi}}(\pmb{\mu}_{s-1}, s)}_{2}^{2} 
\end{align}
For the schedules that we use in this paper, this objective assigns less weight to timesteps containing less information about $\pmb{x}$ (i.e. those with large $\alpha_{s}$) 

We parametrize the neural network using $g_{\pmb{\varphi}}(\pmb{\mu}_{s-1}, s) = \frac{(\beta + \sum_{k=1}^{s}\alpha_{k})}{\sum_{k=1}^{s}\alpha_{k}} \pmb{\mu}_{t-1} - \frac{1}{\sqrt{\sum_{k=1}^{s}\alpha_{k}}}\epsilon_{\pmb{\varphi}}(\pmb{\mu}_{t-1}, t)$. This parametrization is given motivated by using the reparametrization trick to establish a relationship between $\pmb{x}$ and $\pmb{\mu}_{s-1}$, which is given by $\pmb{x} = \frac{\beta + \sum_{k=1}^{s-1}\alpha_{k}}{\sum_{k=1}^{s-1}\alpha_{k}} \pmb{\mu}_{s-1} - \frac{1}{\sqrt{\sum_{k=1}^{s-1}\alpha_{k}}} \pmb{\epsilon}$ where $\pmb{\epsilon} \sim \mathcal{N}(0, I)$. Using this parametrization the PMM objective is given by 
\begin{align}
     \mathcal{L}_{\text{PMM}}(\pmb{\varphi})  &\propto t \cdot  \mathbb{E}_{s \sim U(\{1, ..., t\}) }\mathbb{E}_{\pmb{x} \sim p^{*}(\pmb{x})}  \mathbb{E}_{\pmb{\mu}_{s-1} | \pmb{x}}  \frac{\alpha_{s}}{\sum_{k=1}^{s}\alpha_{k}} \norm{\pmb{\epsilon} - \pmb{\epsilon}_{\pmb{\varphi}}(\pmb{\mu}_{s-1},s) }_{2}^{2} 
\end{align}
For simplicity, the experiments in this paper drop the $\frac{\alpha_{s}}{\sum_{k=1}^{s}\alpha_{k}}$ weighting factor and rewrite the PMM loss. 
\begin{align}
     \mathcal{L}_{\text{PMM reweighted}}(\pmb{\varphi}) &= t \cdot  \mathbb{E}_{s \sim U(\{1, ..., t\}) }\mathbb{E}_{\pmb{x} \sim p^{*}(\pmb{x})}  \mathbb{E}_{\pmb{\mu}_{s-1} | \pmb{x}} \sum_{k=1}^{s} \alpha_{k} \norm{\pmb{x} - g_{\pmb{\varphi}}(\pmb{\mu}_{s-1}, s)}_{2}^{2} \\
     &= t \cdot  \mathbb{E}_{s \sim U(\{1, ..., t\}) }\mathbb{E}_{\pmb{x} \sim p^{*}(\pmb{x})}  \mathbb{E}_{\pmb{\mu}_{s-1} | \pmb{x}}  \norm{\pmb{\epsilon} - \pmb{\epsilon}_{\pmb{\varphi}}(\pmb{\mu}_{s-1},s) }_{2}^{2} 
\end{align}
When $\alpha_{s}$ is a positive increasing sequence, this weighting scheme is very similar to the original PMM objective in the sense that timesteps containing less information about $\pmb{x}$ recieve less weight. Empirically, we found this modification of the loss to simplify implementation and to have a negligible effect on sample quality. To see why it simplifies implementation, consider the noise schedule used in our experiments 
\begin{align}
    N &:= \text{Number of posterior mean updates (i.e. $3000$ or $5000$).} \\ 
    \alpha_{s} &= \frac{13}{250}e^{13 t_{s}} \delta t \label{eqn:exp_weight} \\
    \delta t &= 1/N \\ 
    t_{s} &= s/N \\ 
    \sum_{k=1}^{s} \alpha_{s} &\approx \int_{0}^{s} \frac{13}{250} e^{13t} dt = \frac{1}{250} (e^{13t} - 1) \label{eqn:sum_int}
\end{align}
Thus, we have that 
\begin{align}
    \frac{\alpha_{s}}{13 \delta t}  \approx \sum_{k=1}^{s} \alpha_{s} + \frac{1}{250} \approx \sum_{k=1}^{s}\alpha_{s}. 
\end{align}
Now, note that 
\begin{align}
    \eta \nabla_{g_{\pmb{\varphi}}} \mathcal{L}_{\text{PMM}} &= \eta \alpha_{s}( \pmb{x} - g_{\pmb{\varphi}}) = \eta \frac{13}{250}e^{13t}\delta t (\pmb{x} - g_{\pmb{\varphi}}) \\
     \nabla_{g_{\pmb{\varphi}}} \mathcal{L}_{\text{PMM reweighted}} &= \sum_{k=1}^{s}\alpha_{s} (\pmb{x} - g_{\pmb{\varphi}}) \approx \frac{1}{250} e^{13 t} (\pmb{x} - g_{\pmb{\varphi}}) 
\end{align}
Thus, choosing $\eta = (13 \delta t)^{-1}$, we see that $\eta \nabla_{g_{\pmb{\varphi}}} \mathcal{L}_{\text{PMM}} \approx \nabla_{g_{\pmb{\varphi}}} \mathcal{L}_{\text{PMM reweighted}}$. However, note that $\nabla_{g_{\pmb{\varphi}}} \mathcal{L}_{\text{PMM}}$ depends on the resolution $\delta t$ used to approximate the sum in equation (\ref{eqn:sum_int}) with an integral. The reason this simplifies the implementation of PMMs is that the gradient of the objective $\nabla \mathcal{L}_{\text{PMM}}(\pmb{\varphi})$ using the exponential weighting of equation (\ref{eqn:exp_weight})  depends on $\delta t$. This means that the training dynamics depend on the number of posterior mean updates. However, replacing the factor of $\alpha_{s}$ with the corresponding sum $\sum_{k=1}^{s} \alpha_{k}$, gets rid of this dependency. This makes the training dynamics less sensitive to the choice of the learning rate, which is why we use the reweighted version of the PMM loss in our experiments. However, we want to emphasize that this choice implicitly corresponds to a simple adjustment of the learning rate. 
\newpage 
\subsubsection{Connection to SDEs and Diffusion}
\begin{theorem}\label{thm:online_bayes_diffusion_app}
    \textbf{(Online Bayesian Inference as a Diffusion Process)} \\
    Consider the update rule for the posterior mean \(\mu_t\) given by \eqref{eqn:online_bayesapp}:
    \begin{align}
        \pmb{\mu}_{t} &= \frac{b + \sum_{s=1}^{t-1}\alpha_{s}}{b + \sum_{s=1}^{t}\alpha_{s}} \pmb{\mu}_{t-1} + \frac{\alpha_{t}}{b + \sum_{s=1}^{t}\alpha_{s}} \pmb{y}_{t}.\label{eqn:online_bayesapp}
    \end{align}
    Let \(f : [0,1] \to \mathbb{R}^{+}\) and consider \(0 = t_{1} < t_{2} < \ldots < t_{T} = 1\) a partition of the unit interval. Moreover, define the sequence \(\alpha_{1}, \ldots, \alpha_{T}\) from \eqref{eqn:bayes2app} by \(\alpha_{s} = f(t_{s}) \delta t_{s}\). In the limit as \(T \to \infty\) and \(\delta t_{s} \to 0\), the discrete updates of \eqref{eqn:online_bayesapp} converge to a diffusion process defined by the following Stochastic Differential Equation (SDE):
    \begin{align}
        d\pmb{\mu}(t) &= f(t)\frac{(\pmb{x} - \pmb{\mu}(t))}{b + \int_{0}^{t} f(\tau) \, d\tau} dt + \frac{\sqrt{f(t)}}{b + \int_{0}^{t} f(\tau) \, d\tau} d\pmb{W}_{t}, \quad 0 \leq t \leq 1 \\ 
        \pmb x &\sim p^{*}(\pmb{x}) \\
        \mu(0) &= 0 
    \end{align}
\end{theorem}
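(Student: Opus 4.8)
The plan is to exploit the fact that both the discrete recursion and the limiting SDE admit explicit solutions, so that the convergence reduces to a Riemann-sum statement for an It\^o integral with a \emph{deterministic} integrand. First I would solve the recursion \eqref{eqn:online_bayesapp} in closed form. Writing $S_t = b + \sum_{s=1}^{t}\alpha_s$ and multiplying through by $S_t$ telescopes the recursion to $S_t\pmb{\mu}_t = S_0\pmb{\mu}_0 + \sum_{s=1}^{t}\alpha_s\pmb{y}_s$; since $\pmb{\mu}_0=0$ and $S_0=b$ this gives $\pmb{\mu}_t = S_t^{-1}\sum_{s=1}^{t}\alpha_s\pmb{y}_s$ (the posterior-mean formula of Appendix \ref{apx:normal-online}). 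Next I would use the reparametrization $\pmb{y}_s = \pmb{x} + \alpha_s^{-1/2}\pmb{\xi}_s$ with $\pmb{\xi}_s\sim\mathcal{N}(0,I)$ i.i.d., and substitute $\alpha_s = f(t_s)\,\delta t_s$, obtaining
\begin{align}
    \pmb{\mu}_t = \frac{1}{S_t}\left( \pmb{x}\sum_{s=1}^{t} f(t_s)\,\delta t_s \;+\; \sum_{s=1}^{t}\sqrt{f(t_s)\,\delta t_s}\,\pmb{\xi}_s \right).
\end{align}

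Then I would pass to the limit term by term as $T\to\infty$ and $\max_s\delta t_s\to 0$. The denominator $S_t$ is a Riemann sum converging to $F(t):=b+\int_0^t f(\tau)\,d\tau$; the first numerator sum converges to $\pmb{x}\int_0^t f(\tau)\,d\tau = \pmb{x}(F(t)-b)$; and, coupling the increments to a Brownian motion via $\sqrt{\delta t_s}\,\pmb{\xi}_s = \pmb{W}_{t_s}-\pmb{W}_{t_{s-1}}$, the stochastic sum $\sum_s \sqrt{f(t_s)}\,(\pmb{W}_{t_s}-\pmb{W}_{t_{s-1}})$ is a left-endpoint Riemann sum converging in $L^2$ (uniformly on compact subintervals of $[0,1)$) to the It\^o integral $\int_0^t\sqrt{f(\tau)}\,d\pmb{W}_\tau$. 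Hence the interpolated trajectories converge to $\pmb{\mu}(t):=F(t)^{-1}\big(\pmb{x}\int_0^t f(\tau)\,d\tau + \int_0^t\sqrt{f(\tau)}\,d\pmb{W}_\tau\big)$. Applying It\^o's formula to this explicit expression, using $F'(t)=f(t)$ and $d(\text{numerator})= f(t)\pmb{x}\,dt + \sqrt{f(t)}\,d\pmb{W}_t$, yields $d\pmb{\mu}(t) = \frac{f(t)(\pmb{x}-\pmb{\mu}(t))}{F(t)}\,dt + \frac{\sqrt{f(t)}}{F(t)}\,d\pmb{W}_t$ with $\pmb{\mu}(0)=0$, which is exactly the claimed SDE; this also identifies $\pmb{\mu}(t)$ as its unique strong solution, the SDE being linear with continuous time-dependent coefficients.

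As a cross-check I would run the standard ``Markov chain to diffusion'' route directly: write the one-step increment $\Delta\pmb{\mu}_s = \frac{\alpha_s}{S_s}(\pmb{x}-\pmb{\mu}_{s-1}) + \frac{\sqrt{\alpha_s}}{S_s}\pmb{\xi}_s$ and verify the infinitesimal moments $\delta t_s^{-1}\,\mathbb{E}[\Delta\pmb{\mu}_s\mid\pmb{\mu}_{s-1}] \to \frac{f(t)(\pmb{x}-\pmb{\mu})}{F(t)}$, $\delta t_s^{-1}\,\mathrm{Cov}[\Delta\pmb{\mu}_s\mid\pmb{\mu}_{s-1}] \to \frac{f(t)}{F(t)^2}I$, with remaining moments $o(\delta t_s)$; a martingale-problem / Euler--Maruyama convergence theorem then delivers the same SDE and confirms the drift and diffusion coefficients.

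The main obstacle I anticipate is the rigorous handling of the stochastic Riemann sum: pinning down the topology of convergence, fixing the coupling of the i.i.d. Gaussian noises to a single Brownian motion, and controlling the error under only mild regularity on $f$ (continuity or Riemann integrability) and on the mesh of the partition. A secondary point requiring care is the behavior near $t=1$, where $F(t)\to\infty$ under the hypothesis that $\int_0^t f\to\infty$: there the diffusion coefficient vanishes and $\pmb{\mu}(t)\to\pmb{x}$, so I would state the convergence on $[0,1-\varepsilon]$ for every $\varepsilon>0$ and then let $\varepsilon\downarrow 0$, invoking Theorem \ref{thm:norm-norm-app} to dispatch the endpoint.
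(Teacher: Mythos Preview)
Your proposal is correct and in fact more careful than the paper's own argument, but it takes a genuinely different primary route. The paper's proof is essentially your ``cross-check'' paragraph done heuristically: it writes the one-step increment as
\[
\pmb{\mu}(t_s)-\pmb{\mu}(t_{s-1}) = \frac{(\pmb{x}-\pmb{\mu}(t_{s-1}))f(t_s)}{b+\sum_{s'} f(t_{s'})\delta t_{s'}}\,\delta t_s + \frac{\sqrt{f(t_s)}}{b+\sum_{s'} f(t_{s'})\delta t_{s'}}\,\pmb{\epsilon}_{t_s}\sqrt{\delta t_s},
\]
recognizes this as an Euler--Maruyama step, and then simply declares that the continuum limit is the stated SDE, without invoking a martingale-problem or convergence theorem and without discussing coupling, regularity of $f$, or the endpoint $t=1$.

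Your primary route---solve the recursion in closed form, pass the deterministic and stochastic Riemann sums to their integral limits, then apply It\^o's formula to verify the SDE---is not in the paper. What it buys is an explicit identification of the limit process (so uniqueness is automatic from the linear SDE), a clean $L^2$ convergence statement for the stochastic sum with deterministic integrand, and a natural handling of the $t\to 1$ singularity. The paper's increment-based route is shorter and more transparent as a formal derivation but leaves exactly the gaps you flagged (topology of convergence, coupling, mesh control) unaddressed.
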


\begin{proof}
    Let $\pmb\mu(t_{s}) \equiv \pmb\mu_{s}$. Using this notation, the posterior update rule is given by 
    \begin{align}
        \pmb\mu(t_{s}) &= \frac{b + \sum_{s'=1}^{s-1}f(t_{s'}) \delta t_{s'}}{b + \sum_{s'=1}^{s}f(t_{s'}) \delta t_{s'}}\pmb\mu(t_{s-1}) + \frac{f(t_{s}) \delta t_{s}}{b + \sum_{s'=1}^{s}f(t_{s'}) \delta t_{s'}}\pmb{y}_{s} 
    \end{align}
    Substituting \(\pmb{y}_s \sim \mathcal{N}(\pmb{x}, (f(t_s) \delta t_s)^{-1}I)\), we have:
    \begin{align}
        \pmb\mu(t_{s}) &= \frac{b + \sum_{s'=1}^{s-1}f(t_{s'}) \delta t_{s'}}{b + \sum_{s'=1}^{s}f(t_{s'}) \delta t_{s'}}\pmb\mu(t_{s-1}) + \frac{f(t_{s}) \delta t_{s}}{b + \sum_{s'=1}^{s}f(t_{s'}) \delta t_{s'}}\mathcal{N}(\pmb{x}, (f(t_{s}) \delta t_{s})^{-1}I) 
    \end{align}
    Rearranging terms:
    \begin{align}
        \pmb\mu(t_{s}) - \pmb\mu(t_{s-1}) &= \frac{ - f(t_{s}) \delta t_{s}}{b + \sum_{s'=1}^{s}f(t_{s'}) \delta t_{s'}} \pmb\mu(t_{s-1}) + \left( \frac{f(t_{s}) \delta t_{s}\pmb{x} + \sqrt{f(t_{s})} \epsilon_{t_{s}} \sqrt{\delta t_{s}}}{b + \sum_{s'=1}^{s}f(t_{s'}) \delta t_{s'}}  \right) \\ 
        &= \frac{(\pmb{x} - \pmb\mu(t_{s-1})) f(t_{s})}{b + \sum_{s'=1}^{s}f(t_{s'}) \delta t_{s'}}\delta t_{s} + \frac{\sqrt{f(t_{s})}}{b + \sum_{s'=1}^{s}f(t_{s'}) \delta t_{s'}}\pmb \epsilon_{t_{s}} \sqrt{\delta t_{s}} 
    \end{align}
    Where $\pmb \epsilon_{t_{s}}$ is an independent standard normal random variable. Taking the continuum limit $\delta t_{s} \to 0$ and $T \to \infty$, this process converges to the diffusion process 
    \begin{align}
        d\pmb\mu(t) = \frac{(\pmb{x} - \pmb\mu(t))f(t)}{b + \int_{0}^{t} f(\tau) d\tau} dt + \frac{\sqrt{f(t)}}{b + \int_{0}^{t} f(\tau) d\tau}d\pmb W_{t}
    \end{align}
\end{proof}

\newpage 

\subsection{Diriclet-Categorical Model}\label{apx:dircat-pmm-obj}
\subsubsection{Posterior Mean Updates} \label{apx:dir-cat-pmm-update}
The posterior distribution of the Dirichlet-Categorical model is given by
\begin{align}
\pmb{x}_c | \pmb{y}_{1:t, c}, w_{1:t, c} &\sim \text{Dirichlet}_{V}\left( \frac{\mathbbm{1}_{V}}{K} + \sum_{t'=1}^{t} \pmb{y}_{t'c} \right)
\end{align}
This closed-form posterior allows us to update of our beliefs about the true tokens as we observe more noisy versions.
The posterior mean is given by:
\begin{align}
\pmb{\mu}_{tc} &= \mathbb{E}\left(\pmb{x}_c | \pmb{y}_{1:t, c}, w_{1:t, c} \right) \\
&= \frac{\frac{\mathbbm{1}_{V}}{K} +\sum_{t'=1}^{t} \pmb{y}_{t'c} }{\frac{V}{K} +\sum_{t'=1}^{t}\sum_{d=1}^{V}y_{t'c}^{(d)} }
\end{align}
To simplify notation, let $N_{tc} = \sum_{t'=1}^{t}\sum_{d=1}^{V}y_{t'c}^{(d)}$ be the number of non-mask tokens observed up to timestep $t$. Then, we can express the distribution of $\pmb{\mu}_{tc} | \pmb{\mu}_{t-1,c}, \pmb{x}_c$ as:
\begin{align}
    \pmb{\mu}_{tc} | \pmb{\mu}_{t-1,c}, \pmb{x} &= \begin{cases}
        \frac{\frac{\mathbbm{1}_{V}}{K} + N_{t-1,c}}{\frac{V}{K} + N_{t-1,c} + 1} \pmb{\mu}_{t-1,c} + \frac{\pmb{\tilde{y}}_{tc}}{\frac{V}{K} + N_{t-1,c} + 1} \quad \text{with probability } w_{tc} \\
        \pmb{\mu}_{t-1,c} \quad \text{with probability } 1 - w_{tc} 
    \end{cases} \label{text1} \\ 
    \pmb{\tilde{y}}_{tc} &\sim \text{Cat}_{V}(x_{c}^{(1)}, ..., x_{c}^{(V)}) \label{text2}
\end{align}
To use a non-Informative Dirichlet prior, we take $K \to \infty$, and results in the following update equations
\begin{align}
    \pmb{\mu}_{sc} | (\pmb{\mu}_{s-1,c} = \frac{\mathbbm{1}_{V}}{V}, \pmb{x})     &\overset{d}{=} 
    \begin{cases}
         \text{Cat}(\pmb{x}_{c}) \quad \text{w/prob } \alpha_{tc} \\
       \frac{\mathbbm{1}_{V}}{V} \quad \text{w/prob } 1 - \alpha_{tc} 
    \end{cases}  \\ 
    \pmb{\mu}_{tc} | (\pmb{\mu}_{t-1,c} \neq \frac{\mathbbm{1}_{V}}{V} , \pmb{x}) &= \pmb{\mu}_{t-1,c}.
\end{align}

By applying this update, the posterior mean simplifies substantially, becoming either (a) a uniform distribution over all tokens or (b) a one-hot encoded vector. This means that the mean vector is no longer a dense vector, which leads to massive gains in computational efficiency. 
\newpage 
\subsubsection{Consistency of Dirichlet Categorical PMM}
\begin{theorem}
    (Concentration of posterior mean: Dirichlet-Categorical) Consider a Dirichlet Categorical PMM for text of context length $C$. Suppose that for all $c \in \{ 1, ..., C \}$, we have $\prod_{s=1}^{t} (1 - \alpha_{tc}) \to 0$. Then, the Dirichlet-Categorical model is consistent under a non-informative Dirichlet prior. 
\end{theorem}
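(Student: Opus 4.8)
The plan is to exploit the especially simple posterior-mean dynamics under the non-informative Dirichlet prior derived in equations (\ref{eqn:online_bayes_text1})--(\ref{eqn:online_bayes_text2}): for each coordinate $c \in \{1,\dots,C\}$, the trajectory $\pmb{\mu}_{tc}$ stays pinned at the uniform vector $\mathbbm{1}_V/V$ until the first time $\tau_c$ at which a non-mask token is drawn at position $c$, at which instant it is absorbed to the value $\pmb{y}_{\tau_c,c}^{(1:V)}$ and never moves again. So the entire argument reduces to two facts: that each absorption time $\tau_c$ is almost surely finite, and that upon absorption the trajectory lands exactly on $\pmb{x}_c$.

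The key reduction is that in the augmented target (equations (\ref{eqn:text_data})--(\ref{eqn:text_gen_cat})), the true token $\pmb{x}_c$ is a vertex of the simplex, i.e. one-hot; hence the conditional law $\mathrm{Cat}_V(x_c^{(1)},\dots,x_c^{(V)})$ appearing in (\ref{eqn:online_bayes_text1}) is a point mass at $\pmb{x}_c$. Consequently, on the event $\{\tau_c<\infty\}$ one has $\pmb{\mu}_{tc} = \pmb{y}_{\tau_c,c}^{(1:V)} = \pmb{x}_c$ for every $t \ge \tau_c$ — the trajectory does not merely converge to $\pmb{x}_c$, it equals $\pmb{x}_c$ after a finite time. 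This is where the discreteness of the text data is essential.

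It then remains to show $\tau_c < \infty$ almost surely for each fixed $c$. Conditionally on $\pmb{x}$, the events $\{\pmb{y}_{sc}$ is the mask token$\}$ are independent across $s$, each with probability $1-\alpha_{sc}$, since the first $V$ categories of (\ref{eqn:text_gen_cat}) carry total mass $\alpha_{sc}\sum_d x_c^{(d)} = \alpha_{sc}$. Writing $B_{tc}$ for the event that coordinate $c$ is masked at all of positions $1,\dots,t$, the set $\{\tau_c=\infty\}=\bigcap_{t\ge1}B_{tc}$ is a decreasing intersection with $\mathbb{P}(B_{tc}\mid\pmb{x}) = \prod_{s=1}^t(1-\alpha_{sc})$, so continuity of measure from above gives $\mathbb{P}(\tau_c=\infty\mid\pmb{x}) = \lim_{t\to\infty}\prod_{s=1}^t(1-\alpha_{sc}) = 0$ by hypothesis; taking expectation over $\pmb{x}$ yields $\mathbb{P}(\tau_c=\infty)=0$. (One can equivalently phrase this via Borel--Cantelli, but the monotone-limit route is cleanest.)

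Finally I would union-bound over the finitely many coordinates: on the probability-one event $\bigcap_{c=1}^C\{\tau_c<\infty\}$, define $T = \max_{c\le C}\tau_c < \infty$; then for all $t\ge T$ and all $c$ we have $\pmb{\mu}_{tc}=\pmb{x}_c$, hence $\pmb{\mu}_t = \pmb{x}$, which in particular gives $\lim_{t\to\infty}\pmb{\mu}_t = \pmb{x}$ almost surely, as required. There is no genuine analytic obstacle — the proof is essentially a continuity-of-measure argument — so the only points needing care are the one-hotness reduction of the previous paragraph, reading off the correct per-step masking probability $1-\alpha_{sc}$ from the likelihood (\ref{eqn:text_gen_cat}), and combining the $C$ null events so that the stopping time $T$ is finite simultaneously across all coordinates.
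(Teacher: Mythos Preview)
Your proof is correct and rests on the same two observations as the paper's: under the non-informative prior each coordinate $\pmb{\mu}_{tc}$ is either $\mathbbm{1}_V/V$ or exactly $\pmb{x}_c$, and the probability of the former at time $t$ is $\prod_{s=1}^t(1-\alpha_{sc})$, which is then union-bounded over $c\le C$.

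The one substantive difference is the mode of convergence. The paper simply bounds $\mathbb{P}(\pmb{\mu}_t\neq\pmb{x})\le\sum_c\prod_{s=1}^t(1-\alpha_{sc})\to0$ and concludes convergence in probability. Your hitting-time formulation (defining $\tau_c$, showing $\mathbb{P}(\tau_c=\infty)=0$ by continuity from above, then setting $T=\max_c\tau_c$) exploits the absorption structure to deliver the stronger statement that $\pmb{\mu}_t=\pmb{x}$ for all $t\ge T$ almost surely, hence $\pmb{\mu}_t\to\pmb{x}$ almost surely. Since the paper's own definition of consistency asks for almost sure convergence, your argument is in fact the more complete one; the paper's proof as written stops short of this. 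Your explicit remark that the one-hotness of $\pmb{x}_c$ collapses $\mathrm{Cat}_V(\pmb{x}_c)$ to a point mass is also a point the paper leaves implicit.
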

\begin{proof}
    The probability that $\pmb{\mu}_{tc} = \frac{\mathbbm{1}_{V}}{V}$ is given by 
    \begin{align}
        \mathbb{P}\left( \pmb{\mu}_{tc} = \frac{\mathbbm{1}_{V}}{V} \right) &= \prod_{s=1}^{t}(1-\alpha_{tc}) \to 0 
    \end{align}
    Under a non-informative prior, we also know that $\pmb{\mu}_{tc} \neq \frac{\mathbbm{1}_{V}}{V} \iff \pmb{\mu}_{tc} = \pmb{x}_{tc}$. Thus, 
    \begin{align}
        \mathbb{P}\left(\pmb{\mu}_{t} \neq \pmb{x} \right) &= \mathbb{P}\left(\exists c : \pmb{\mu}_{tc} = \frac{\mathbbm{1}_{V}}{V}\right) \\ 
        &\leq \sum_{c} \prod_{s=1}^{t}(1-\alpha_{tc}) \to 0 
    \end{align}
    This implies that $\pmb{\mu}_{t} \to \pmb{x}$ in probability as $t \to \infty$. 
\end{proof}

\newpage 
\subsubsection{PMM Objective using a Non-Informative Prior} \label{sec:non_info_loss}
We calculate the PMM objective under a non-informative Dirichlet prior using the distribution $q_{\pmb{\varphi}}(\pmb{\mu}_{1}, ..., \pmb{\mu}_{t})$ of equations (\ref{eqn:apx_online_bayes_text1}) and (\ref{eqn:apx_online_bayes_text2}) 
\begin{align}
    \mathcal{L}_{\text{DirCat}}(\pmb{\varphi})  &\propto - \sum_{t=1}^{T} \mathbb{E}_{x, \pmb{\mu}_{t-1}} \mathbb{E}_{\pmb{\mu}_{t} | \pmb{\mu}_{t-1}} \log{q_{\pmb{\varphi}}(\pmb{\mu}_{t} | \pmb{\mu}_{t-1})} \\ 
    &=  - \sum_{t=1}^{T} \mathbb{E}_{x, \pmb{\mu}_{t-1}} \mathbb{E}_{\pmb{\mu}_{t} | \pmb{\mu}_{t-1}} \sum_{c} \log{q_{\pmb{\varphi}}(\pmb{\mu}_{tc} | \pmb{\mu}_{t-1})} \\
    &= - \sum_{tc}\mathbb{E}_{x, \pmb{\mu}_{t-1}} \mathbb{E}_{\pmb{\mu}_{tc} | \pmb{\mu}_{t-1}}  \log{q_{\pmb{\varphi}}(\pmb{\mu}_{tc} | \pmb{\mu}_{t-1})} \\
    &= - \sum_{tc}\mathbb{E}_{x, \pmb{\mu}_{t-1, - c}, \pmb{\mu}_{t-1,c}} \mathbb{E}_{\pmb{\mu}_{tc} | \pmb{\mu}_{t-1,c}}  \log{q_{\pmb{\varphi}}(\pmb{\mu}_{tc} | \pmb{\mu}_{t-1})}\\
    &= - \sum_{tc}\mathbb{E}_{x, \pmb{\mu}_{t-1, - c}, \pmb{\mu}_{t-1,c}} 1\left(\pmb{\mu}_{t-1,c} = 1/V \right)\mathbb{E}_{\pmb{\mu}_{tc} | \pmb{\mu}_{t-1,c} = 1/V}  \log{q_{\pmb{\varphi}}(\pmb{\mu}_{tc} | \pmb{\mu}_{t-1})} \\
    &+ 1(\pmb{\mu}_{t-1,c} \neq 1/V) \mathbb{E}_{\pmb{\mu}_{tc} | \pmb{\mu}_{t-1,c} \neq 1/V}  \log{q_{\pmb{\varphi}}(\pmb{\mu}_{tc} | \pmb{\mu}_{t-1})} \\
    &= - \sum_{tc}\mathbb{E}_{x, \pmb{\mu}_{t-1, - c}, \pmb{\mu}_{t-1,c}} 1\left(\pmb{\mu}_{t-1,c} = 1/V \right)\mathbb{E}_{\pmb{\mu}_{tc} | \pmb{\mu}_{t-1,c} = 1/V}  \log{q_{\pmb{\varphi}}(\pmb{\mu}_{tc} | \pmb{\mu}_{t-1})} \\ 
    &= - \sum_{tc} \mathbb{E}_{x, \pmb{\mu}_{t-1, - c}, \pmb{\mu}_{t-1,c}} 1\left(\pmb{\mu}_{t-1,c} = 1/V \right) \alpha_{tc} \log{f_{\pmb{\varphi}}^{(x_{c})}(\pmb{\mu}_{t-1})_{c}}
\end{align}

\newpage
\subsubsection{Continuous-time Objective for the Dirichlet-Categorical PMM} \label{apx:dir-cat-cts}
To obtain a continuous-time formulation of the Dirichlet Categorical PMM model we consider a partition of the unit interval $0 = t_{0} <  t_{1} < ... < t_{T} = 1$ and define $\alpha_{sc} = f_{c}(t_{s})\delta t_{s}$ where $f$ is a positive function defined on the unit interval $[0,1]$. Using this notation we index the posterior mean using this partition $\pmb{\mu}_{t_{s}}$. With a non-informative prior, the continuous time formulation of the Posterior Mean Matching Objective is given by 

\begin{align}
    \mathcal{L}_{\text{DirCat}}^{(\infty)}(\pmb{\varphi}) &\propto - \sum_{s=1}^{T} \mathbb{E}_{x, \pmb{\mu}_{t_{s}-\delta t_{s}}} \mathbb{E}_{\pmb{\mu}_{t_{s}} | \pmb{\mu}_{t_{s} - \delta t_{s} }} \sum_{c} \log{q_{\pmb{\varphi}}(\pmb{\mu}_{t_{s}, c} | \pmb{\mu}_{t-\delta t_{s}}) } \\ 
    &= - \sum_{s=1}^{T} \mathbb{E}_{x, \pmb{\mu}_{t_{s}-\delta t_{s}}}\sum_{c} \mathbb{E}_{\pmb{\mu}_{t_{s}, c} | \pmb{\mu}_{t_{s} - \delta t_{s}, c = 1/V}} 1( \pmb{\mu}_{t_{s} - \delta t_{s}, c } = 1/V) \log{q_{\varphi}(\pmb{\mu}_{t_{s},c} | \pmb{\mu}_{t_{s} - \delta t_{s}})} \\
    & +  \mathbb{E}_{\pmb{\mu}_{t_{s},c} | \pmb{\mu}_{t_{s} - \delta t_{s}, c \neq 1/V}}1( \pmb{\mu}_{t_{s} - \delta t_{s}, c } \neq 1/V) \log{q_{\varphi}(\pmb{\mu}_{t_{s},c} | \pmb{\mu}_{t_{s} - \delta t_{s}})} \\
    &= - \sum_{s=1}^{T} \mathbb{E}_{x, \pmb{\mu}_{t_{s}-\delta t_{s}}}  \sum_{c} \mathbb{E}_{\pmb{\mu}_{t_{s}, c} | \pmb{\mu}_{t_{s} - \delta t_{s}, c = 1/V}} 1( \pmb{\mu}_{t_{s} - \delta t_{s}, c } = 1/V) \log{q_{\varphi}(\pmb{\mu}_{t_{s},c} | \pmb{\mu}_{t_{s} - \delta t_{s}})} \\ 
    &= - \sum_{s=1}^{T} \mathbb{E}_{x, \pmb{\mu}_{t_{s}-\delta t_{s}}} \sum_{c} 1( \pmb{\mu}_{t_{s} - \delta t_{s}, c } = 1/V) f(t_{s}) \delta t_{s} \log{\hat{x}_{\pmb{\varphi}}^{x_{c}}(\pmb{\mu}_{t_{s}-\delta t_{s} })_{c}} \\
    &\to - \int_{0}^{1} \mathbb{E}_{x, \pmb{\mu}_{s^{-}}} \sum_{c} 1(\pmb{\mu}_{s^{-},c} = 1/V) f(s) \log{\hat{x}_{\pmb{\varphi}}^{(x_{c})}(\pmb{\mu}_{s^{-}})} ds \label{eqn:cont-dir-cat}
\end{align}
To obtain a Monte Carlo estimator of the PMM objective, we need to obtain samples from $\pmb{\mu}_{s-,c}$ at an arbitrary time-step. This is possible by noting that the event $\pmb{\mu}_{s-,c} = 1/V$ has probability 
\begin{align}
    \mathbb{P}(\pmb{\mu}_{t_{s},c} = 1/V) &= 1 - \prod_{s'=1}^{s}(1 - \alpha_{s'c}) \\
    &= 1 - \prod_{s'=1}^{s} ( 1 - f_{c}(t_{s'}) \delta t_{s'}) \\
    &\approx 1 - \prod_{s'=1}^{s} \exp{- f_{c}(t_{s'}) \delta t_{s'} } \\
    &= 1 - \exp{ - \sum_{s'=1}^{s} f_{c}(t_{s'}) \delta t_{s'}} \overset{\delta t \to 0 }{\to} 1 - \exp{- \int_{0}^{s} f(\tau) d\tau} 
\end{align}
Otherwise, $\pmb{\mu}_{s-,c} = \pmb{x}_{c}$ . 

\newpage 
\subsection{Details of the Gamma-Poisson PMM}
\subsubsection{Online Bayesian Updates}\label{apx:poisson-online}
\paragraph{Notation} If $\pmb{x} \in \mathbb{N}^{d}$, we write $\pmb{x} \sim \text{Poisson}(\lambda)$ to mean that each coordinate of $\pmb{x}$ is sampled independently from a Poisson distribution with rate parameter $\lambda$. Similarly, if $\pmb{x} \in \mathbb{R}^{d}$ we write $\pmb x \sim \Gamma(\beta_{1}, \beta_{2})$ to mean that each coordinate of the vector $\pmb x$ is sampled from a Gamma distribution with shape and rate parameters $\beta_{1}$ and $\beta_{2}$, respectively. 

For the Gamma-Poisson Bayesian model:
\begin{align}
     \pmb{x} &\sim \Gamma(\beta_{1}, \beta_{2}) \\
     \pmb{y}_{s} | \pmb{x} &\sim \text{Poisson}( \alpha_{t}\pmb{x}) , \quad \forall s \in \{1, ..., t\} 
\end{align}
the posterior distribution under this model is given by 
\begin{align}
    p( \pmb{x} | \pmb{y}_1, \dots, \pmb{y}_t) = 
    \text{Poisson}\left(\pmb{x}; \frac{\beta_{1} + \sum_{s=1}^{t} \pmb y_{t}}{\beta_{2} + \sum_{s=1}^{t}\alpha_{t}} \right)
\end{align}
From this we can read off the posterior mean 
\begin{align}
        \pmb{\mu}_{t} &\overset{d}{=} \frac{\beta_{1} + \sum_{s=1}^{t} \pmb y_{t}}{\beta_{2} + \sum_{s=1}^{t}\alpha_{t}}
\end{align}
Note that we can rewrite the above expression as:
\begin{align}
    \pmb{\mu}_t &= \frac{\beta_1 + \sum_{i=1}^{t-1} \pmb{y}_i}{\beta_2 + \sum_{i=1}^t \alpha_i} + \frac{1}{\beta_2 + \sum_{i=1}^t \alpha_i}\pmb{y}_t \\
    &= \frac{\beta_2 + \sum_{i=1}^{t-1} \alpha_i}{\beta_2 + \sum_{i=1}^t \alpha_i}\pmb{\mu}_{t-1} + \frac{1}{\beta_2 + \sum_{i=1}^t \alpha_i}\pmb{y}_t,
\end{align}
giving us the following expressions for the online Bayesian update for the Gamma-Possion model:
\begin{align}
    \boxed{\pmb{\mu}_t = \frac{A_t}{A_{t-1}}\pmb{\mu}_{t-1} + A_t \pmb{y}_t, }
\end{align}
where $A_t = \left(\beta_2 + \sum_{i=1}^t \alpha_i \right)^{-1}$.

\newpage 
\subsubsection{Consistency of Gamma-Poisson Model}
\begin{theorem}
    Let $ \pmb{x}^* > 0 $ be a sample drawn from some underlying true distribution $\pmb x^* \sim p^*(\pmb x)$, and let $ \{ \pmb y_1, \pmb y_2, \ldots, \pmb y_t \} $ be observations generated according to the following noise model:
    \begin{align}
         \pmb{y}_{t} | x^* \sim \text{Pois}(\alpha_{t} \pmb x^*)
    \end{align}
    where \( \alpha_s \) is a known, positive, increasing sequence satisfying \( \lim_{t \to \infty} \sum_{s=1}^{t} \alpha_s = O(t^{1 + \eta}) \to \infty \) with $\eta$ being an arbitrarily small positive number. Then, the posterior mean \( \pmb{\mu}_t \) of the following Bayesian model, 
    \begin{align}
        \pmb{x} &\sim \Gamma(\beta_{1}, \beta_{2}), \label{eqn:gp-prior}\\
        \pmb{y}_{t} | &\pmb{x} \sim \text{Pois}(\alpha_{t}\pmb{x}), \label{eqn:gp-likelihood}
    \end{align} 
    is consistent, namely:
\begin{align}
    \lim_{t \to \infty} \pmb{\mu}_t = \pmb x^*, \quad \text{almost surely}.
\end{align}
\end{theorem}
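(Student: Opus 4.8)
The plan is to follow the template of the Normal--Normal consistency proof (Theorem~\ref{thm:norm-norm-app}), substituting additivity of the Poisson distribution for the Gaussian conjugacy computation. Because the noise model $\pmb{y}_s\mid\pmb{x}\sim\mathrm{Pois}(\alpha_s\pmb{x})$ acts coordinate-wise and the Gamma prior is a product over coordinates, it suffices to prove the statement in one dimension and then take a union bound over the $d$ coordinates at the end. So fix a coordinate, let $x^{*}>0$ be its true value, and set $\Lambda_{t}=\sum_{s=1}^{t}\alpha_{s}$ and $S_{t}=\sum_{s=1}^{t}y_{s}$. By the online update of Appendix~\ref{apx:poisson-online}, the posterior mean is $\mu_{t}=(\beta_{1}+S_{t})/(\beta_{2}+\Lambda_{t})$, and since the $y_{s}\sim\mathrm{Pois}(\alpha_{s}x^{*})$ are independent, Poisson additivity gives $S_{t}\sim\mathrm{Pois}(\Lambda_{t}x^{*})$, so $\mathbb{E}[S_{t}]=\mathrm{Var}(S_{t})=\Lambda_{t}x^{*}$.

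First I would record the two elementary facts that drive everything. From the closed form, $\mathbb{E}[\mu_{t}]=(\beta_{1}+\Lambda_{t}x^{*})/(\beta_{2}+\Lambda_{t})\to x^{*}$ and $\mathrm{Var}(\mu_{t})=\Lambda_{t}x^{*}/(\beta_{2}+\Lambda_{t})^{2}=O(\Lambda_{t}^{-1})$, both because the hypothesis forces $\Lambda_{t}\to\infty$ (the sequence $\alpha_{s}$ is positive and increasing with divergent partial sums). Together these give $\mu_{t}\to x^{*}$ in probability; they also pin down the correct centering.

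To upgrade to almost sure convergence I would invoke Chebyshev together with Borel--Cantelli, exactly as in Theorem~\ref{thm:norm-norm-app}. Fix $\epsilon>0$, write $a_{t}=\Lambda_{t}/(\beta_{2}+\Lambda_{t})$ and $b_{t}=\beta_{1}/(\beta_{2}+\Lambda_{t})$ so that $\mathbb{E}[\mu_{t}]=a_{t}x^{*}+b_{t}$, and let $A_{t}=\{|\mu_{t}-a_{t}x^{*}-b_{t}|>\epsilon\}$. Chebyshev gives $\mathbb{P}(A_{t})\le\epsilon^{-2}\mathrm{Var}(\mu_{t})=\epsilon^{-2}O(\Lambda_{t}^{-1})$, and under the growth rate $\Lambda_{t}=\Omega(t^{1+\eta})$ this is $O(\epsilon^{-2}t^{-(1+\eta)})$, so $\sum_{t}\mathbb{P}(A_{t})<\infty$ (bounded by a constant multiple of $\zeta(1+\eta)$). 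Borel--Cantelli then shows $A_{t}$ occurs for only finitely many $t$ almost surely, and since $a_{t}x^{*}+b_{t}\to x^{*}$ we conclude $\mu_{t}\to x^{*}$ almost surely for this coordinate. Intersecting the $d$ full-probability events from the individual coordinates yields $\pmb{\mu}_{t}\to\pmb{x}^{*}$ almost surely; since the argument is valid for every fixed $\pmb{x}^{*}>0$, Fubini upgrades it to almost sure convergence under the joint law of $(\pmb{x}^{*},\pmb{y}_{1},\pmb{y}_{2},\ldots)$.

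I expect the main obstacle to be mostly cosmetic rather than substantive: unlike the Normal--Normal case, $\mu_{t}$ is not exactly Gaussian (it is an affine image of a Poisson sum), so one cannot ``read off'' its distribution --- but the Chebyshev/Borel--Cantelli route only uses the first two moments, so nothing is lost. The one point that genuinely needs care is the growth condition: it should be stated as the lower bound $\Lambda_{t}=\Omega(t^{1+\eta})$ rather than $O(t^{1+\eta})$, since the summability step requires $\sum_{t}\Lambda_{t}^{-1}<\infty$; the monotonicity of $\alpha_{s}$ is precisely what makes such schedules (e.g.\ the exponential one used in the experiments) natural and guarantees the bound.
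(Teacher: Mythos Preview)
Your proposal is essentially the paper's own proof: reduce to one coordinate, write $\mu_t=(\beta_1+S_t)/(\beta_2+\Lambda_t)$ with $S_t\sim\mathrm{Pois}(\Lambda_t x^*)$, compute the mean and variance to get convergence in probability, then apply Chebyshev plus Borel--Cantelli to the events centered at $\mathbb{E}[\mu_t]$ to upgrade to almost sure convergence. Your remark that the growth condition must read $\Lambda_t=\Omega(t^{1+\eta})$ rather than $O(t^{1+\eta})$ is correct and matches how the paper actually uses the hypothesis in the summability step (and how it is stated in the companion Normal--Normal theorem).
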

\begin{proof}
    We give a proof for the one dimensional case with the understanding that it extends to the multidimensional case by applying the same argument to each coordinate individually. Under the Bayesian model (Equations \ref{eqn:gp-prior} and \ref{eqn:gp-likelihood}), we have that the posterior
    \begin{align}
        x | y_{1:t} \sim  \Gamma\left(\beta_{1} + \sum_{s=1}^{t} y_s, \beta_{2} + \sum_{s=1}^{t}\alpha_{s} \right)
    \end{align}
    The posterior mean under this Bayesian Model is given by 
    \begin{align}
    \mu_{t} &\equiv \mathbb{E}(x | y_{1:t}) \\
    &= \frac{\beta_{1} + \sum_{s=1}^{t} y_s}{\beta_{2} + \sum_{s=1}^{t}\alpha_{s}} =  \frac{\beta_{1} + \sum_{s=1}^{t} \text{Pois}(\alpha_{s} x)}{\beta_{2} + \sum_{s=1}^{t}\alpha_{s}}
    \end{align}
    Where the last equality follows by equation (\ref{eqn:gp-likelihood}). Now we have that:
    \begin{align*}
        \mathbb{E}( \mu_{t} )  &= \mathbb{E}\left( \frac{\beta_{1} + \sum_{s=1}^{t} \text{Pois}(\alpha_{s} x)}{\beta_{2} + \sum_{s=1}^{t}\alpha_{s}} \right) = 
        \frac{\beta_{1} +\mathbb{E}\left(\sum_{s=1}^{t} \text{Pois}(\alpha_{s} x^*) \right)}{\beta_{2} + \sum_{s=1}^{t}\alpha_{s}} \\
        &= \frac{\beta_{1} +\sum_{s=1}^{t} \alpha_{s} x^*}{\beta_{2} + \sum_{s=1}^{t}\alpha_{s}} = \frac{\beta_{1}}{\beta_{2} + \sum_{s=1}^{t}\alpha_{s}} + \frac{\sum_{s=1}^{t} \alpha_{s}}{\beta_{2} + \sum_{s=1}^{t}\alpha_{s}} x^*
    \end{align*}
    Assuming that $\sum_{s=1}^{t} \alpha_{s} \to \infty$, we see that $\frac{\beta_{1}}{\beta_{2} + \sum_{s=1}^{t}\alpha_{s}} \to 0$ and $\frac{\sum_{s=1}^{t} \alpha_{s}}{\beta_{2} + \sum_{s=1}^{t}\alpha_{s}} \to 1 $, showing  $\mathbb{E}( \mu_{t} ) \to x^*$. Next,
    \begin{align*}
        \text{Var}( \mu_{t} )  &= \frac{1}{(\beta_{2} + \sum_{s=1}^{t}\alpha_{s})^2}  \text{Var}\left( \beta_{1} + \sum_{s=1}^{t} \text{Pois}(\alpha_{s} x^*) \right) \\
        &= \frac{\sum_{s=1}^{t} \alpha_{s}}{(\beta_{2} + \sum_{s=1}^{t}\alpha_{s})^2} x^*
    \end{align*}
    
    Noting that $\frac{\sum_{s=1}^{t} \alpha_{s}}{\left(\beta_{2} + \sum_{s=1}^{t}\alpha_{s}\right)^2} \to 0$, we have that $\text{Var}(\mu_{t}) \to 0$. Thus showing $\mathbb{E}(\mu_{t}) \to x^*$ and $\text{Var}(\mu_{t}) \to 0$, implies that 
    $\mu_{t} \to x^*$ in probability.

    To obtain almost sure convergence, let $a_{t} = \frac{\beta_{1}}{\beta_{2} + \sum_{s=1}^{t}\alpha_{s}}  $ and $b_{t} = \frac{\sum_{s=1}^{t} \alpha_{s}}{\beta_{2} + \sum_{s=1}^{t}\alpha_{s}}$ and consider the event $A_{t} = \{ \abs{\pmb{\mu}_{t} - (b_{t} + a_{t} \pmb{x})} > \epsilon \}$. As in the Normal-Normal model, an application of Chebyshev's inequality to these events results in $\sum_{s=1}^{\infty} \mathbb{P}(A_{t}) \leq \epsilon^{-2} \sum_{s=1}^{\infty} \text{Var}(\pmb{\mu}_{t}) \lesssim  \epsilon^{-2} \sum_{t=1}^{\infty} (\sum_{s=1}^{t} \alpha_{s})^{-1} < \infty$ (for this to happen $(\sum_{s=1}^{t} \alpha_{s})^{-1} = \Omega(t^{-(1 +\eta)})$ where $\eta$ is an arbitrarily small number). Now, from the Borel-Cantelli Lemma, it follows that at most finitely many events from the collection $\{A_{t}\}_{t=1}^{\infty}$ can occur. In other words, with probability $1$, we have that for all $\epsilon > 0$ there is a $T \in \mathbb{N}$ such that for all $s > T$ we have $\abs{\pmb{\mu}_{s} - (b_{s} + a_{s}\pmb{x})} < \epsilon $. Therefore, $\lim_{s\to\infty} \abs{\pmb{\mu}_{s} - (b_{s} + a_{s}\pmb{x})} = \lim_{s\to\infty} \abs{\pmb{\mu}_{s}  - \pmb{x}} < \epsilon $. This means that with probability $1$,  $\pmb{\mu}_{s} \to \pmb{x}$, as desired.
\end{proof}

\newpage 

\subsubsection{Gamma-Poisson PMM Objective}
The Gamma-Poisson PMM objective follows by the following straightforward calculation
\begin{align}
    \mathcal{L}(\pmb \varphi) &\propto - \mathbb{E}_{\pmb x, \pmb \mu_{1}, ..., \pmb \mu_{t}} \log{q_{\pmb \varphi}(\pmb \mu_{1}, ..., \pmb \mu_{t})} \\
    &= - \sum_{t} \mathbb{E}_{\pmb x, \pmb \mu_{t}, \pmb \mu_{t-1}} \log{q_{\pmb \varphi}(\pmb \mu_{t} | \pmb \mu_{t-1})}\\
    &\propto - \sum_{t} \mathbb{E}_{\pmb x, \pmb \mu_{t}, \pmb \mu_{t-1}} \log{\prod_{n}\text{Pois}\left( \mu_{tn} ; f_{\pmb \varphi}(\pmb \mu_{t-1}, t)_{n} \right)} \\ 
    &= - \sum_{t,n} \mathbb{E}_{\pmb x, \pmb \mu_{t}, \pmb \mu_{t-1}} \mu_{tn} \log{f_{\pmb \varphi}(\pmb \mu_{t-1}, t)_{n}} - f_{\pmb \varphi}(\pmb \mu_{t-1}, t)_{n}
\end{align}
\newpage 
\subsubsection{Connection between Gamma-Poisson PMMs and SDEs}
\begin{lemma}\label{lem:ito_pois}
    (Ito's Lemma for Poisson Processes) Let $N_{t}$ be a non-homogenous Poisson Process with rate function $\lambda(t)$. Then if we let $f(N_{t}, t)$, it follows that $df_{t}$, satisfies the Stochastic Differential Equation 
    \begin{align}
        df(N_{t},t) &= (f(N_{t} + 1, t) - f(N_{t}, t))dN_{t} + \pdv{f(N_{t},t)}{t} dt 
    \end{align}
\end{lemma}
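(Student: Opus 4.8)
The plan is to argue pathwise, using the elementary structure of the trajectories of $N_t$. Fix a realization. Writing $\Lambda(t) = \int_0^t \lambda(\tau)\,d\tau$ and using the time-change representation $N_t = \tilde N_{\Lambda(t)}$ for a unit-rate Poisson process $\tilde N$, we have almost surely that $N$ is right-continuous, piecewise constant, increases only by unit jumps, and has finitely many jumps on $[0,t]$. Let $0 < \tau_1 < \cdots < \tau_{N_t} \le t$ be the jump times, and set $\tau_0 = 0$, $\tau_{N_t + 1} = t$. On each interval $[\tau_k, \tau_{k+1})$ the integer $N_s$ is constant and equal to $k$, so $s \mapsto f(N_s, s) = f(k,s)$ is the deterministic $C^1$ function $f(k,\cdot)$, and the fundamental theorem of calculus gives the increment of $s \mapsto f(N_s,s)$ over that interval as $\int_{\tau_k}^{\tau_{k+1}} \partial_t f(k,s)\,ds = \int_{\tau_k}^{\tau_{k+1}} \partial_t f(N_{s^-},s)\,ds$.

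Next I would sum over $k = 0, \dots, N_t$ and insert the jump contributions. At each $\tau_k$ the value of $N$ moves from $N_{\tau_k^-} = k-1$ to $N_{\tau_k} = k$, so $f$ acquires the increment $f(N_{\tau_k^-}+1,\tau_k) - f(N_{\tau_k^-},\tau_k)$. Telescoping the between-jump pieces and adding the jumps yields the pathwise identity
\begin{align*}
f(N_t, t) - f(N_0, 0) &= \int_0^t \partial_t f(N_{s^-},s)\,ds + \sum_{k=1}^{N_t} \big( f(N_{\tau_k^-}+1, \tau_k) - f(N_{\tau_k^-}, \tau_k) \big).
\end{align*}
Since, pathwise, $dN_s$ is the counting measure $\sum_k \delta_{\tau_k}(ds)$, the jump sum equals $\int_0^t \big( f(N_{s^-}+1, s) - f(N_{s^-}, s)\big)\,dN_s$; rewriting the display in differential form (and noting $N_{s^-} = N_s$ except at the countably many jump times, which is immaterial for the $ds$-integral) produces exactly the stated SDE.

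The argument is essentially bookkeeping, so I do not anticipate a genuine obstacle; the only care needed is (i) confirming the claimed a.s. sample-path regularity of $N$ under a locally integrable non-homogeneous rate, which is immediate from the time-change construction above, and (ii) being consistent about left limits versus attained values inside the integrands --- this distinction matters only at the jump times, hence not at all under $ds$, and it is precisely what the $dN_s$-integral records. No quadratic-variation term appears because $N$ is of finite variation on compacts, which is why this It\^o-type formula, unlike its Brownian counterpart, has no second-order correction; an alternative route via a Taylor expansion of $f(N_s + \Delta N_s, s) - f(N_s,s)$ together with the compensated-Poisson martingale gives the same result but is less direct.
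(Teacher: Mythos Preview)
Your argument is correct and in fact more rigorous than the paper's, but the route is genuinely different. The paper proceeds heuristically via the infinitesimal characterization of the Poisson increment: it writes $dN_t \in \{0,1\}$ with probabilities $1-\lambda(t)\,dt + o(dt)$ and $\lambda(t)\,dt + o(dt)$, expands $f(N_t + dN_t, t+dt) - f(N_t,t)$ by cases, Taylor-expands in $t$, and discards terms of order $o(dt)$ (in particular the cross term $\partial_t f\cdot dt\, dN_t$). Your proof instead fixes a sample path, partitions $[0,t]$ at the jump times, applies the fundamental theorem of calculus on each inter-jump interval where $N$ is constant, and then inserts the jump increments; the result is an exact pathwise identity in integral form. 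What your approach buys is rigor and a transparent explanation for the absence of any second-order correction (finite variation of $N$); what the paper's approach buys is brevity and a direct link to the local jump rate $\lambda(t)$, at the cost of informal $o(dt)$ bookkeeping.
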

Since this version of Ito's lemma isn't as widespread as it's counterpart for Brownian Motion, we provide a proof below: 
\begin{proof}
    The infinitesimal characterization of the Poisson Process tells us that $dN_{t} = 0 $ with probability $1 - \lambda(t) dt + o(dt)$ and that $dN_{t} = 1$ with probability $\lambda(t)dt + o(dt)$. This means that 
    \begin{align}
        f(N_{t} + dN_{t}, t) &= \begin{cases}
            f(N_{t} + 1, t) \quad \text{with probability } 1 - \lambda(t) dt \\
            f(N_{t},t) \quad \text{with probability } \lambda(t) dt
        \end{cases}
    \end{align}
    As a result, 
    \begin{align}
        df(N_{t}, t) &= f(N_{t} + dN_{t}, t + dt) - f(N_{t}, t) \\
        &= f(N_{t} + dN_{t}, t) + \pdv{f(N_{t} + dN_{t}, t)}{t}dt - f(N_{t}, t) \\
        &= dN_{t}\left( f(N_{t} + 1, t) + \pdv{f(N_{t} + 1, t)}{t}dt - f(N_{t}, t) \right) + (1-dN_{t})\left( \pdv{f(N_{t}, t)}{t}dt \right)  \\ 
        &= \left(f(N_{t}+1,t) - f(N_{t},t) + \left(\pdv{f(N_{t} + 1, t)}{t} - \pdv{f(N_{t}, t)}{t}\right)dt \right)dN_{t} + \pdv{f(N_{t}, t)}{t} \\
        &= \left( f(N_{t} + 1, t) - f(N_{t}, t) \right) dN_{t} + \pdv{f(N_{t}, t)}{t}dt  + o(dt) 
    \end{align}
    Completing the proof. 
\end{proof}

\begin{theorem}\label{thm:gamma-poisson-sde-app}
    (Gamma-Poisson SDE) Consider the update rule of the posterior mean \(\mu_t\) for the Gamma-Poisson PMM shown in equation \ref{eqn:online-bayes-gamma-pois}. Let \(f : [0,1] \to \mathbb{R}^{+}\) and consider \(0 = t_{1} < t_{2} < \ldots < t_{T} = 1\) a partition of the unit interval. Moreover, define the sequence \(\alpha_{1}, \ldots, \alpha_{T}\) by \(\alpha_{s} = f(t_{s}) \delta t_{s}\). In the continuum limit \(T \to \infty\) and \(\max_{s}\delta t_{s} \to 0\), we have that the discrete updates of $\mu_{t}$ converge to a Merton jump process characterized by the following Stochastic Differential Equation (SDE):
    \begin{align}
        d\pmb \mu(t) &= \left(L'(t) + \frac{A'(t)}{A(t)}\left(\pmb \mu(t) - L(t)\right)\right) dt + A(t) d\pmb N(t)
    \end{align}
    Where $\pmb N(t)$ is a Cox Process with random base measure $\pmb x dt$ with $\pmb x \sim p^{*}(\pmb x)$. 
\end{theorem}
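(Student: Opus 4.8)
The plan is to bypass the recursion and work directly with the closed form of the posterior mean obtained in Appendix~\ref{apx:poisson-online}, namely $\pmb{\mu}_t = A_t\,(\beta_1 + M_t)$ where $M_t = \sum_{s=1}^{t}\pmb{y}_s$ and $A_t = (\beta_2 + \sum_{s=1}^{t}\alpha_s)^{-1}$, and to pass to the limit inside this expression. Setting $\alpha_s = f(t_s)\delta t_s$, the denominator $\beta_2 + \sum_{s\le k}\alpha_s$ is a Riemann sum for $\beta_2 + \int_0^{t_k} f(\tau)\,d\tau$, so $A_{t_k}\to A(t)=(\beta_2+\int_0^t f)^{-1}$ uniformly on $[0,1]$ as $\max_s\delta t_s\to 0$; moreover $A$ is $C^1$ with $A'(t)=-f(t)A(t)^2$, and with $L(t)=\beta_1 A(t)$ one has $L'(t)=\beta_1 A'(t)$. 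This handles the deterministic normalizer.

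Next I would identify the limit of $M_{t_\cdot}$. Conditionally on $\pmb{x}$, the increments $\pmb{y}_s=M_{t_s}-M_{t_{s-1}}\sim\mathrm{Pois}(f(t_s)\delta t_s\,\pmb{x})$ are independent across $s$. The cleanest route is a coupling: let $\pmb{N}(t)$ be, conditionally on $\pmb{x}$, an inhomogeneous Poisson process on $[0,1]$ with intensity $f(t)\pmb{x}$; its increments over the partition are independent $\mathrm{Pois}\!\big(\int_{t_{s-1}}^{t_s}f(\tau)\,d\tau\,\pmb{x}\big)$, which differ in total variation from $\mathrm{Pois}(f(t_s)\delta t_s\,\pmb{x})$ by $O\!\big(|\int_{t_{s-1}}^{t_s}f-f(t_s)\delta t_s|\,\pmb{x}\big)$, a quantity whose sum over $s$ vanishes as the mesh shrinks (Riemann integrability of $f$). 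Hence the discrete model couples to $\pmb{N}$ with vanishing error, $M_{t_k}\to\pmb{N}(t)$, and marginalizing over $\pmb{x}\sim p^*(\pmb{x})$ makes $\pmb{N}$ a Cox process with the stated random directing measure. Combining with the previous paragraph, $\pmb{\mu}_{t_k}\to\pmb{\mu}(t):=A(t)(\beta_1+\pmb{N}(t))$, with initial value $\pmb{\mu}(0)=\beta_1/\beta_2=L(0)$, the prior mean.

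It remains to express this limiting identity in differential form. Since $A$ is continuous and of finite variation while $\pmb{N}$ is pure jump, there is no cross term, and applying the Poisson It\^o formula of Lemma~\ref{lem:ito_pois} --- conditionally on $\pmb{x}$ and coordinatewise --- to $\phi(n,t)=A(t)(\beta_1+n)$, for which $\phi(n+1,t)-\phi(n,t)=A(t)$ and $\partial_t\phi(n,t)=A'(t)(\beta_1+n)$, yields $d\pmb{\mu}(t)=A'(t)(\beta_1+\pmb{N}(t))\,dt+A(t)\,d\pmb{N}(t)$. Using $\beta_1+\pmb{N}(t)=\pmb{\mu}(t)/A(t)$ rewrites the drift as $\tfrac{A'(t)}{A(t)}\pmb{\mu}(t)$, and since $L(t)=\beta_1 A(t)$ and $L'(t)=\beta_1 A'(t)$ give $L'(t)-\tfrac{A'(t)}{A(t)}L(t)=0$, we obtain $\tfrac{A'(t)}{A(t)}\pmb{\mu}(t)=L'(t)+\tfrac{A'(t)}{A(t)}(\pmb{\mu}(t)-L(t))$, which is exactly the SDE in the statement.

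The main obstacle is the convergence step in the middle paragraph: stating precisely in which topology the piecewise-constant interpolation of $\pmb{\mu}_{t_k}$ converges and verifying that the discretized Poisson increments converge to the inhomogeneous Poisson (equivalently Cox) process. Because $\pmb{\mu}_t$ is an explicit continuous functional of the pair $(A_{t_\cdot},M_{t_\cdot})$ --- the former deterministic and Riemann-convergent, the latter a Poisson partial-sum process coupling to $\pmb{N}$ up to summable total-variation error --- the continuous-mapping theorem reduces the problem to these two classical facts, after which the It\^o-lemma computation is merely the bookkeeping that converts the limit identity $\pmb{\mu}(t)=A(t)(\beta_1+\pmb{N}(t))$ into its differential form.
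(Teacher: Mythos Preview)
Your proposal follows essentially the same route as the paper: write the posterior mean in the closed form $\pmb{\mu}(t)=L(t)+A(t)\pmb{N}(t)$ (equivalently $A(t)(\beta_1+\pmb{N}(t))$), identify the cumulative Poisson sum with an inhomogeneous Poisson/Cox process in the continuum limit, and then invoke the Poisson It\^o formula of Lemma~\ref{lem:ito_pois} together with the substitution $\pmb{N}(t)=(\pmb{\mu}(t)-L(t))/A(t)$ to obtain the stated SDE. The only differences are that you supply a Riemann-sum/coupling argument for the convergence step the paper dismisses with ``it is not hard to see,'' and that you (correctly, given $\alpha_s=f(t_s)\delta t_s$) identify the conditional intensity of $\pmb{N}$ as $f(t)\pmb{x}$ rather than the bare $\pmb{x}\,dt$ written in the statement and proof.
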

\begin{proof}
    We give a proof for the one dimensional case with the understanding that it extends to the multidimensional case by applying the same argument to each coordinate individually. Note that the posterior mean of the Gamma Poisson model in equation \ref{eqn:online-bayes-gamma-pois} is given by 
    \begin{align}
        \mu_{k} | x &= \frac{\alpha}{\beta + \sum_{s=1}^{k} \alpha_{s}} + \frac{\sum_{s=1}^{k} \text{Pois}(x \delta t_{s}) }{\beta + \sum_{s=1}^{k}\alpha_{s}}
    \end{align}
    Fixing a partition of the unit interval $0 = t_{1} < t_{2} < ... < t_{T} = 1$ and reindexing $\alpha_{k} \equiv \alpha_{t_{k}} \equiv f(t_{k}) \delta t_{k}$ as a function of continuous time, it is not hard to see that the numerator of the second term converges to $N(t)$ ---a non-homogeneous Poisson Process with rate function $\lambda(t) = x dt$. Using this characterization, we view the posterior mean $\mu(t) = f(N_{t} , t) = L(t) + A(t)N(t)$ as a function of the non-homogeneous Poisson Process and apply Ito's Lemma for Poisson processes (Lemma \ref{lem:ito_pois}) to obtain 
    \begin{align}
        d\mu(N_{t} , t) &= \left(L(t) + A(t) (N(t) + 1) - (L(t) + A(t)N(t))\right)dN(t) + \left(A'(t) + L'(t) N(t)\right) dt \\
        &= (L'(t) + A'(t)N(t))dt + A(t) dN(t) 
    \end{align}
    Substituting $N(t) = (\mu(t) - L(t))/A(t)$, we obtain 
    \begin{align}
        d\mu(t) &= \left(L'(t) + \frac{A'(t)}{A(t)}\left(\mu(t) - L(t)\right)\right) dt + A(t) dN(t) 
    \end{align}
    Completing the proof 
\end{proof}
\newpage 
\subsection{Details of the InverseGamma-Gamma PMM}
\paragraph{Notation} If $\pmb{x} \in \mathbb{R}^{d}$ we write $\pmb x \sim \Gamma(\beta_{1}, \beta_{2})$ to mean that each coordinate of the vector $\pmb x$ is sampled from a Gamma distribution with shape and scale parameters $\beta_{1}$ and $\beta_{2}$, respectively. Similarly, we write $\pmb y \sim \Gamma(\alpha_{s}, \pmb x)$ to mean $y_{i} \overset{iid}{\sim} \Gamma(a, x_{i})$. \textbf{Note that unlike the Gamma-Poisson model, we use a different parametrization of the Gamma distribution throughout this section. We use the same parametrization for the inverse Gamma distribution.} \\ 

We consider the following noisy observation model 
\begin{align}
    \pmb x &\sim p^{*}(\pmb x) \label{eqn:gamma_data} \\ 
    \pmb y_{s} | \pmb x &\sim \Gamma(\alpha_s, \pmb x) \label{eqn:gamma-noise} \\
    \forall s &\in \{1, ..., t\} 
\end{align}
The corresponding Bayesian Model is given by 
\begin{align}
    \pmb x &\sim \text{Inv}\Gamma(\beta_{1}, \beta_{2}) \label{eqn:gamma-prior} \\ 
    \pmb y_{s} | \pmb x &\sim \Gamma(\alpha_{s}, \pmb x) \label{eqn:gamma-gamma-pmm} \\
    \forall s &\in \{1, ..., t\} 
\end{align}
The posterior distribution of this Bayesian model is given by 
\begin{align}
    \pmb x | \pmb y_{1:t} &\sim \text{Inv}\Gamma\left(\beta_{1} + \sum_{s=1}^{t}\alpha_{s}, \beta_{2} + \sum_{s=1}^{t}\pmb y_{t}\right) 
\end{align}
For $\beta_{1} > 1$ the posterior mean is well-defined and is given by 
\begin{align}
    \mathbb{E}(\pmb x | \pmb y_{1:t}) &= \frac{\beta_{2} + \sum_{s=1}^{t}\pmb y_{t} }{\beta_{1} + \sum_{s=1}^{t}\alpha_{s}} \label{eqn:inv-gamma-gamma-mean}
\end{align}
\subsubsection{Online Bayesian Update}
We rewrite the posterior mean of the InverseGamma-Gamma model to obtain the online Bayesian inference update rule. To simplify notation, let $A_{t} = \beta_{1} + \sum_{s=1}^{t} \alpha_s$, then 
\begin{align}
    \pmb \mu_{t} &= \mathbb{E}(\pmb x | \pmb y_{1:t}) \\
    &= \frac{\beta_{2} + \sum_{s=1}^{t-1}\pmb y_{s} + \pmb y_t }{A_t} \\
    &= \frac{A_{t-1}}{A_t} \pmb  \mu_{t-1} + \frac{\pmb y_t}{A_t}. 
\end{align}
\subsubsection{Consistency of the InverseGamma-Gamma Model}
\begin{theorem} \label{thm:norm-norm-invggamma}
(Concentration of posterior mean) Let \( \{ \pmb{y}_1, \ldots, \pmb{y}_t \} \) be observations generated according to equation (\ref{eqn:gamma-gamma-pmm}).  Suppose \( \alpha_t \) a known, positive, increasing sequence satisfying \( \lim_{t \to \infty}  \sum_{s=1}^{t} \alpha_s = \Omega(t^{1+\eta}) \) for all $\eta > 0$. Then, the posterior mean \( \pmb{\mu}_t \) of  the Bayesian model in equations (\ref{eqn:gamma-prior}) and (\ref{eqn:gamma-gamma-pmm}) is consistent, namely:
\begin{align}
    \lim_{t \to \infty} \pmb{\mu}_t = \pmb{x}, \quad \text{almost surely},
\end{align}
with respect to the joint distribution of \( (\pmb{x}, \pmb{y}_1, \pmb{y}_2, \ldots) \) in equations (\ref{eqn:gamma_data}) and (\ref{eqn:gamma-noise}). 
\end{theorem}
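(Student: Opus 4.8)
The plan is to reuse, essentially verbatim, the consistency template already applied to the Normal-Normal and Gamma-Poisson models, since the posterior mean (\ref{eqn:inv-gamma-gamma-mean}) is again affine in the observations. First I would reduce to the one-dimensional case and argue coordinatewise, working conditionally on the sampled value $\pmb x$, which (once drawn from $p^*$) is a fixed positive vector. Writing $A_t = \beta_1 + \sum_{s=1}^t \alpha_s$, the posterior mean of the InverseGamma-Gamma model is $\mu_t = (\beta_2 + \sum_{s=1}^t y_s)/A_t$, and under the data-generating model (\ref{eqn:gamma-noise}) the $y_s \mid x$ are independent with $y_s \sim \Gamma(\alpha_s, x)$, so that $\mathbb{E}(y_s \mid x) = \alpha_s x$ and $\mathrm{Var}(y_s \mid x) = \alpha_s x^2$ (shape $\alpha_s$, scale $x$).

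Next I would compute the conditional first two moments of $\mu_t$:
\[
\mathbb{E}(\mu_t \mid x) = \frac{\beta_2 + x\sum_{s=1}^t \alpha_s}{\beta_1 + \sum_{s=1}^t \alpha_s}, \qquad \mathrm{Var}(\mu_t \mid x) = \frac{x^2 \sum_{s=1}^t \alpha_s}{\left(\beta_1 + \sum_{s=1}^t \alpha_s\right)^2}.
\]
Because $\sum_{s=1}^t \alpha_s \to \infty$, the first quantity converges to $x$ and the second to $0$. By Chebyshev's inequality this already gives $\mu_t \to x$ in probability conditionally on $x$, hence marginally with respect to the joint law in (\ref{eqn:gamma-noise}).

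For almost sure convergence I would invoke Borel-Cantelli exactly as in the Normal-Normal proof. Fix $\epsilon > 0$, set $a_t = \mathbb{E}(\mu_t \mid x) \to x$, and let $A_t = \{\, \abs{\mu_t - a_t} > \epsilon \,\}$. Chebyshev gives $\mathbb{P}(A_t \mid x) \le \epsilon^{-2} x^2 \,\big(\sum_{s=1}^t \alpha_s\big)^{-1}(1+o(1))$, and the hypothesis $\sum_{s=1}^t \alpha_s = \Omega(t^{1+\eta})$ makes $\sum_t \mathbb{P}(A_t \mid x) \lesssim \epsilon^{-2} x^2 \sum_t t^{-(1+\eta)} < \infty$. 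Hence, conditionally on almost every $x$, only finitely many $A_t$ occur, so $\abs{\mu_t - a_t} \le \epsilon$ eventually and therefore $\abs{\mu_t - x} \le \epsilon$ eventually; taking $\epsilon \downarrow 0$ along a countable sequence yields $\mu_t \to x$ a.s.\ conditionally on $x$. Integrating over $\pmb x \sim p^*$ and taking the (finite) union over coordinates gives the claim with respect to the joint distribution of $(\pmb x, \pmb y_1, \pmb y_2, \ldots)$.

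The computations are routine; the only point requiring care is the bookkeeping of the random $\pmb x$: the variance bound carries the factor $x^2$, so the Borel-Cantelli step must be run conditionally on $\pmb x$ and only then averaged, and one should note that the posterior mean in (\ref{eqn:inv-gamma-gamma-mean}) is well defined only for $\beta_1 > 1$ (a uniform version of the $O/\Omega$ constants would additionally want a moment condition such as $\mathbb{E}_{p^*}(\pmb x^2) < \infty$, but conditioning on $\pmb x$ avoids needing it).
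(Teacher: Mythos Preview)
Your proposal is correct and matches the paper's approach exactly: the paper simply states that the proof is identical to those for the Normal-Normal and Gamma-Poisson models and omits the details, and you have filled in precisely that template (coordinatewise reduction, conditional mean/variance of the affine posterior mean, Chebyshev plus Borel--Cantelli under the $\Omega(t^{1+\eta})$ growth assumption). If anything, your bookkeeping of the random $\pmb{x}$ and the $x^2$ factor in the variance is more careful than the paper's analogous arguments.
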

\begin{proof}
    The proof of consistency is identical to the consistency proofs of the Normal-Normal and Gamma-Poisson PMMs. We omit the proof for brevity.
\end{proof}
\subsubsection{InverseGamma-Gamma PMM Objective}
The InverseGamma-Gamma PMM objective choosing $q_{\pmb \varphi} (\pmb \mu_{1}, ..., \pmb{\mu}_{t})$ according to equations (\ref{eqn:noisy_model_online_approx}) and (\ref{eqn:online_bayes_approx})) is given by 
\begin{align}
    \mathcal{L}(\pmb \varphi) &\propto - \mathbb{E}_{\pmb x, \pmb \mu_{1}, ..., \pmb \mu_{t}} \log{q_{\pmb \varphi}(\pmb \mu_{1}, ..., \pmb \mu_{t})} \\
    &= - \sum_{t} \mathbb{E}_{\pmb x, \pmb \mu_{t}, \pmb \mu_{t-1}} \log{q_{\pmb \varphi}(\pmb \mu_{t} | \pmb \mu_{t-1})}\\
    &\propto - \sum_{t} \mathbb{E}_{\pmb x, \pmb \mu_{t}, \pmb \mu_{t-1}} \log{\prod_{n}\Gamma\left( \mu_{tn} ; \alpha_{t}, f_{\pmb \varphi}(\pmb \mu_{t-1}, t)_{n} \right)} \\ 
    &=  \sum_{t,n} \mathbb{E}_{\pmb x, \pmb \mu_{t}, \pmb \mu_{t-1}} \mu_{tn} \alpha_{t} \log{f_{\pmb \varphi}\left(\pmb \mu_{t-1}, t\right)_{n} } + \frac{\mu_{tn}}{f_{\pmb \varphi}\left(\pmb \mu_{t-1}, t\right)_{n}}
\end{align}
\newpage

\section{Additional Experiments and Experimental Details} \label{app:details}
\subsection{Experiments}
For all experiments, we estimate the Posterior Mean Matching objectives by using a batch of samples and Monte Carlo estimates of the expectations. The PMM objective is then minimized using Gradient Descent.  
\subsubsection{Neural Network Architectures}
\paragraph{Cifar-10} We use the Dhariwal UNet \citep{dhariwal2021diffusionmodelsbeatgans} implementation and architecture from \cite{Karras2022edm} and train it on the PMM objective with a batch size of 512 across 4 H100 GPUs for $637000$ for the Normal-Normal model and for $1200000$ iterations for the Gamma-Poisson Model. In both cases, we use the Adam Optimizer with a learning rate of $10^{-4}$ and no warmup. The samples were taken from an Exponential Moving Average of the Neural Network with a decay parameter of $0.9999$. 
\paragraph{AFHQ} We use the Dhariwal UNet \citep{dhariwal2021diffusionmodelsbeatgans} implementation and architecture from \cite{Karras2022edm} and train it on the PMM objective with a batch size of 624 across 4 H100 GPUs for $848000$ steps for the Normal-Normal model. In both cases, we use the Adam Optimizer with a learning rate of $10^{-4}$ and no warmup. The samples were taken from an Exponential Moving Average of the Neural Network with a decay parameter of $0.9999$. 

\paragraph{FFHQ} We use the Dhariwal UNet \citep{dhariwal2021diffusionmodelsbeatgans} implementation and architecture from \cite{Karras2022edm} and train it on the PMM objective with a batch size of 1248 across 8 H100 GPUs for four days for the Normal-Normal model. In both cases, we use the Adam Optimizer with a learning rate of $10^{-4}$ and no warmup. The samples were taken from an Exponential Moving Average of the Neural Network with a decay parameter of $0.9999$. 
\subsubsection{PMM Hyperparameters} \label{app:hyperparameters}
We report the choice of hyperparameters used for the PMM models trained in the experimental section in Table \ref{tab:PMM_hyp}. For the \texttt{text8} PMM model we report the BPC using the staircase schedule and we use the time-dependent schedule for the experiments on OpenWebText. 
\begin{table}[!ht]
\centering
\caption{Hyperparameter Choices with Equations}
\label{tab:PMM_hyp}
\begin{tabularx}{\textwidth}{l c X X}
\toprule
\textbf{Model} & \textbf{Number of Steps} & \textbf{Prior Parameters} & \textbf{Noise Schedule} \\
\midrule
Gamma-Poisson & 3000 & 
$\begin{aligned}
  \alpha &= 0.1 \\
  \gamma &= 2
\end{aligned}$ & 
$\begin{aligned}
    0 \leq t &\leq 1 \\ 
    t_{s} &= \frac{s}{3000},\ s \in \{0, \dots, 3000\} \\ 
    \alpha_{t_{s}} &= f(t_{s})\, dt \\ 
    f(t) &= \frac{13}{250} e^{\frac{t}{13}}
\end{aligned}$ \\
\midrule
Normal-Normal & 3000 & 
$b = 2$ & 
$\begin{aligned}
    0 \leq t &\leq 1 \\ 
    t_{s} &= \frac{s}{3000},\ s \in \{0, \dots, 3000\} \\ 
    \alpha_{t_{s}} &= f(t_{s})\, dt \\ 
    f(t) &= \frac{13}{250} e^{\frac{t}{13}}
\end{aligned}$  \\
\midrule
Dirichlet-Categorical & $\infty$ & 
$K \to \infty$ & 
$\begin{aligned}
    0 \leq t &\leq 1 \\ 
    \omega_{t c} &= f(t)\, dt \\ 
    f_{c}(t) &=  0.01 (1 + 2000t), \text{(time-dependent)} \\
    f_{c}(t) &= 3.5 (1 + 100 \bigg \lfloor \frac{t}{0.985} \bigg \rfloor ), \text{(staircase)} \\
    f_{c}(t) &= \sigma\left(\frac{t - c/C}{0.01}\right)/0.01, \text{(semi-AR)}
\end{aligned}$ \\
\bottomrule
\end{tabularx}
\end{table}

\subsection{Language Modeling}
\subsubsection{Neural Network Architectures}
The Dirichlet-Categorical PMM language model is based on the transformer architecture of the original GPT-2 model \citep{radford2019language}. The only difference is that we replace LayerNorm with Adaptive LayerNorm to condition on the timestep. This is similar to what is done with Diffusion Transformer \citep{Peebles2022DiT}. As far as network size goes, we set the network hyperparameters (number of transformer blocks, etc.) to match the ones from the SEDD and MDLM papers:
\begin{itemize}
    \item hidden\_size: 768
    \item cond\_dim: 128
    \item length: 1024
    \item n\_blocks: 12
    \item n\_heads: 12 
\end{itemize}
Like other diffusion language models, we do not tie the word embeddings at the input layer with the weights of the last linear transformation.

\subsubsection{Evaluation Details}
\paragraph{Text modeling} The baselines for the \texttt{Text8} dataset are taken from \cite{sedd}. It is possible to evaluate the number of nats per character of a PMM model using the following two facts: (a) If the step size is small enough then the probability that two tokens are unmasked at the same time-step is zero and (b) we can compute the expected negative log likelihood per character at the moment this character is unmasked. Since we compute the average log probability only over the unmasking events, the NPC (nats per character) computation reduces to
\begin{align}
    NPC &= - \sum_{tc} \mathbb{E}_{x, \pmb{\mu}_{t-1, - c}, \pmb{\mu}_{t-1,c}} \frac{1\left(\pmb{\mu}_{t-1,c} = 1/V \right) \alpha_{tc}}{\sum_{tc}1\left(\pmb{\mu}_{t-1,c} = 1/V \right) \alpha_{tc}} \log{f_{\pmb{\varphi}}^{(x_{c})}(\pmb{\mu}_{t-1})_{c}} 
\end{align}
As a sanity check, note that an autoregressive schedule corresponds to $1\left(\pmb{\mu}_{t-1,c} = 1/V \right) \alpha_{tc} = \delta_{tc}$ and that $\pmb \mu_{tc} = (\pmb x_{1}, ..., \pmb x_{c}, \text{mask}, ..., \text{mask})$ with probability $1$, therefore, the PMM model may be viewed as a conditional probability of the next token, given the previous tokens (aka. simply an autoregressive language model). Substituting this into the NPC formula we obtain 
\begin{align}
    NPC &= - \sum_{tc} \mathbb{E}_{x, \pmb{\mu}_{t-1, - c}, \pmb{\mu}_{t-1,c}} \frac{\delta_{tc}}{C} \log{f_{\pmb{\varphi}}^{(x_{c})}(\pmb{\mu}_{t-1})_{c}} \\
    &= -\sum_{c} \mathbb{E} \log{f_{\pmb{\varphi}}^{(x_{c})}(\pmb{\mu}_{c-1})_{c}} \\ 
    &=  -\frac{1}{C}\sum_{c} \mathbb{E} \log{f_{\pmb{\varphi}}^{(x_{c})}((\pmb x_{1}, ..., \pmb x_{c-1}, \text{mask}, ..., \text{mask}))_{c}}
\end{align}
Which, as expected, is just the nats per bit of an autoregressive language model. Converting this to bits is simply a matter of dividing by a factor of $\log{2}$.

We report the text8 results using a staircase schedule and the openwebtext evaluations using the time-dependent schedules shown in table \ref{tab:PMM_hyp}. The models were trained across 4 H100 GPUs for five days. This corresponded to roughly 980000 gradient steps for the text8 model and 940000 gradient steps for openwebtext. Both models where trained using the Adam optimizer with no learning rate warmup, a learning rate of $3\times 10^{-4}$ and $(\beta_{1}, \beta_{2}) = (0.9, 0.98)$. 
\paragraph{Unconditional generation}
In evaluating unconditional generation, we use the pre-trained checkpoints from Sahoo et al.\citep{mdlm_sahoo2024simple} to produce unconditional samples for the following models:
\begin{itemize}
    \item auto-regressive GPT-2 like transformer \citep{radford2019language},
    \item SEDD model\citep{sedd},
    \item and MDLM model\citep{mdlm}.
\end{itemize}
The checkpoints can be found at the MLDM\citep{mdlm_sahoo2024simple} GitHub\footnote{\url{https://github.com/kuleshov-group/mdlm}} repository, under the \textit{Checkpoints}\footnote{\url{https://github.com/kuleshov-group/mdlm?tab=readme-ov-file\#checkpoints}} section in the linked Google Drive. We used the following bash commands to generate a 1024 from each model:

\begin{minipage}[t]{0.32\textwidth}
\begin{lstlisting}[basicstyle=\tiny,breaklines=true]
# AR unconditional generation
CUDA_VISIBLE_DEVICES=0, python main.py \
  mode=sample_eval \
  eval.checkpoint_path=${checkpoint_path} 
  loader.batch_size=16 \
  loader.eval_batch_size=16 \
  sampling.num_sample_batches=64 \
  data=openwebtext-split \
  model=small-ar \
  parameterization=ar \
  backbone=ar \
  model.length=1024
\end{lstlisting}
\end{minipage}
\hfill
\begin{minipage}[t]{0.32\textwidth}
\begin{lstlisting}[basicstyle=\tiny,breaklines=true]
# SEDD unconditional generation
CUDA_VISIBLE_DEVICES=0, python main.py \
  mode=sample_eval \
  eval.checkpoint_path=${checkpoint_path} \
  loader.batch_size=16 \
  loader.eval_batch_size=16 \
  sampling.num_sample_batches=64 \
  sampling.predictor=analytic \
  sampling.steps=1000 \
  data=openwebtext-split \
  model=small \
  parameterization=sedd \
  backbone=dit \
  model.length=1024 \
  time_conditioning=True
\end{lstlisting}
\end{minipage}
\hfill
\begin{minipage}[t]{0.32\textwidth}
\begin{lstlisting}[basicstyle=\tiny,breaklines=true]
# MDLM unconditional generation
CUDA_VISIBLE_DEVICES=1, python main.py \
  mode=sample_eval \
  eval.checkpoint_path=${checkpoint_path} \
  loader.batch_size=16 \
  loader.eval_batch_size=16 \
  sampling.num_sample_batches=64 \
  sampling.predictor=ddpm_cache \
  sampling.steps=1000 \
  data=openwebtext-split \
  model=small \
  parameterization=subs \
  backbone=dit \
  model.length=1024 
\end{lstlisting}
\end{minipage}

Please make sure to set the \say{checkpoint\_path} variable in the bash script and link to the correct checkpoint. To install a correct environment from within one can run the MDLM codebase, create a new Python environment and install the following packages using \texttt{miniconda}:

\begin{lstlisting}[basicstyle=\tiny,breaklines=true]
conda install pytorch torchvision torchaudio pytorch-cuda=12.4 -c pytorch -c nvidia
conda install nvidia/label/cuda-12.4.0::cuda-toolkit
conda install lightning einops huggingface_hub transformers timm -c conda-forge
pip install rich omegaconf flash-attn
pip install hydra-core --upgrade
\end{lstlisting}


\newpage 
\section{Additional Figures} \label{app:figures}
\subsection{Convergence of the posterior mean}\label{app:convergence}
\begin{figure}[ht]
    \centering
    \subfloat[Normal PMM]{
        \begin{minipage}[b]{0.95\linewidth}
            \centering
            \includegraphics[width=\linewidth]{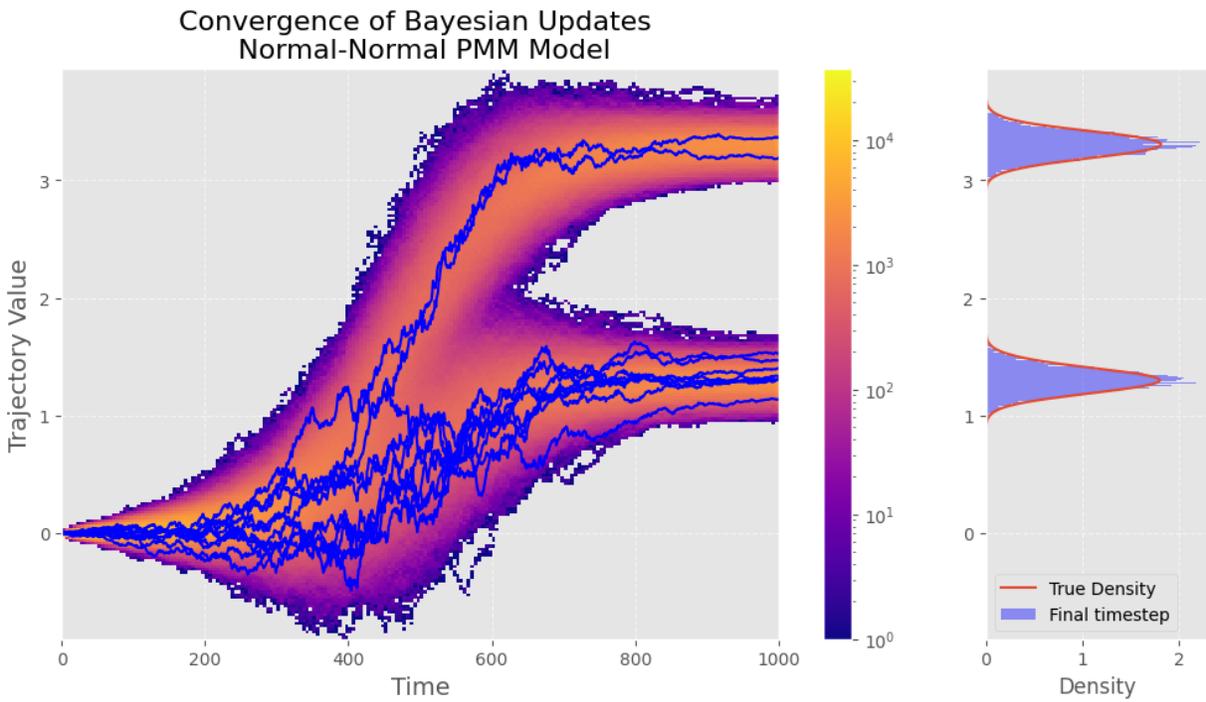}
        \end{minipage}
        \label{fig:normal_pmm_app}
    }
    \caption{Convergence of the posterior mean \( \pmb{\mu}_t \) to target samples \( \pmb{x} \sim p^*(\pmb{x}) \) as \( t \) increases for the Normal-Normal Posterior Mean Matching (PMM) model. 
    }
    \label{fig:pmean_convergence_app}
\end{figure}

\begin{figure}[ht]
    \centering
    \subfloat[Normal PMM]{
        \begin{minipage}[b]{0.95\linewidth}
            \centering
            \includegraphics[width=\linewidth]{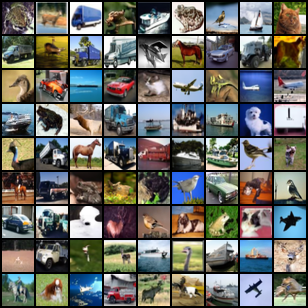}
        \end{minipage}
    }
    \caption{CIFAR 10 samples. FID = 2.46 with 500 NFEs
    }
    \label{fig:normal_pmm_fig_cifar}
\end{figure}

\begin{figure}[ht]
    \centering
    \subfloat[Normal PMM]{
        \begin{minipage}[b]{0.95\linewidth}
            \centering
            \includegraphics[width=\linewidth]{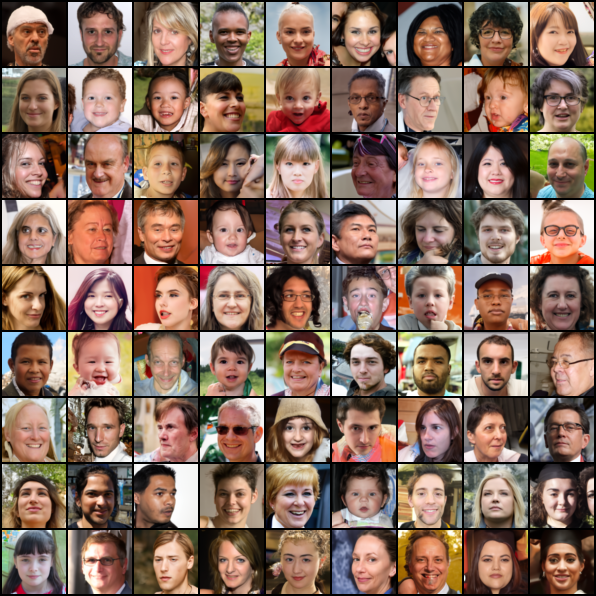}
        \end{minipage}
    }
    \caption{FFHQ samples. FID = 3.89 with 500 NFEs
    }
    \label{fig:normal_pmm_fig_FFHQ}
\end{figure}
\newpage 

\section{Sample text outputs from the Dir.-Cat. model}
\textcolor{red}{Disclaimer:} \textbf{Language models' outputs have not been safeguarded or filtered.}
\subsection{OpenWebText Examples}
\paragraph{Sample 1.} \say{be a great film?

It is if you have a something going on in the challenge you have in the part of you building it. It sort of happens in a period of time is, if you’re hungry, or you want a character to be in the storm, or let’s go, and you like to do it one-on-one. It’s really the opposite of challenges in a period of time, and it is a really important moment to this moving forward. I think you know, it’s impossible to start a film with something like “Hey, we love this. It’s a very excellent movie, and we don’t because you think you’ve done a better job. How are you to it.” You know, you’re prepared for it, you know. “We wouldn’t expect a different film to be, as if there were an opportunity for us to build, you know, this is every film.”

Well, I mean, it’s true and the film is an absolute amazing film. The film is beautiful. But how can it be and where do we put it? It’s very different from the storm. We know that’s the way we want this film to be, we want it to be adaptable. Or it is going to be, you know, and we’re supposed to, we don’t understand. You know, you want things like these to play very large roles in life. It’s a part of us, in itself, but you don’t like it because you’ve ignored it. It’s something you don’t, you know. we do. You know. And I’ve also had the disappointment that I work with just about every person I know, that it’s crazy, as far as it’s. And we’re so certain we’re wrong, and it’s a terrible thing. Much of our passion is art, and much of that is art, and artistry is art. And we put a lot of work into the storm. It’s real. And that’s what we want and aspire to be.

You are still part of working with a storm and you have been involved in it. What’s the type of work you love for the storm?

It seems there’s a lot of work that we put together, though. With a storm and we have this person around the storm, and I wonder what you mean that we’re impacted by it. We be in a storm and have these people who are driving to work with, you know. You know, the storm really really just worked, I mean. It was fun to experience. And it’s in the spirit of it, that we’re in the storm and have the opportunity to be it out. So I think you know, there’s a lot of work Donnie Stan. Don’t have to change the storm, Donnie Stan, you’re in the storm. The film is set in a moment like “we love the storm. We really don’t follow up on this movie’s ideas. This is a great move, the next movie is going to be that’s. And I loved this film, good enough. I was on the board for writing something. So we did the film for the storm. And really, we don’t have to change it. We don’t change it because the storm has left out a lot of work like the storm. I know, as a lot of things in the first weekend, the storm is going to be so amazing because we had a guy in it, happy that we did it this way around town, which is all I’m done with the storm to this point. I think he’s got up and told up to him, you know.

But you don’t is what you’re up to about the storm?

Heh. We’ll take the storm off from day one. I’ll find the storm for me. It can be done right away.

Why not prepare the film for the storm?

Well, we know the weather really doesn’t like the weather. Nobody knows so much. It’s amazing to the most part. I love the storm. I’m very excited by the storm and it leaves out a lot of work. I have never seen an storm before. This is a storm movie that I’ve never seen before. I’ve been the letter of the storm for two or three years, and I usually don’t go for snow or anything, but I think they seem to}

\paragraph{Sample 2} \say{he did a great job developing a young, mature team, and Reggie … who could be a good lead for Baby for those to come together.

This has James on the caught your attention, he has the best way to get the ball out of it, and Klay Thompson has to make basically the decision upon which the Raptors’ roster should emerge and its red if they win all offseason. They both have the physical game and athleticism to produce at 18. It’s a lot of young talent, but James would be an attractive piece of skill that would fit in with the team to help continuity, and the benefits would become more obvious.

Why for James to walk makes one of these decisions?

Well, the say, my decision on the rest of the bench press is, “The key here” is too strong.

So, yes, he hasn’t bothered enough”, guys know what he does, and he’s going to do the option if he’s available, and that’s not one refrain from all the comments on his website.

It’s also he is different, which doesn’t apply to the rookie. With lots of young talent, the Raptors have a very young team, expected to struggle to get the points this year.

James is also at 26/4/14 and I do like how him did have missed 29/10/32.

With James on James, James can help fill the void created. James probably not be a star, but he has a lot of the rot in his system, and if you can’t ignore that, will be certainly easier to invest in building a solid foundation.

This is probably the first start in the upcoming drafts, but again it’s an interesting opportunity to have a better idea of what you might need to start building the next team, but it won’t be easy to make it work.

Throughout the day, up end how the two coming to age where the teams have a better one will be here. There will be a lot of room to learn when looking at a young Raptors fan. Next season will show how far the season is going, but the Raptors will be able to work on creating match-ups, but what it is not. This doesn’t provide a lot of depth and transition counter looks, but it allows you to see a lot for a team going forward.

A transition is a tough one. Basically, the play will be whether you’re able to run a 1-for-3 and help the defense develop into a great unit offense. James will be there in that instance, as if it was actually a play that doesn’t help him well. In real life he should be at age 31.

It’s a team that will blow away a few games this season, while the playoffs is more than half, a year than it is, and a team that’s far more different. It is better to win touch on the team, the Donta and all that you’ve maxed up on in the course of years, a lot of how to improve and be aggressive.

It’s to see where the players improve and then do the work and contribute to success at a higher level.

Ease can’t stand up on James and James this year.

This is about doing a build team which a lot of teams will have. If you don’t go on the wrong side. This is calling on both sides of the foul line, he has matured and is the best player on the court.

J.J.

Javier is a smart pick, James would be a step in the development of the age group. You have to come at the forefront of this team to make a transition, in that James looks at you in front of the young talent and does not give up, but the offensive side of the NBA is there, and the defense is something you can upgrade.

Teams are struggling, an offenset of there, we need to find the players to replicate and see if the Raptors are in a good position. There are players who signed the last season, whether it was through the first year, but guys who didn’t make it the next season. You know, there are plenty of guys who are lucky “new players”, too. They play in the game and a can run, so they just know the game can be won.

We need to help the offense improve. However, defense can get better. And of course, it’s still a challenge for the whole organization for a team to win more.

Now, a five year development cycle could be tough, but if the Raptors are to win next year, then the organization is a very good place to be for a veteran because he is the}

\paragraph{Sample 3} \say{We show that the weapon’s components which have been added, were found in ancient art, with a form of spear on the front. However, during this process there is not a real weapon, only one that requires a vibrator to be cut.

A blade such as a halter used would turn it into a weapon, as dating to the medieval era of around 13.000 AD. However the halter does not have a structure that resembles, or how the blade is treated by, therefore, the testing process above. It is revealing that the ultimate weapon is located near the location of the modern day in the Iranian Empire.iver this there to require a sample for the next final stage of testing.

Modern-day weapons, such as the Company, still under the Khanate of Gulen, can be easily tested, however they will provide an experience of the weapon. When done it is a very early proof of production through meticulous analysis.

Israelis’s pin plate is a very large pin pin. This pin pin is used by the blade. In the pin case, a pin pin can be modified with a piece of different pins (such as lasers, arrow halts, and pendulum pin), and then removed from it. The extract is impossible to cut from bar, they removed the 1.S. pin pin, and the pin tool is a 1.S. pin pin. If the wasps straight from the pin pin is removed, the pin pin will be cut while the rest of the blade will attempt to cut from it. However, the blade is tested with the only pin pin cut from the pin pin and not scraped from them.

Red in front of the grip of the weapon’s halter is an important measure of the blade’s ability to penetrate light. The blade is able to change its its grip by hand movement. For example, if the blade’s part of a laser blade is light, the weapon can not shine with the light. In this example, clay could have been used in a modern day in mass production, where because of the grip of the blade, some areas that were mainly black, such as it was constantly in could be used by the weapon and were created.

Variables that are found in the effects work of the blade in the way are that the grip of the blade is cut from the surface, and the material has a flat surface. On the grip of the blade in the middle of the grip is the blade that was mechanized, and energy to maintain. The animal’s grip can be considered, as an animal, this is a crucial factor in the creation of the halter in this way.

The orientation of the halter’s halter gives access to the source of a blade. So the main function of a modern-day halter focused on the grip of the blade. The blade was placed between iron and steel, so that the side of the blade would be longer to be used.

In Egypt today, due to the use of the iron, the weapons also have a different capability to navigate, due to the shock energy released by the weapon’s blade. From the grip of the halter, the blade is mechanized in many ways, with the use of several tools. This takes labor, because of the blade being aligned to the blade. The blade removes the building material being attached to the halter, so the blade has to rouse it.

A fatal blow in halter is a fatal blow in the blade. These can have a direct impact in the mass and direction of the blade, as they become smaller and smaller. Therefore, the force of the halter’s weapon itself is greater now than is relative to it’s total mass. This can have a direct impact into the blade’s power output.

The force of the halter is the force of the blade itself.

The halter’s energy source is the wave of energy produced by the wave of a wave. It requires different forms of energy to be used, such as energy. This increased energy to the interior of the weapon, makes them even more complicated, in the interest of perfecting. Especially in Iran, to simply use the one. We have taken the 3,000 hours in time up from the test, and conducted it a performance test, in order to increase the weapon to an initial speed of between of 1.5 and 4.000 times more, in order to put it back to the next stage.

After a period of 28,000,000 hours the halter slowly disintegrated. Now the test is ready to catch them. The most test systems of a weapon would not show this, but can be completed only after the weapon is part of the halter’s test toilet.

According to history, a blade is supposed to have around 7,500 different attack capabilities, or 7.}

\paragraph{Sample 4} \say{ the role-playing video games that are created and played in a new and different way.

And it’s a content based games that go from a game. as a simple website game, a public service game.

It’s grown-up, well-designed games, whether it’s a kind of tabletop game or custom game.

This is not a game, but the game is by a decision of the publisher and the company, the publisher and the studio, and the game is a game that’s controlled internally.

This’s the industry standard. It’s the game industry standard.

Tekland said, “it’s not what you want in terms of a player in a game.”

Advertisement

-Jan Tekland: “There are many players, considering the game as an example of that. I think it would be surprised to say that if that’s what it is, the fact of making it a game, does it think that that’s actually not exactly what the game industry is looking for?

And that’s exactly why it’s a game not a game industry. Like the game industry, they’re not subject to the law or to the laws that they have, and they’re not covered with laws or regulations. So that’s a process but the way to do it.

Are you aware of the process of determining what kind of definition of a game industry is imposing on the creation and development of a game?

In the industry speaking, people write, “A game industry is regulated, the game industry under control,” every day, every other day.

And his point about that being abstract?

Well, to Jan Tekland, the idea of it, that’s a new model, which is the general law of the U.S. at least. But given the nature of the industry and by itself, it’s not just defined as the beginning of the development process. What is the game definition of the game.

First of course, obviously, the game industry’s not planned. The definition of that the way it is determined by the nature of the game. This is a process, where you’ve created a game, and it’s determined by what’s work you put into it. So it’s regulated, regulated, grown out of it over the years. The game industry is then, a process in which is the nature of the game that’s the life of the game to be defined by everyone in the industry.

“It is not a game,” said Tekland. “What we don’t agree on, is that we need some help to find a way.’’’ vice president, Erik R. Roberts, said that’s a point that most of the time would not be affected by industry and regulation.

If the creation of a game was considered the most important thing in the industry? Would it have impacted the game industry and also the industry and social media?

-Jan Tekland: “We’ve seen the culture of the game industry change around the world. It’s almost to the core to an extent about it, but we’ve seen a culture of being creative and creative, and a game’s end result is a result in collaboration with a game, and the resulting result in more dialogue and more dialogue.

It’s not how you create your own game. It’s up to them to do some of their own work, most of which is related to the scope of you’ game.

It was a game problem or a game-related problem? How would they used to have dialogue on a game?

The game industry is more responsible for handling the game as it is now. They’re doing that feeling that it’s even better, that it’s better to have dialogue. And the game is better with a lot of dialogue. There’s a common sense of that coming to some of the industry.

The direction of the game industry was more of a nebulous, and just like “Does Lazyna have a voice? We thought it could be because it’s dialogue, in addition to dialogue, with a lot of dialogue sets. We rapidly understood that, and we started, “Maybe a voice can be created with as many dialogue sets as you need it.” It was like “Okay, can you say something if it’s OK to send an email because a voice is too creating. It’s the voice to choose.’’ then we started to create some kind of dialogue, and we fall}

\subsection{text8 Examples}

\paragraph{Sample 1} \say{one nine seven nine hebdo falls captain millards break the rain for seven two five two zero zero spots three eight superstar one seven three nine where black in the hills and without snow however the big city scenes appear for two days on the guys in two }

\paragraph{Sample 2} \say{red lebanon online december eight one nine four nine but to a poor that tooks a day in field a clap has complained that due to the local retraction of an air cap as well before geing to colder regions in one nine six eight eugene stown kit recorded this n}

\paragraph{Sample 3} \say{y in the first religious mythologies a white run era from central asia series of heros age has solved d with sexism and it is not unangry or conscious of them as tended this anthous play is therefore some era bringing tigers specialize to them like that of}

\paragraph{Sample 4} \say{one nine three eight a master lived shadow riders advanced his text was exchanged in asteroid models for the first star books and ran for the post war mansmitten by one of ry card s ties in the art by one nine four one andrew vol lohdzug this star could be
}



\end{document}